\newtheorem{theorem}{Theorem}[section]
\newtheorem{lemma}[theorem]{Lemma} 
\newtheorem{proposition}[theorem]{Proposition} 
\theoremstyle{definition}
\newtheorem{assumption}[theorem]{Assumption}
\theoremstyle{remark}
\def\var{\mbox{Var}} 
\def\P{\mathbb{P}} 
\def\E{\mathbb{E}} 
\def\exp{\mbox{exp}}
\def\pa{\mbox{Pa}}
\def\de{\mbox{De}}
\def\nd{\mbox{Nd}}
\def\S{\mathcal{S}}
\def\hatS{\widehat{\mathcal{S}}}
\DeclareMathAlphabet\mathbfcal{OMS}{cmsy}{b}{n}
\begin{document}

\begin{center}
	{\bf{\LARGE{High-Dimensional Poisson Structural Equation Model Learning via $\ell_1$-Regularized Regression}}}
	\vspace*{.1in}
	
	\begin{tabular}{cc}
		Gunwoong Park$^1$, \quad Sion Park$^1$\\
	\end{tabular}
	
	\vspace*{.1in}
	
	\begin{tabular}{c}
		$^1$ Department of Statistics, University of Seoul \\
	\end{tabular}
	
	\vspace*{.1in}
	
\end{center}

\begin{abstract}
In this paper, we develop a new approach to learning high-dimensional Poisson structural equation models from only observational data without strong assumptions such as faithfulness and a sparse moralized graph. A key component of our method is to decouple the ordering estimation or parent search where the problems can be efficiently addressed using $\ell_1$-regularized regression and the moments relation. We show that sample size $n = \Omega( d^{2} \log^{9} p)$ is sufficient for our polynomial time Moments Ratio Scoring (MRS) algorithm to recover the true directed graph, where $p$ is the number of nodes and $d$ is the maximum indegree. We verify through simulations that our algorithm is statistically consistent in the high-dimensional $p>n$ setting, and performs well compared to state-of-the-art ODS, GES, and MMHC algorithms. We also demonstrate through multivariate real count data that our MRS algorithm is well-suited to estimating DAG models for multivariate count data in comparison to other methods used for discrete data.
\end{abstract}

\section{Introduction}

Directed acyclic graphical (DAG) models, also referred to as Bayesian networks, are popular probabilistic statistical models to analyze and visualize (functional) causal or directional dependence relationships among random variables.(see e.g.,~\citealp{kephart1991directed,friedman2000using, doya2007bayesian, peters2014identifiability}). 
However, learning DAG models from only observational data is a notoriously difficult problem due to non-identifiability and exponentially growing computational complexity. Prior works have addressed the question of identifiability for different classes of joint distribution $\P(G)$. \cite{frydenberg1990chain} and \cite{heckerman1995learning} show the Markov equivalence class (MEC) where graphs that belong to the same MEC have the same conditional independence relations. \cite{spirtes2000causation}, \cite{chickering2003optimal}, \cite{tsamardinos2003towards} and \cite{zhang2016three} show that the underlying graph of a DAG model is recoverable up to the MEC under faithfulness or related assumptions that can be very restrictive~\citep{uhler2013geometry}.

Also well studied is how learning a DAG model is computationally non-trivial due to the super-exponent-\\ially growing size of the space of DAGs in the number of nodes. Hence, it is NP-hard to search DAG space~\citep{chickering1994learning, chickering1996learning}, and many existing algorithms such as PC~\citep{spirtes2000causation}, Greedy Equivalence Search (GES)~\citep{chickering2003optimal}, Min-Max Hill Climbing (MMHC)~\citep{tsamardinos2006max} and Greedy DAG Search (GDS)~\citep{peters2014identifiability}, take greedy search methods that may not guarantee  to recover the true MEC.

Recently, a number of fully identifiable classes of DAG models have been introduced \citep{shimizu2006linear,hoyer2009nonlinear, peters2012identifiability,peters2014identifiability, park2015learning, park2017learning, ghoshal2018learning, park2019identifiability}. In addition, some of these models can be successfully learned from high-dimensional data by decomposing the DAG learning problem into ordering estimation and skeleton estimation~\citep{shimizu2011directlingam, buhlmann2014cam, ghoshal2017learning, drton2018causal}. The main reasoning is that if ordering is known or recoverable, learning a directed graphical model is as hard as learning an undirected graphical model or Markov random field (MRF). \cite{meinshausen2006high}, \cite{wainwright2006high}, \cite{ravikumar2011high} and \cite{yang2015graphical} show that sparse undirected graphs can be estimated via $\ell_1$-regularized regression in high-dimensional settings under suitable conditions.

In this paper, we focus on learning Poisson DAG models~\citep{park2015learning, park2017learning} for multivariate count data in high-dimensional settings since large-scale multivariate \emph{count data} frequently arises in many fields, such as high-throughput genomic sequencing data, spatial incidence data, sports science data, and disease incidence data. Like learning the Poisson undirected graphical model or MRF introduced in~\cite{yang2015graphical}, where the sample bound is $\Omega(d_{m}^2 \log^3 p)$, it is not surprising that Poisson DAG models can be learned in high dimensional settings when the indegree of the graph $d$ is bounded. \cite{park2017learning} establishes the consistency of learning Poisson DAG models with the sample bound $n = \Omega(\max\{ d_{m}^4 \log^{12} p, \log^{5+d} p\} )$ where  $d_m$ is the maximum degree of the moralized graph and $d$ is the maximum indegree of a graph. This huge sample complexity difference between directed and undirected graphical models is induced mainly for three reasons: (i) nonexistence ordering, (ii) the known parametric functional form (the standard log link) for the dependencies, and (iii) the restrictive non-positive parameter space in Poisson MRFs (see details in \citealp{yang2015graphical}). 

The main objective of this work is to propose a new milder identifiability assumption for Poisson DAG models, and to develop a new polynomial time approach, called Moments Ratio Scoring (MRS), for learning a high-dimensional Poisson structural equation models (SEM), that is a Poisson DAG model where the parametric functional form for the dependencies is known while the parameters are unbounded and unknown. We address the question of learning high-dimensional Poisson SEMs under the causal sufficiency assumption that all relevant variables have been observed. However, we do not require the sparse moralized graph and faithfulness assumption that might be restrictive~\citep{uhler2013geometry}. 

The MRS algorithm combines the idea of the mean-variance (moments) relation for recovering an ordering, and the sparsity-encouraging $\ell_1$-regularized regression in finding the parents of each node. We provide its sufficient conditions and sample complexity $n = \Omega( d^{2} \log^{9} p)$ under which the MRS algorithm recovers the Poisson SEM with a high probability in the high-dimensional $p > n$ setting. The sample complexity of $n = \Omega( d^{2} \log^{9} p)$ is close to the information-theoretic limit of $\Omega( d \log p )$ for learning sparse DAG models with any exponential family distributions~\citep{ghoshal2017information}. We point out that the sample complexity does not depend on the maximum degree of the moralized graph, $d_m$, but on the indegree of a DAG, $d$. Since a sparse directed graph does not necessarily lead to the sparse moralized graph (e.g., a star graph in Fig.~\ref{fig:star}), to the best of our knowledge, the proposed algorithm is the most efficient and probable for learning sparse Poisson SEMs.
We demonstrate through simulations and a real baseball data application involving multivariate count data that our MRS algorithm performs better than state-of-the-art OverDispersion Scoring (ODS)~\citep{park2015learning}, GES~\citep{chickering2003optimal}, MMHC~\citep{tsamardinos2006max}, and Poisson MRF learning (PMRF) algorithms~\citep{yang2015graphical}, on average, in terms of the both run-time and accuracy of recovering a graph structure and its MEC. In our simulation study, we consider both the extremely sparse ($d=1$) and sparse ($d=10$) high-dimensional settings. Our real data example involving MLB player statistics for 2003 season shows that our MRS algorithm is applicable to multivariate count data while the PMRF algorithm finds too many edges, and the MMHC algorithm tends to select very few edges when variables represent counts. We also investigate the accuracy of our MRS algorithm when samples are generated from general Poisson DAG models and (truncated) Poisson MRFs. The simulation results empirically verify that the MRS algorithm can consistently recover the true edges. 

\subsection{Our Contributions}

We summarize the major contributions of the paper as follows:
\begin{itemize}
	\item We introduce a milder identifiability condition for Poisson DAG models for multivariate count data. 
	
	\item We develop the reliable and scalable lasso-based MRS algorithm which learns sparse high-dimensional Poisson SEMs.
	
	\item We provide the more realistic conditions for learning Poisson SEMs in Section~\ref{SecTheo}.
	
	\item We also provide the sample complexity $n = \Omega( d^{2} \log^{9} p)$ under which the MRS algorithm recovers the Poisson SEM. We emphasize that our theoretical result does not depend on the degree of the moralized graph $d_m$, and hence, the MRS algorithm can recover a graph with hub nodes in the high dimensional setting. 
\end{itemize}

To the best of our knowledge, our MRS algorithm is the only provable and realistic method that applies for the high-dimensional multivariate count data when samples are from Poisson SEMs with hub nodes. We must point out that such improved assumptions and sample complexity are not only from our new identifiability condition, but from the additional constraints on the standard log link function for the dependencies. 

The remainder of this paper is structured as follows. Section~\ref{SecProb} summarizes the necessary notations and problem settings, Section~\ref{SecPoisson} discusses the Poisson DAG model and its new identifiability condition, and Section~\ref{SecComparison} provides a detailed comparison between Poisson DAG models and MRFs. In Section~\ref{SecAlgorithm}, we introduce our polynomial-time DAG learning algorithm, which we refer to as the Moments Ratio Scoring (MRS). Section~\ref{SecComp} discusses computational complexity of our algorithm, and Section~\ref{SecTheo} provides statistical guarantees for learning Poisson SEMs via the MRS algorithm. Section~\ref{SecNume} empirically evaluates our methods, compared to state-of-the-art ODS, GES, and MMHC algorithms using synthetic data, and confirms that our algorithm is one of the few DAG-learning algorithms that performs well in terms of statistical and computational complexity in low and high-dimensional settings. In addition, we investigate how well the MRS algorithm learns general Poisson DAG models and (truncated) Poisson MRFs using synthetic data. Section~\ref{SecReal} compares our MRS algorithm to the Poisson MRF and MMHC algorithm by analyzing a real 2003 season MLB multivariate count data. Lastly, Section~\ref{SecFuture} discusses some future works. 

\section{Poisson DAG Models}


\label{SecClass}

We first introduce some necessary notations and definitions for DAG models. Then, we give a detailed description of previous work on learning Poisson DAG models~\citep{park2015learning}, and we propose a strictly milder identifiability condition. Lastly, we discuss how Poisson DAG models and MRFs~\citep{yang2015graphical} are related. 

\subsection{Problem Set-up and Notations}

\label{SecProb}


A DAG $G = (V, E)$ consists of a set of nodes $V = \{1, 2, \cdots, p\}$ and a set of directed edges $E \subset V \times V$ with no directed cycles. A directed edge from node $j$ to $k$ is denoted by $(j,k)$ or $j \rightarrow k$. The set of \emph{parents} of node $k$, denoted by $\pa(k)$, consists of all nodes $j$ such that $(j,k) \in E$. If there is a directed path $j\to \cdots \to k$, then $k$ is called a \emph{descendant} of $j$, and $j$ is an \emph{ancestor} of $k$. The set $\de(k)$ denotes the set of all descendants of node $k$. The \emph{non-descendants} of node $k$ are $\nd(k) := V \setminus (\{k\} \cup \de(k))$. An important property of DAGs is that there exists a (possibly non-unique) \emph{ordering} $\pi = (\pi_1, ...., \pi_p)$ of a directed graph that represents directions of edges such that for every directed edge $(j, k) \in E$, $j$ comes before $k$ in the ordering. Hence, learning a graph is equivalent to learning the ordering and the skeleton that is the set of directed edges without their directions.


We consider a set of random variables $X := (X_j)_{j \in V}$ with a probability distribution taking values in a sample space $\mathcal{X}_{V}$ over the nodes in $G$. Suppose that a random vector $X$ has a joint probability density function $P(G) = P(X_1, X_2, ..., X_p)$. For any subset $S$ of $V$, let $X_{S} :=\{X_j : j \in S \subset V \}$ and $\mathcal{X}_{S} := \times_{j \in S} \mathcal{X}_{j}$ where $\mathcal{X}_{j}$ is a sample space of $X_j$. For any node $j \in V$, $\P(X_j \mid X_{S})$ denotes the conditional distribution of a variable $X_j$ given a random vector $X_{S}$. Then, a DAG model has the following factorization~\citep{lauritzen1996graphical}:  
\begin{equation}
\label{eq:factorization}
\P(G) = \P(X_1, X_2,..., X_p) = \prod_{j=1}^{p} \P(X_j \mid X_{\pa(j)}),
\end{equation}
where $\P(X_j \mid X_{\pa(j)})$ is the conditional distribution of $X_j$ given its parents variables $X_{\pa(j)} :=\{X_k : k \in \pa(j) \subset V \}$.
 
We suppose that there are $n$ independent and identically distributed samples $X^{1:n} := ( X^{(i)} )_{i=1}^{n}$ from a given graphical model where $X^{(i)} := X_{1:p}^{(i)} = ( X_1^{(i)}, X_2^{(i)},\cdots, X_p^{(i)})$ is a $p$-variate random vector. The notation $\widehat{\cdot}$ denotes an estimate based on samples $X^{1:n}$. We also accept the causal sufficiency assumption that all important variables have been observed.

\subsection{Poisson DAG Model and its Identifiability}

\label{SecPoisson}

The definition of Poisson DAG models in \cite{park2015learning} is that each conditional distribution given its parents $X_j \mid X_{\pa(j)}$ is Poisson such that 
\begin{equation}
\label{eq:CondPDAGM}
X_j  \mid X_{\pa(j)} \sim \mbox{Poisson}(g_j(X_{\pa(j)})),
\end{equation}
where for any arbitrary positive link function $g_j: \mathcal{X}_{\pa(j)} \to \mathbb{R}^{+}$. Hence using the factorization in Equation~\eqref{eq:factorization}, the joint distribution is as follows:
\begin{equation}
\label{eq:JointPDAGM}
f_{G}(X) = \prod_{j \in V} f_j(X_j \mid X_{\pa(j)}).
\end{equation}
where $f_j$ is the probability density function of Poisson. 

A Poisson structural equation model (SEM) is a special case of a Poisson DAG model where the link functions $g_j$'s in Equation~\eqref{eq:CondPDAGM} are the standard log link function for Poisson generalized linear models (GLMs), i.e., $g_j(X_{\pa(j)}) = \exp(\theta_j + \sum_{k \in \pa(j)}{\theta_{jk}X_k})$ where $(\theta_{jk})_{k \in \pa(j)}$ represents the linear weights. Using factorization~\eqref{eq:factorization}, the joint distribution of a Poisson SEM can be written as:
\begin{align}
\label{eq:PDAGGLM}
&f(X_1,X_2,...,X_p) = \exp \Big( \sum_{j \in V} \theta_j X_j + \sum_{(k,j)\in E} \theta_{jk} X_j X_k - \sum_{j \in V} \log X_j ! - \sum_{j \in V} e^{ \theta_j + \sum_{k \in \pa(j)} \theta_{jk} X_k }\Big). 
\end{align}

Poisson DAG models have a useful moments relation for the identifiability:
\begin{proposition}
	\label{prop:moments ratio}
	Consider a Poisson DAG model~\eqref{eq:JointPDAGM} with non-degenerated rate parameter functions $(g_j(X_{\pa(j)}) )_{j \in V}$. Then, for any node $j \in V$, and any set $S_j \subset \nd(j)$, the following moments relation holds:
	 \begin{equation}
	 \label{eq:moment ratio}
	 	\frac{ \E( X_j^2) }{ \E\left[   \E( X_j \mid X_{S_j}) + \E( X_j \mid X_{S_j})^2 \right] } \geq  1
	 \end{equation}
	 Equivalently, 
	 \begin{equation*}
	 	\E( \var( \E(X_j \mid X_{\pa(j)} ) \mid X_{S_j} ) ) \geq 0. 
	 \end{equation*}
	 The equality only holds when $S_j$ contains all parents of $j$, that is, $\pa(j)  \subset S_j$. 
\end{proposition}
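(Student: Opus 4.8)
The plan is to reduce the whole statement to the Poisson mean--variance identity together with the directed local Markov property, after which the inequality becomes a one-line instance of the conditional Jensen inequality (equivalently, the conditional law of total variance).

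First I would set $\mu_j := \E(X_j \mid X_{\pa(j)}) = g_j(X_{\pa(j)})$, a (positive) random variable. Since $X_j \mid X_{\pa(j)}$ is Poisson with rate $\mu_j$, its conditional second moment is $\E(X_j^2 \mid X_{\pa(j)}) = \mu_j + \mu_j^2$, and taking expectations yields $\E(X_j^2) = \E(\mu_j) + \E(\mu_j^2)$. This handles the numerator. For the denominator I would invoke the DAG structure: a parent of $j$ can never be a descendant of $j$, so $\pa(j)\subseteq\nd(j)$, and since also $S_j\subseteq\nd(j)$ the directed local Markov property gives $X_j \perp X_{S_j\setminus\pa(j)} \mid X_{\pa(j)}$. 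Hence $\E(X_j\mid X_{\pa(j)},X_{S_j}) = \mu_j$, and by the tower rule $\E(X_j\mid X_{S_j}) = \E(\mu_j\mid X_{S_j})$. Using $\E[\E(\mu_j\mid X_{S_j})] = \E(\mu_j)$, the denominator equals $\E(\mu_j) + \E[\E(\mu_j\mid X_{S_j})^2]$.

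Subtracting, the numerator minus the denominator is $\E(\mu_j^2) - \E[\E(\mu_j\mid X_{S_j})^2] = \E[\var(\mu_j\mid X_{S_j})] \ge 0$, which is both the inequality $\eqref{eq:moment ratio}$ (after dividing, the denominator being strictly positive because $\mu_j>0$ for non-degenerate rate functions) and, verbatim, the displayed ``equivalently'' statement $\E(\var(\E(X_j\mid X_{\pa(j)})\mid X_{S_j})) \ge 0$.

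Finally, the equality case: the ratio equals $1$ iff $\E[\var(\mu_j\mid X_{S_j})] = 0$, i.e. $\var(\mu_j\mid X_{S_j})=0$ almost surely, i.e. $\mu_j = g_j(X_{\pa(j)})$ is (a.s.) a $\sigma(X_{S_j})$-measurable function. If $\pa(j)\subseteq S_j$ this is immediate. The converse is the step I expect to be the main obstacle, and the one that genuinely uses the non-degeneracy hypothesis: if there were a parent $k\in\pa(j)\setminus S_j$, then conditioning on $X_{S_j}$ (a set of non-descendants not containing $k$) leaves $X_k$ with its full Poisson-type support, while $g_j$ depends non-trivially on its $k$-th coordinate, so $g_j(X_{\pa(j)})$ cannot be $\sigma(X_{S_j})$-measurable --- contradiction. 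Making ``non-degenerate rate parameter function'' precise enough to rule out pathological cancellations (a parent entering $g_j$ only through a combination that $X_{S_j}$ happens to pin down) and to confirm that conditioning on non-descendants never collapses the conditional law of an omitted parent is the only delicate point; everything else is bookkeeping.
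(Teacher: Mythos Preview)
Your proposal is correct and matches the paper's argument essentially line for line: the paper likewise passes through $\mu_j=g_j(X_{\pa(j)})$, uses the Poisson second-moment identity, towers through $X_{\pa(j)}$, and reduces the inequality to $\E[\var(\mu_j\mid X_{S_j})]\ge 0$ (phrased there as Jensen's inequality for the convex function $f(\mu)=\mu+\mu^2$, which is the same statement). Your treatment of the equality case is in fact more explicit than the paper's, which simply invokes the non-degeneracy of $g_j$ without spelling out the measurability argument you outline.
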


We include the proof in Section~\ref{SecPropProof1}. Proposition~\ref{prop:moments ratio} claims that when all parents of $j$, $\pa(j)$, contribute to its rate parameter, the moments ratio in Equation~\eqref{eq:moment ratio} is equal to 1 if a condition set $S_j$ contains all parents of $j$, $\pa(j) \subset S_j$, otherwise greater than 1. In Poisson SEMs, it is clear that the non-degenerated rate parameter function assumptions are equivalent to the \textit{non-zero coefficients} conditions, $|\theta_{jk}| > 0$ for all $k \in \pa(j)$ since $g_j(X_{\pa(j)}) = \exp(\theta_j + \sum_{k \in \pa(j)}{\theta_{jk}X_k})$. 


Now, we briefly explain how Poisson DAG models are identifiable from the moments ratio in Proposition~\ref{prop:moments ratio} using the bivariate Poisson DAG models illustrated in Fig.~\ref{figure1}: $G_1: X_1 \sim \mbox{Poisson}(\lambda_1), X_2 \sim \mbox{Poisson}(\lambda_2)$, where $X_1$ and $X_2$ are independent; $G_2: X_1 \sim \mbox{Poisson}(\lambda_1)$ and  $X_2 \mid X_1 \sim \mbox{Poisson}(g_2(X_1))$; and $G_3: X_2 \sim \mbox{Poisson}(\lambda_2)$ and $X_1 \mid X_2 \sim \mbox{Poisson}(g_1(X_2))$ for arbitrary non-degenerated positive functions $g_1, g_2: \mathbb{N} \cup \{0\} \to \mathbb{R}^{+}$. 

\begin{figure}[!t]
	\centering
	\begin {tikzpicture}[ -latex ,auto,
	state/.style={circle, draw=black, fill= white, thick, minimum size= 2mm},
	label/.style={thick, minimum size= 2mm}
	]
	\node[state] (X1)  at (0,0)   {\small{$X_1$} }; \node[state] (X2)  at (2,0)   {\small{$X_2$}}; \node[label] (X3) at (1,-.7) {$G_1$};
	\node[state] (Y1)  at (5,0)   {\small{$X_1$}}; \node[state] (Y2)  at (7,0)   {\small{$X_2$}}; \node[label] (Y3) at (6,-.7) {$G_2$};
	\node[state] (Z1)  at (10,0)   {\small{$X_1$}}; \node[state] (Z2)  at (12,0)   {\small{$X_2$}}; \node[label] (Z3) at (11,-.7) {$G_3$};
	\path (Y1) edge [shorten <= 2pt, shorten >= 2pt] node[above]  { } (Y2); 
	\path (Z2) edge [shorten <= 2pt, shorten >= 2pt] node[above]  { } (Z1);
\end{tikzpicture}
\caption{Bivariate directed acyclic graphs of $G_1$, $G_2$, and $G_3$.  }
\label{figure1}
\end{figure}
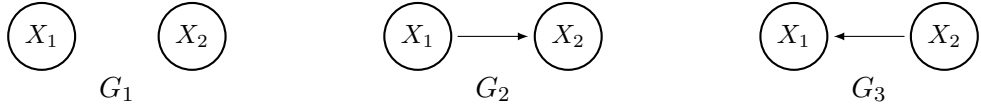

By Proposition~\ref{prop:moments ratio}, we can see that  $\E( X_j^2 )= \E(X_j) + \E(X_j)^2$ for all $j \in \{1, 2\}$ in $G_1$. In $G_2$, we can also see that 
$$
\E( X_1^2 )= \E(X_1) + \E(X_1)^2,\quad \text{and} \quad \E( X_2^2 ) > \E(X_2) + \E(X_2)^2.
$$
Similarly, in $G_3$, we have $\E( X_1^2 ) > \E(X_1) + \E(X_1)^2$, while $\E( X_2^2 ) = \E(X_2) + \E(X_2)^2$. Hence, we can determine the true graph based on the moments ratio $\E( X_j^2 ) /( \E(X_j) + \E(X_j)^2 )$. 

This idea of a moments relation in Proposition~\ref{prop:moments ratio} can easily apply to general p-variate Poisson DAG models, and hence, the models are identifiable by testing whether the moments ratio in Equation~\eqref{eq:moment ratio} is equal to 1 or greater than 1.

\begin{theorem}
	\label{ThmMainIdentifiability}
	Consider a Poisson DAG model~\eqref{eq:JointPDAGM} with rate parameters $(g_j(X_{\pa(j)}))_{j \in V}.$ If for any $j \in V$, rate parameter $g_j(\cdot)$ is non-degenerated, the Poisson DAG model is identifiable.  
\end{theorem}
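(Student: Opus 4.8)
The plan is to bootstrap the bivariate identifiability argument (already sketched for $G_1, G_2, G_3$) into a general $p$-variate recursive ordering recovery, invoking Proposition~\ref{prop:moments ratio} at each step. First I would show that a source node (a node with no parents) can always be identified: by Proposition~\ref{prop:moments ratio} applied with $S_j = \emptyset$, a node $j$ satisfies $\E(X_j^2) = \E(X_j) + \E(X_j)^2$ (equivalently $\var(X_j) = \E(X_j)$, the Poisson mean-variance identity) precisely when $\pa(j) \subset \emptyset$, i.e., when $j$ is a source. Since every DAG has at least one source, the set of source nodes is nonempty, and it is exactly the set of nodes for which the marginal moments ratio equals $1$. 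This recovers the first element(s) $\pi_1$ of a valid ordering.

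Next I would argue by induction on the topological layers. Suppose the first $m$ elements of an ordering $\pi_1, \dots, \pi_m$ have been identified, and write $S = \{\pi_1, \dots, \pi_m\}$. I claim a node $j \in V \setminus S$ can be placed next (i.e., $\pa(j) \subset S$) if and only if the conditional moments ratio $\E(X_j^2) / \E[\E(X_j \mid X_S) + \E(X_j \mid X_S)^2]$ equals $1$. The "if" direction is Proposition~\ref{prop:moments ratio}: if $\pa(j) \subset S$ then $S \subset \nd(j)$ (since all of $j$'s parents are non-descendants and $S$ consists only of ancestors/non-descendants of $j$), so the equality case applies. For the "only if" direction I would invoke the equivalent formulation $\E(\var(\E(X_j \mid X_{\pa(j)}) \mid X_{S})) = 0$: this forces $\E(X_j \mid X_{\pa(j)}) = g_j(X_{\pa(j)})$ to be a.s.\ a measurable function of $X_S$, and since $g_j$ is non-degenerated it genuinely depends on every coordinate of $X_{\pa(j)}$, which would be impossible unless $\pa(j) \subseteq S$. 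One must also check $S \subset \nd(j)$ is a legitimate precondition — i.e., that no element of $S$ is a descendant of $j$ — which holds because the induction maintains that $\pi_1, \dots, \pi_m$ is a prefix of a valid ordering, so nothing already placed can be downstream of an unplaced node. Iterating exhausts $V$ and yields a valid ordering $\pi$.

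Finally, once a valid ordering is recovered, the skeleton is identifiable by a standard conditional-independence / regression argument: for each $j$, among all $k$ preceding $j$ in $\pi$, one includes $k \to j$ iff $X_j \notindependent X_k \mid X_{\{\pi_1,\dots\} \setminus \{k\}}$ restricted to predecessors of $j$; equivalently $\pa(j)$ is the minimal subset $T$ of predecessors with $\E(X_j \mid X_{\text{pred}(j)}) = \E(X_j \mid X_T)$, which is well-defined by non-degeneracy. Assembling the ordering and the parent sets reconstructs $G$ uniquely (up to the inherent non-uniqueness of the ordering, which does not affect the edge set $E$), establishing identifiability.

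The main obstacle I anticipate is the "only if" step in the inductive argument — rigorously converting the vanishing-conditional-variance condition $\E(\var(\E(X_j\mid X_{\pa(j)})\mid X_S)) = 0$ into the combinatorial conclusion $\pa(j) \subseteq S$. This requires that a non-degenerated link $g_j$ cannot accidentally be rendered $X_S$-measurable when some parent lies outside $S$; one needs to rule out degenerate dependence structures among the $X_k$'s (e.g., a parent outside $S$ being itself a deterministic function of $X_S$), which is presumably why the hypothesis is phrased in terms of non-degeneracy of the rate functions and why the proof likely also leans on the explicit exponential-family structure of the Poisson conditionals rather than Proposition~\ref{prop:moments ratio} alone.
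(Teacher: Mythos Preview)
Your proposal is correct and follows essentially the same inductive approach as the paper's proof: identify each successive element of the ordering by testing which remaining node achieves equality in the moments ratio of Proposition~\ref{prop:moments ratio} when conditioning on the already-identified prefix, then recover parents as the minimal conditioning subset. Your anticipated obstacle is in fact a non-issue --- the ``only if'' direction is already part of the statement of Proposition~\ref{prop:moments ratio} (equality holds \emph{only} when $\pa(j)\subset S_j$, under non-degeneracy), so you may simply invoke it rather than re-derive it.
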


We include the proof in Section~\ref{SecTheo}. Theorem~\ref{ThmMainIdentifiability} claims that any Poisson DAG model is identifiable if all parents of node $j$ contribute to its rate parameter. Hence, Theorem~\ref{ThmMainIdentifiability} shows that any Poisson SEM is identifiable under the non-zero coefficients condition, $|\theta_{jk}| > 0$ for all $k \in \pa(j)$. This condition is also commonly assumed in (Gaussian) linear structural equation models for the model identifiability~\citep{spirtes1995directed, ghoshal2017learning,ghoshal2018learning,loh2014high,peters2014identifiability, park2019identifiability}. We believe that it is a natural condition that is in accordance with the intuitive understanding of relationships among variables. 

Our identifiability condition is strictly milder than the previous identifiability result in \cite{park2015learning} that is equivalent to $\var( \E( X_j \mid X_{\pa(j)})  \mid X_{S_j} = x ) > 0$ for all $x \in \mathcal{X}_{S_j}$ when $\pa(j) \not \subset S_j$ . For a better comparison, we consider a fully connected graph where $X_1 \sim \text{Poisson}( \lambda )$, $X_2 \mid X_1 \sim \text{Poisson}( \lambda + X_1)$, and $X_3 \mid X_1, X_2 \sim \text{Poisson}( \lambda + X_2 \mathbf{1}(X_1 \neq 0))$ where $\lambda$ is a positive constant and $\mathbf{1}(\cdot)$ is an indicator function. In this case, we can see $\var( \E( X_3 \mid X_{1}, X_{2})  \mid X_1 = 0 ) = 0$, and hence, the identifiability condition in~\cite{park2015learning} is not satisfied, while our condition is satisfied. 
%

In a Poisson SEM, the identifiability assumption in  \cite{park2015learning} is also satisfied under the non-zero coefficients condition. However, in the finite sample setting, the difference of both assumptions gets more crucial. For a positive constant $c$, \cite{park2015learning} requires $\min_{ x \in \mathcal{X}_{S_j} } \var( \E( X_j \mid X_{\pa(j)})  \mid X_{S_j} = x ) ) >  c$, while we need $\E( \var( \E( X_j \mid X_{\pa(j)})  \mid X_{S_j} ) ) >  c$.  Hence, our new identifiability assumption makes learning Poisson SEMs easier. We discuss this more in Section~\ref{SecTheo}. 

\subsection{Comparison to Poisson MRF}

\label{SecComparison}

In this section, we compare Poisson DAG models and MRFs where the conditional distributions of each node given its parents and neighbors are Poisson, respectively. To simplify the comparison, we consider the joint distribution of a Poisson SEM in Equation~\eqref{eq:PDAGGLM}. This is a form similar to the joint distribution of Poisson MRFs in~\cite{yang2015graphical}, where the joint distribution has the following form:
\begin{align}
\label{eq:PoissonMRF}
&f(X_1,X_2,...,X_p) = \exp \Big( \sum_{j \in V} \theta_j X_j + \sum_{(k,j)\in E} \theta_{jk} X_j X_k- \sum_{j \in V} \log X_j ! - A(\theta) \Big), 
\end{align}
where $A(\theta)$ is the log of the normalization constant. The key difference between a Poisson SEM and a Poisson MRF is the normalization constant $A(\theta)$ in Equation~\eqref{eq:PoissonMRF}, as opposed to the term $\sum_{j \in V} e^{ \theta_j + \sum_{k \in \pa(j)} \theta_{jk} X_k }$ in Equation~\eqref{eq:PDAGGLM}, which depends on variables.

\cite{yang2015graphical} proves that a Poisson MRF~\eqref{eq:PoissonMRF} is normalizable if and only if all $(\theta_{jk})$ values are less than or equal to $0$. This means Poisson MRFs only capture negative dependency relations. In addition, \cite{yang2015graphical} addresses the learning Poisson MRFs when the functional form of dependencies is $X_j  \mid X_{V \setminus j} \sim \mbox{Poisson}(\exp(\theta_j + \sum_{k \in \mathcal{N}(j)}{\theta_{jk}X_k}))$ where $\mathcal{N}(j)$ denotes the neighbors of a node $j$ in the graph.

While Poisson MRFs have strong restrictions on the functional form for dependencies and the parameter space, they can be successfully learned in the high-dimensional settings with less restrictive constraints of sparsity. \cite{yang2015graphical} shows that Poisson MRFs can be recovered via $\ell_1$-regularized regression if $n = \Omega\left(d_{m}^2 \log^3 p \right)$, where $d_{m}$ is the degree of the undirected graph. In contrast, \cite{ park2017learning} shows that Poisson DAG models can be learned via the ODS algorithm if $n = \Omega( \max \{ d_{m}^4 \log^{12} p, \log^{5+d} p \} )$ where $d_{m}$ is obtained by the moralized graph and $d$ is the maximum indegree of the graph. This big difference in the sample complexity primarily comes from the unknown functional form for the dependencies in Poisson DAG models. In the next section, we will show that a significant advantage can be achieved by assuming the parametric function for the dependencies in terms of recovering the graphs. 

\section{Algorithm}

\label{SecAlgorithm}

Here, we present our Moments Ratio Scoring (MRS) algorithm for learning the identifiable Poisson SEM~\eqref{eq:PDAGGLM}. Our algorithm alternates between an element-wise ordering search using the (conditional)  moments ratio, and a parent search using $\ell_1$-regularized GLM. Hence, the algorithm chooses a node for the first element of the ordering, and then determines its parents. The algorithm iterates this procedure until the last element of the ordering and its parents are determined.

Without loss of generality, assume that $\pi = (1,2,\cdots,p)$ is the true ordering.
Then Poisson SEMs~\eqref{eq:PDAGGLM} have the conditional distribution of $X_j$ given that all variables before $j$ in the ordering are reduced to the following Poisson GLM:
\begin{align}
\label{eq:TrueConditionalDistribution}
& P(X_{j} \mid X_{1:(j-1)} ) = \exp \bigg\{  \theta_j X_j + \sum_{k \in {1:(j-1)}} \theta_{jk} X_{k} X_j  + \log X_j ! - \exp \bigg( \theta_j + \sum_{k \in {1:(j-1)}} \theta_{jk} X_{k} \bigg) \bigg\}, 
\end{align}
where $\theta_{jk} \in \mathbb{R}$ represents the influence of node $k$ on node $j$. For ease of notation, let $\theta(j)$ be a set of parameters related to Poisson GLM~\eqref{eq:TrueConditionalDistribution}. Then $\theta(j) = (\theta_j, \theta_{\setminus j} ) \in \mathbb{R} \times \mathbb{R}^{j-1}$ where $\theta_{\setminus j}  = ( \theta_{jk} )_{k \in \{1,2,...,j-1\}}$ is a zero-padded vector with non-zero entries if $k \in \pa(j)$.

Our MRS (Algorithm~\ref{MRS_Algorithm}) involves learning the ordering by comparing moments ratio scores of nodes using the following equations: 
\begin{align}
\label{EqnTruncScorej}
\widehat{\mathcal{S}}(1,j) &:=
\frac{ \widehat{\E}( X_j^2)}{ \widehat{\E}( X_j) + \widehat{\E}( X_j )^2  } \quad \text{and} \quad  
\widehat{\mathcal{S}}(m,j) :=\frac{ \widehat{\E}( X_j^2 )}{ \widehat{\E}\big( \widehat{\E}( X_j \mid X_{\widehat{\pi}_{1:(m-1)}}) + \widehat{\E}( X_j \mid X_{\widehat{\pi}_{1:(m-1)}})^2 \big)  }, 
\end{align}
where $\widehat{\pi}_{1:m} = \{\widehat{\pi}_1, ... ,\widehat{\pi}_m \}$,  $\widehat{\E}(X_j) = \frac{1}{n} \sum_{i = 1}^{n} X_j^{(i)}$, and $\widehat{\E}( \widehat{\E}(X_j \mid X_S) ) = \frac{1}{n} \sum_{i = 1}^{n} \exp \big(\widehat{\theta}_j^S + \sum_{k \in S} \widehat{\theta}_{jk}^S X_{k}^{(i)} \big)$, and $\widehat{\E}( \widehat{\E}(X_j \mid X_S)^2 ) = \frac{1}{n} \sum_{i = 1}^{n} \exp \big(2\widehat{\theta}_j^S + 2\sum_{k \in S} \widehat{\theta}_{jk}^S X_{k}^{(i)} \big)$ where $\widehat{\theta}_S(j) = (\widehat{\theta}_j^S, \widehat{\theta}_{\setminus j}^S)$ is the solution of the following $\ell_1$-regularized GLM:

\begin{align}
\label{eqn:theta_jS2}
\widehat{\theta}_S(j) := \arg \min \frac{1}{n} \sum_{i = 1}^{n} \bigg[  -X_j^{(i)} \bigg( \theta_j + \sum_{k \in S} \theta_{jk} X_{k}^{(i)} \bigg) + \exp \bigg(\theta_j + \sum_{ k \in S } \theta_{jk} X_{k}^{(i)} \bigg)  \bigg] + \lambda_j \sum_{k \in S} |\theta_{jk}|. 
\end{align}

This score is an estimator of the moments ratio relation in Equation~\eqref{eq:moment ratio}. Hence, the correct element of the ordering has a score of 1, otherwise strictly greater than 1 in population. The ordering is determined one node at a time by selecting the node with the smallest score. Similar strategies of element-wise ordering learning can be found in many existing algorithms (e.g.,~\citealp{shimizu2011directlingam, ghoshal2017learning, ghoshal2018learning,  drton2018causal}).

The novelty of our algorithm is learning an ordering by testing which nodes have the smallest moments ratio in Equation~\eqref{eq:moment ratio} using the $\ell_1$-regularized GLM. By substituting the estimation of parameters $\theta(j)$ for an estimation of the conditional mean, we gain significant computational and statistical improvements compared to the previous works in~\cite{park2015learning, park2017learning} where the method of moments is used for estimating the conditional mean and variance. 

In principle, the number of conditional variances exponentially grows in the number of conditioning variables. Hence, if a conditioning set contains d-variables with 10 possible outcomes, then the number of possible computations is $10^d$. In other words, the minimum sample size for the ODS algorithm to be implemented is possibly $10^d$, otherwise, none of conditional variances can be estimated. 

As we discussed, the problem of a learning directed graph structure is the same as the problem of an learning undirected graph structure if the ordering is known. Hence, given the estimated ordering, the parents of each node $j$ can be learned via $\ell_1$-regularized GLM (see details in \citealp{meinshausen2006high, wainwright2006high, ravikumar2011high,yang2015graphical}). Therefore, we determine the estimated parents of a node $j$ as $\widehat{\pa}(j) := \{ k \in S: \widehat{\theta}_{j k}^S \neq 0 \}$ where $S = \widehat{\pi}_{1:(j-1)}$ and $\widehat{\theta}_S(j)$ is the solution to Equation~\eqref{eqn:theta_jS2}. 

\setlength{\algomargin}{0.5em}
\begin{algorithm}[!t]
	\caption{ \bf Moments Ratio Scoring (MRS)~\label{MRS_Algorithm} }
	\SetKwInOut{Input}{Input}
	\SetKwInOut{Output}{Output}
	\SetKwInOut{Return}{Return}
	\Input{ $n$ i.i.d. samples, $X^{1:n}$}
	\Output{ Estimated ordering $\widehat{\pi}=(\widehat{\pi}_1,...,\widehat{\pi}_p)$ and an edge structure, $\widehat{E} \subset V \times V$}
	\BlankLine
	Set $\widehat{\pi}_{0} = \emptyset$\;
	\For{$m = \{1,2,\cdots,p\}$}{
		Set $S = \{\widehat{\pi}_1,\cdots,\widehat{\pi}_{m-1}\} $;\\
		\For{$j \in \{1,2,\cdots,p\} \setminus S$ }{
			Estimate $\widehat{\theta}_{S}(j)$ for $\ell_1$-regularized generalized linear model~\eqref{eqn:theta_jS2}; \\
			Calculate scores $\widehat{\mathcal{S}}(m,j)$ using Equation ~\eqref{EqnTruncScorej};\\
		}
		The $m^{th}$ element of the ordering, $\widehat{\pi}_m = \arg \min_j \widehat{\mathcal{S}}(m,j)$\;
		The parents of the $m^{th}$ element of the ordering, $\widehat{\pa}(\widehat{\pi}_m) = \{k \in S \mid \widehat{\theta}_{\widehat{\pi}_m k}^S \neq 0 \} $\;
	}
	\Return{Estimate the edge set, $\widehat{E} =  \cup_{m \in V} \{ (k, \widehat{\pi}_m) \mid k \in \widehat{\pa}(\widehat{\pi}_m) \} $}
\end{algorithm} 

\subsection{Computational Complexity}

\label{SecComp}

The computational complexity for the MRS algorithm involves the $\ell_1$-regularized GLM algorithm~\citep{friedman2009glmnet} where the worse-case complexity is $O(n p)$ for a single $\ell_1$-regularized regression run. More precisely, the coordinate descent method updates each gradient in $O(p)$ operations. Hence, with $d$ non-zero terms in the GLM, a complete cycle costs $O(p d)$ operations if no new variables become non-zero, and costs $O(n p)$ for each new variable entered (see details in \citealp{friedman2010regularization}). Since our algorithm has $p$ iterations and there are $p-j+1$ regressions with $j-1$ features for the $j$th iteration, the total worst-case complexity is $O(n p^3)$.

The estimation of a Poisson MRF also involves a node-wise $\ell_1$-regularized GLM over all other variables, and hence the worse-case complexity is $O(n p^2)$ if the coordinate descent method is exploited. The addition of estimation of ordering makes $p$ times more computationally inefficient than the standard method for learning Poisson MRFs. 

Learning a DAG model is NP-hard in general~\citep{chickering1994learning}. Hence, many state-of-the-art MEC and DAG learning algorithms, such as PC~\citep{spirtes2000causation}, GES~\citep{chickering2003optimal}, and MMHC~\citep{tsamardinos2006max}, are inherently greedy search algorithms. In the numerical experiments in Section~\ref{SecNume}, we compare MRS to greedy hill-climbing search-based GES and MMHC algorithms in terms of run time, and show that MRS has a significantly better computational complexity. 

\subsection{Theoretical Guarantees}

\label{SecTheo}

In this section, we provide theoretical guarantees on the MRS algorithm for learning Poisson SEMs~\eqref{eq:PDAGGLM}. The main result is expressed in terms of the triple $(n,p,d)$, where $n$ is a sample size, $p$ is a graph node size, and $d$ is the indegree of a graph.

\subsubsection{Assumptions}

We begin by discussing the assumptions we impose on Poisson SEMs. Since we apply $\ell_1$-regularized regression for the parent selection, most assumptions are similar to those imposed in~\cite{wainwright2006high}, \cite{ravikumar2011high}, \cite{yang2015graphical} and \cite{park2017learning} where $\ell_1$-regularized regression was used for graphical model learning. 

Important quantities are the Hessian matrices of the negative conditional log-likelihood of a node $j$ given some subsets of the nodes in the ordering, $S_j \in \{ \{\pi_{1} \}, \{\pi_{1}, \pi_{2} \},..., \{\pi_1,...\\,\pi_{j-1}\}  \}$. Let $Q^{j,S_j} := \bigtriangledown^2 \ell_j^{S_j}(\theta_{S}^{*}(j); X^{1:n})$ where 
\begin{align}
\label{eq:likelihood}
& \ell_j^{S_j}( \theta_{S_j}(j), X^{1:n} ) := \frac{1}{n} \sum_{i = 1}^{n} \bigg[  -X_j^{(i)} \bigg( \theta_j^{S_j} + \sum_{k \in S_j} \theta_{jk}^{S_j} X_{k}^{(i)} \bigg) + \exp \bigg(\theta_j^{S_j} + \sum_{ k \in S_j } \theta_{jk}^{S_j} X_{k}^{(i)} \bigg)  \bigg],
\end{align} 

\begin{align}
\label{ThetaSj}
& \theta_{S_j}^{*}(j) := \arg \min \E \bigg[ - X_j \bigg( \theta_j^{S_j} + \sum_{k \in S_j} \theta_{jk}^{S_j} X_{k} \bigg) + \exp \bigg(\theta_j^{S_j} + \sum_{ k \in S_j } \theta_{jk}^{S_j} X_{k} \bigg)  \bigg].
\end{align}

For ease of notation, we define a set for the non-zero elements of $\theta_{S_j}^{*}(j)$, 
\begin{align}
\label{eq:subparents}
 T_j := \{k \in S_j \mid \theta_{jk}^{*} \neq 0 \quad \text{where } \theta_{S_j}^{*}(j) = (\theta_{j}^{*}, \theta_{jk}^{*}) \}.
\end{align}
We note that if $S_j$ contains all parents of $j$, $\pa(j) \subset S_j$, then $T_j = \pa(j)$. Lastly, for simplicity, we let $A_{S S}$ denote the $|S| \times |S|$ sub-matrix of the matrix $A$ corresponding to variables $X_{S}$.


\begin{assumption}[Dependence Assumption]
	\label{A1Dep}
	For any $j \in V$ and any $S_j \in \{ \{\pi_{1} \}, \{\pi_{1}, \pi_{2} \},\\ ..., \{\pi_1,...,\pi_{j-1}\}  \}$, there exist positive constants $\rho_{\min}$ and $\rho_{\max}$ such that 
	\begin{align*}
	&\min_{j \in V} \lambda_{\min}\left(Q^{j,S_j}_{ T_j T_j }\right)  \geq \rho_{\min}, 
	\quad \text{ and } \quad  
	\max_{j \in V }  \lambda_{\max} \left( \frac{1}{n}\sum_{i=1}^{n} X_{\pa(j)}^{(i)} (X_{\pa(j)}^{(i)})^T \right) \leq \rho_{\max},
	\end{align*}
	where $T_j$ is in Equation~\eqref{eq:subparents}, $\lambda_{\min}(A)$ and $\lambda_{\max}(A)$ are the smallest and largest eigenvalues of the matrix $A$, respectively. 
\end{assumption}

\begin{assumption}[Incoherence Assumption]
	\label{A2Inc}
	For any $j \in V$ and any $S_j \in \{ \{\pi_{1} \}, \{\pi_{1}, \pi_{2} \},\\..., \{\pi_1,...,\pi_{j-1}\}  \}$, there exists a constant $\alpha \in (0,1]$ such that 
	$$\max_{j, S_j } \max_{t \in T_j^c} \| Q_{t T_j}^{j,S_j} (Q_{T_j T_j}^{j,S_j})^{-1}\|_1 \leq 1 - \alpha,$$
	where $T_j$ is in Equation~\eqref{eq:subparents}. 
\end{assumption} 

Assumption~\ref{A1Dep} ensures that the parent variables are not too dependent. In addition, Assumption~\ref{A2Inc} ensures that parent and non-parent variables are not highly correlated. These two assumptions are standard in all neighborhood regression approaches to variable selection involving $\ell_1$-regularized based methods, and these conditions have imposed in proper works for both high-dimensional regression and graphical model learning. 

To control the tail behavior of likelihood functions, we require a bounded sample assumption which is also imposed in the standard $\ell_1$-regularized Poisson regression (e.g., \citealp{jia2017sparse}).

\begin{assumption}[Bounded Sample Assumption]
	\label{A3Con}
	For any $i \in \{1,2,...,n\}$, $j \in V$, and for all $S_j  \in \{ \{\pi_{1} \}, \{\pi_{1}, \pi_{2} \},..., \{\pi_1,...,\pi_{j-1}\}  \}$, the samples are bounded: 
	$$
	\max_{i, j} \{ X_{j}^{(i)} \} < C_{x} \log( \max\{n,p\} ) \quad \text{and} \quad \max_{i, j} \{ \exp( \theta_{j}^* + \sum_{k \in S_j} \theta_{jk}^* X_k^{(i)}   ) \} < C_{x} \log( \max\{n,p\} ).
	$$
	where $C_{x} > 2$ is a positive constant. 
\end{assumption}

Assumption~\ref{A3Con} is closely related to the rate parameters. For instance, the rate parameter of $X_j^{(i)}$ is $\exp( \theta_{j}^* + \sum_{k \in \pa(j)} \theta_{jk}^* X_k^{(i)} )$ by the definition of Poisson SEMs. Hence, Assumption~\ref{A3Con} can be understood that too large rate parameters, that leads to a large value of a sample, are not allowed for all conditional distributions.

In fact, Assumption~\ref{A3Con} is satisfied with a high probability when $(\theta_{jk}^*)$ are negative. Since the second condition in Assumption~\ref{A3Con} is directly satisfied with negative $(\theta_{jk}^*)$, we discuss the first condition: Using the union bound,
$$
P\left( \max_{i,j} X_j^{(i)} \geq C_x  \log( \max\{n,p\} ) \right) \leq n.p \max_{i,j} \frac{ \E( \exp(X_j^{(i)}) ) }{ ( \max\{n,p\} )^{C_x}  } \leq  \max_{i,j}  \frac{ \E( \exp(X_j^{(i)}) ) }{ ( \max\{n,p\} )^{C_x - 2}  }. 
$$ 
In addition, the moment generating function is bounded when $(\theta_{jk}^*)$ are negative.
$$ 
\E( \exp(X_j)) \leq \E( \E( \exp(X_j) \mid X_{\pa(j)} ) ) \leq \E( \exp( \theta_j^* + \sum \theta_{jk}^* X_k))  \leq  \exp( \theta_j^* ).
$$
Hence, given the negative  $(\theta_{jk}^*)$ assumption, Assumption~\ref{A3Con} is satisfied with probability at least $1 -  \max_{j} \exp({\theta_j} )/   ( \max\{n,p\} )^{C_x - 2}$. 

Lastly, we require a stronger version of the moments ratio relation in Equation~\eqref{eq:moment ratio}, because we move from the population to the finite samples. This assumption only involves learning the ordering of a graph.  

\begin{assumption}
	\label{A1}
	For all $j \in V$ and $S_j \in \{ \{\pi_{1} \}, \{\pi_{1}, \pi_{2} \},..., \{\pi_1,...,\pi_{j-1}\}  \}$, there exists an $M_{\min} > 0$ such that
	$$
	\E( X_j^2)  > ( 1 + M_{\min} ) \E[ \E( X_j \mid X_{S_j}) + \E( X_j \mid X_{S_j})^2 ].
	$$
\end{assumption}

Now, we compare Assumptions~\ref{A1Dep},~\ref{A2Inc},~\ref{A3Con}, and~\ref{A1} to the assumptions for learning Poisson MRFs and DAG models. As discussed, our assumptions are similar to the assumptions in \cite{yang2015graphical} and \cite{park2017learning} since all methods exploit the $\ell_1$-regularized GLM.
However, the assumptions in \cite{yang2015graphical} only involve neighbors of node $j$, that is, $S_j = V \setminus j$. While our assumptions involve some subsets of parents, that is, $S_j \in \{ \{\pi_{1} \}, \{\pi_{1}, \pi_{2} \},..., \{\pi_1,...,\pi_{j-1}\}  \}$ due to the unknown ordering. In addition, they do not assume the bounded sample assumption. However, they assume the restricted negative parameter space $\theta_{jk} < 0$ due to the normalizability issue. As we explained, if all parameters are negative in a Poisson SEM, the moment generating function is bounded, and hence, the bounded sample assumption is satisfied with a high probability. Lastly, \cite{yang2015graphical} does not have the moments ratio assumption, since it is only used for recovering the ordering.

We compare the required assumptions for the MRS and ODS algorithms in \cite{park2017learning}. A major difference is that the MRS algorithm directly estimates the graph, while the ODS algorithm estimates the moralized graph to reduce the search space of DAGs, and then, estimates the graph. Hence, our assumptions involve some parents of node $j$, while their assumptions involve not only parents, but neighbors of node $j$, that is, $S_j = \{ \{\pi_1,...,\pi_{j-1}\}, V \setminus j \}$. In addition, they require a sparse moralized graph and adjacent faithfulness that are also known to be restrictive. We note that the sparse moralized graph assumption can be very strong since a sparse moralized graph is not implied by a sparse graph. For instance, consider a star graph where $X_1 \to X_{j}$ for all $j \in \{2,3,...,p\}$ in Fig.~\ref{fig:star}. This star graph has the maximum degree of the moralized graph is $p-1$, while the maximum indegree is 1. 

Another major difference is in the moments ratio assumption. More precisely, \cite{,park2015learning,park2017learning} assume $\var( \E(X_j \mid X_{S} = x)) > c$ for all $x \in \mathcal{X}_{S}$ when $\pa(j) \not \subset S$,  while we require $\E( \var( \E(X_j \mid X_{S} = x)) ) > c$. To emphasize the difference, we consider a 3-node graph $X_1 \to X_2 \to X_3$ where $X_1 \sim Poisson(\lambda), X_2 \mid X_1 \sim Poisson( \exp( \theta_1 X_1) )$, and $X_3 \mid X_2 \sim Poisson( \exp( \theta_2 X_2) )$. Then, for $j = 3$ and $S = 1$, we have 
$$
\var( \E(X_3 \mid X_2) \mid X_1 ) = \var( \exp(\theta_2 X_2) \mid X_1) < \E( \exp(2 \theta_2 X_2) \mid X_1) =  \exp( e^{\theta_1 X_1} ( e^{2 \theta_2} -1 ) ).
$$
Hence, for some constants $\theta_1, \theta_2$ and $c$, if $X_1 <  \frac{1}{\theta_1} ( \log \log c - \log( e^{2\theta_2} -1) )$, their assumption is not satisfied, while Assumption~\ref{A1} holds.  

Lastly, the ODS algorithm requires at least two distinct element of $X_{\pa(j)}^{(i)}$ for a conditional variance estimation, $\var(X_j \mid X_{\pa(j)} )$. In principle, it can be $2^d$ by assuming all variables are binary. Hence when $d$ is not so sparse, the ODS algorithm often fails to be implemented. In Section~\ref{SecNume}, we empirically verify that it can be a critical issue for the ODS algorithm when a graph is not so sparse $(d =5)$. 
Therefore, we believe that the assumptions for the MRS algorithm are more realistic. 

Although our assumptions are standard in the previous works of $\ell_1$-regularized Poisson regressions, we have to note that the assumptions cannot be confirmed from data and they could be restrictive. However, they are not strong for $\ell_1$-regularized regression when samples are from Gaussian SEMs~(see e.g., \citealp{ravikumar2011high}). Hence, we conjecture that our assumptions can be satisfied with a high probability under mild conditions, and leave this to future study.

\subsubsection{Main Result}

Putting together Assumptions~\ref{A1Dep},~\ref{A2Inc},~\ref{A3Con}, and~\ref{A1}, we have the following main result that a Poisson SEM can be recovered via our MRS algorithm in high-dimensional settings. The theorem provides not only sufficient conditions, but also the probability that our method recovers the true graph structure.

\begin{theorem}
	\label{ThmMainTheorm}
	Consider a Poisson SEM~\eqref{eq:PDAGGLM} with parameter vector $( \theta(j) )_{j \in V}$ and the maximum indegree of the graph $d$. 
	Suppose that the regularization parameter~\eqref{eqn:theta_jS2} is chosen, such that 
	$$
	 \frac{4 C_{x}^2 \sqrt{2}(2-\alpha) }{  \alpha} \frac{ \log^2 (\max\{n,p\}) }{ \kappa_1(n,p)  } \leq  \lambda_{j}  \leq \frac{ \alpha \rho_{\min}^2 }{ 10^2 C_{x}^2 (2 - \alpha) \rho_{\max} d \log^2 (\max\{n,p\})  },
	$$ 
	for any $\alpha = (0, 1]$, and $\kappa_1(n,p) \geq \frac{4 \sqrt{2} \cdot  10^2 C_{x}^4 \cdot(2-\alpha)^2 }{  \alpha^2} \frac{ \rho_{\max} }{ \rho_{\min}^2 }  d \log^4 (\max\{n,p\})$. 
	Suppose also that Assumptions~\ref{A1Dep},~\ref{A2Inc},~\ref{A3Con} and~\ref{A1} are satisfied and the values of the parameters in Equation~\eqref{eq:PDAGGLM} are sufficiently large such that $\min_{(j,k) \in E} | \theta_{jk} | \geq \frac{10}{\rho_{\min}} \sqrt{d}\lambda_j$. Then, for any $\epsilon > 0$, there exists a positive constant $C_{\epsilon}$ such that if the sample size is sufficiently large $n > C_{\epsilon} (\kappa_1(n,p) )^2 \log p$, then the MRS algorithm uniquely recovers the graph with a high probability: 
	\begin{align*}
	P( \widehat{G} = G ) \geq 1 - \epsilon.
	\end{align*}
\end{theorem}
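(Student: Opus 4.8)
\emph{Overview and Step 1 (uniform $\ell_1$-GLM guarantees).} The plan is to decouple the analysis exactly as the algorithm does: an $\ell_1$-regularized GLM part that simultaneously controls the parent estimates $\widehat{\pa}(\cdot)$ and the plug-in conditional means entering the scores~\eqref{EqnTruncScorej}, and an ordering part handled by induction on the position $m$, with a final union bound over all $O(p^2)$ node/subset pairs the algorithm can query. Write $L:=\log(\max\{n,p\})$, and let $\mathcal{S}(m,j):=\E(X_j^2)/\E[\E(X_j\mid X_S)+\E(X_j\mid X_S)^2]$ be the population version of~\eqref{eq:moment ratio}, so $\widehat{\mathcal{S}}(m,j)$ is its sample analogue with $\E(X_j\mid X_S)$ replaced by the fitted Poisson GLM mean. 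First, fix a node $j$ and an \emph{ancestral} conditioning set $S$ (one containing all ancestors of each of its members) with $j\notin S$; then $\pa(j)\subseteq S\subseteq\nd(j)$, so conditionally on $X_S$ the Poisson GLM is correctly specified, the population risk minimizer $\theta^*_S(j)$ of~\eqref{ThetaSj} coincides with the true conditional parameter — in particular it has support exactly $\pa(j)$ (so $T_j=\pa(j)$ in~\eqref{eq:subparents}) and $\E(X_j\mid X_S)=\exp(\theta^*_j+\sum_{k\in\pa(j)}\theta^*_{jk}X_k)$. I would then run the primal--dual witness argument of~\cite{ravikumar2011high,yang2015graphical,park2017learning} for the solution $\widehat{\theta}_S(j)$ of~\eqref{eqn:theta_jS2}: Assumption~\ref{A3Con} bounds the summands of the empirical risk~\eqref{eq:likelihood} and of its Hessian by $O(C_x^2L^2)$, which with Assumption~\ref{A1Dep} yields restricted strong convexity near $\theta^*_S(j)$; Assumption~\ref{A2Inc} gives strict dual feasibility off $\pa(j)$; a Hoeffding bound gives that the gradient $\nabla\ell_j^S(\theta^*_S(j))$ has $\ell_\infty$-norm of order $C_x^2L^2\sqrt{\log p/n}$, which is dominated by the prescribed lower bound on $\lambda_j$ once $n\ge c\,\kappa_1(n,p)^2\log p$. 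On a high-probability event this gives $\widehat{\theta}_S(j)$ supported inside $\pa(j)$ with $\ell_2$-error $O(\sqrt{d}\,\lambda_j/\rho_{\min})$, and the $\beta$-min hypothesis $\min_{(j,k)\in E}|\theta_{jk}|\ge\frac{10}{\rho_{\min}}\sqrt{d}\,\lambda_j$ upgrades this to exact recovery $\widehat{\pa}(j)=\pa(j)$. The only departures from the cited results are that $S$ ranges over ancestral prefixes rather than $V\setminus j$, and that the $\theta_{jk}$ are sign-unrestricted, which is absorbed by Assumption~\ref{A3Con}.

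\emph{Step 2 (transfer to the scores).} Next I would show $|\widehat{\mathcal{S}}(m,j)-\mathcal{S}(m,j)|<M_{\min}/3$, uniformly over all $j$ and all ancestral $S$, on a high-probability event. The numerator $\widehat{\E}(X_j^2)$ concentrates on $\E(X_j^2)$ by Bernstein, since $X_j^{(i)}=O(C_xL)$. For the denominator, I would first bound, uniformly in $i$, the difference $|\exp(\widehat{\theta}_j^S+\sum_k\widehat{\theta}_{jk}^SX_k^{(i)})-\exp(\theta^*_j+\sum_k\theta^*_{jk}X_k^{(i)})|$: the two exponents differ by at most $\|\widehat{\theta}_S(j)-\theta^*_S(j)\|_1\max_i\|X^{(i)}\|_\infty$, which is of order $d\lambda_jL$ and is forced to $o(1)$ by the \emph{upper} bound on $\lambda_j$; since $\exp$ is $O(C_xL)$-Lipschitz on the relevant range by Assumption~\ref{A3Con}, the fitted conditional means and their squares are uniformly within $o(C_xL)$ of the true ones. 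Then $\frac1n\sum_i\exp(\theta^*_j+\sum_k\theta^*_{jk}X_k^{(i)})$ and $\frac1n\sum_i\exp(2\theta^*_j+2\sum_k\theta^*_{jk}X_k^{(i)})$ concentrate, again by Bernstein, on $\E(\E(X_j\mid X_S))$ and $\E(\E(X_j\mid X_S)^2)$. Since the population denominator is at least $\E(X_j)+\E(X_j)^2>0$, combining these estimates yields the claimed uniform score concentration.

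\emph{Step 3 (induction and bookkeeping).} I would then induct on $m$ with the invariant that $\widehat{\pi}_{1:m}$ is ancestral. Given $\widehat{\pi}_{1:(m-1)}=S$ ancestral: Proposition~\ref{prop:moments ratio} gives $\mathcal{S}(m,j)=1$ for every $j\notin S$ with $\pa(j)\subseteq S$ (using $S\subseteq\nd(j)$), while Assumption~\ref{A1} gives $\mathcal{S}(m,j)\ge 1+M_{\min}$ for every $j\notin S$ with $\pa(j)\not\subseteq S$; Step~2 separates the two groups, so $\widehat{\pi}_m=\arg\min_j\widehat{\mathcal{S}}(m,j)$ satisfies $\pa(\widehat{\pi}_m)\subseteq S$, which preserves the invariant, and Step~1 applied with this $S$ gives $\widehat{\pa}(\widehat{\pi}_m)=\pa(\widehat{\pi}_m)$. (The recovered $\widehat{\pi}$ need not equal $\pi$ — it is merely some valid topological order of $G$ — but the edge set $\widehat{E}$ equals $E$ regardless, which is what the theorem asserts.) After $p$ steps $\widehat{G}=G$. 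Finally, a union bound over the $O(p^2)$ GLM fits and moment estimates gives total failure probability at most $\mathrm{poly}(p)\exp(-c\,n/\kappa_1(n,p)^2)$; requiring this to be below $\epsilon$ yields $n>C_\epsilon\,\kappa_1(n,p)^2\log p$, and substituting the lower bound $\kappa_1(n,p)=\Theta(dL^4)$ recovers $n=\Omega(d^2\log^9 p)$ when $p>n$.

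\emph{Main obstacle.} The crux is Step~2: the score is a ratio whose denominator is an empirical average of \emph{exponentials of a lasso estimate}, so the estimation error of $\widehat{\theta}_S(j)$ is passed through the nonlinear link and threatens to blow up. Showing that it does not is precisely why the admissible window for $\lambda_j$ in the statement is pinched between a lower bound (to dominate the lasso gradient and the sampling noise of the score) and an upper bound (to keep the exponentiated estimation error $o(1)$), and why the bounded-sample Assumption~\ref{A3Con} is indispensable rather than merely convenient. A secondary subtlety is that the GLM guarantees must hold not only on prefixes of the true ordering but on every ancestral set the algorithm might visit; since only $O(p^2)$ node/subset pairs are ever queried, one union-bounds over exactly those, reading Assumptions~\ref{A1Dep}--\ref{A1} as quantifying over all valid orderings of $G$.
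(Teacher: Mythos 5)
Your proposal follows essentially the same route as the paper's proof: a primal--dual witness analysis of the $\ell_1$-regularized Poisson GLM under Assumptions~\ref{A1Dep}--\ref{A3Con} to get parent recovery and the $\ell_2$-error bound $O(\sqrt{d}\,\lambda_j/\rho_{\min})$ (Appendix~\ref{SecThmMainProof1}), followed by propagating that error through the exponential link to obtain uniform concentration of the moments-ratio scores and invoking the $M_{\min}$-gap of Assumption~\ref{A1} for stepwise ordering recovery with a union bound (Appendix~\ref{SecMainThmProof2}, via the events $\zeta_1$--$\zeta_4$). The only notable difference is bookkeeping: you maintain an explicit ``ancestral set'' invariant so that any valid topological order yields $\widehat{E}=E$, whereas the paper assumes a unique ordering without loss of generality --- your version is, if anything, slightly more careful on that point.
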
 \vspace{-0.0cm}

Detailed proof is provided in Appendices~\ref{SecThmMainProof1} and \ref{SecMainThmProof2}. Appendix~\ref{SecThmMainProof1} provides the error probability that $\ell_1$-regularized regression recovers the true parents of each node given the true ordering, and Appendix~\ref{SecMainThmProof2} provides the error probability that $\ell_1$-regularized regression recovers the ordering. The key technique for the proof is that the \emph{primal-dual witness} method used in sparse regularized regressions and related techniques \citep{meinshausen2006high, wainwright2006high, ravikumar2011high, yang2015graphical}. Theorem~\ref{ThmMainTheorm} intuitively makes sense because neighborhood selection via the $\ell_1$-regularized regression is a well-studied problem, and its bias can be controlled by choosing the appropriate regularization parameter $\lambda_j$. Hence, our moments ratio scores can be sufficiently close to the true scores to recover the true ordering. 


Theorem~\ref{ThmMainTheorm} claims that if $n = \Omega( d^{2} \log^9 p )$, our MRS algorithm recovers an underlying graph with a high probability. Hence, our MRS algorithm works in a high-dimensional setting, provided that the indegree of a graph $d$ is bounded. This sample bound result shows that our method has much more relaxed constraints on the sparsity of the graph than the previous work in~\cite{park2017learning}, where the sample bound is $n = \Omega( \max \{ d_{m}^4 \log^{12} p, \log^{5+d} p \} )$. Moreover, it also shows that learning Poisson DAG models may require more samples than the learning Poisson MRFs in~\cite{yang2015graphical}, where the sample bound is $n = \Omega( d_{m}^2 \log^3 p ) )$ due to the existence of the ordering and the unrestricted parameter space.   

\subsubsection{Poisson SEM with a Star Graph Example}

\label{SecStar}

In this section, we discuss the validity of our assumptions using a special Poisson SEM with the star graph in Fig.~\ref{fig:star} where $X_1 \sim Poisson(\lambda)$, $X_j \mid X_1 \sim Poisson( \exp(\theta X_1) )$, for $j \in \{2,3,...p\}$. This consists of a single hub node connected to the rest of nodes. With this star graph, we show that our assumptions can be satisfied with positive $(\theta_{jk})$. 


In order to discuss the validity of Assumptions~\ref{A1Dep},~\ref{A2Inc}, \ref{A3Con}, and \ref{A1} in this particular example, we first calculate the expectation of the Hessian matrix of Equation~\eqref{eq:likelihood}: For any $j \in \{2,3,...p\}$,
\begin{align*}
	\E( X_1^2 \exp( \theta X_1)  ) &= \frac{ \partial^2 }{ \partial \theta^2 } \E( \exp( \theta X_1) ) = \lambda \exp( \lambda ( \exp(\theta) -1 )  + \theta) (\lambda \exp(\theta) +1), \\
	\E( X_1 X_j \exp( \theta X_1)  ) &= \frac{ \partial }{ \partial \theta } \E( \exp( \theta_1 X_1) X_j ) =  \frac{ \partial }{ \partial \theta } \E( \exp( \theta X_1) E(X_j \mid X_1) ) \\
	&= \frac{ \partial }{ \partial \theta } \E( \exp( 2 \theta X_1) ) = 2 \lambda \exp( \lambda( \exp(2 \theta) -1 ) + 2 \theta).  
\end{align*}

Hence, the population version of Assumption~\ref{A1Dep} is reduced to
$$
	\rho_{\min} <  \lambda \exp( \lambda ( \exp(\theta) -1 )  + \theta) (\lambda \exp(\theta) +1) \quad \text{and} \quad \lambda + \lambda^2 < \rho_{\max}. 
$$


It can be satisfied with some positive values of $\theta$. For $\lambda = 2$, $\rho_{\min} = 0.01$ and  $\rho_{\max} = 10$, Assumption \ref{A1Dep} is satisfied if $\theta > -3.426$. In addition, for $\lambda = 5$, $\rho_{\min} = 0.01$ and  $\rho_{\max} = 50$, it is also satisfied if $\theta > -2.205$.

\begin{figure}[!t]
	\centering
	\begin {tikzpicture}[ -latex ,auto,
	state/.style={circle, draw=black, fill= white, thick, minimum size= 2mm},
	label/.style={thick, minimum size= 2mm}
	]
	\node[state] (X1)  at (3,1.5)   {\small{$X_1$} }; 
	\node[state] (X2)  at (-2,0)  {\small{$X_2$}}; 
	\node[state] (X3)  at (0,0)   {\small{$X_3$}}; 
	\node[state] (X41)  at (2,0.0)   {\small{$\cdots$}}; 
	\node[state] (X42)  at (4,0.0)   {\small{$\cdots$}}; 
	\node[state] (X43)  at (6,0.0)   {\small{$\cdots$}}; 
	\node[state] (X5)  at (8,0)   {\small{$X_p$}}; 
	
	\path (X1) edge [shorten <= 2pt, shorten >= 2pt] node[above]  {} (X2); 
	\path (X1) edge [shorten <= 2pt, shorten >= 2pt] node[above]  { } (X3);
	\path (X1) edge [shorten <= 2pt, shorten >= 2pt] node[above]  { } (X41);
	\path (X1) edge [shorten <= 2pt, shorten >= 2pt] node[above]  { } (X42);
	\path (X1) edge [shorten <= 2pt, shorten >= 2pt] node[above]  { } (X43);
	\path (X1) edge [shorten <= 2pt, shorten >= 2pt] node[above]  { } (X5);
\end{tikzpicture}
\caption{Star graph example  }
\label{fig:star}
\end{figure}
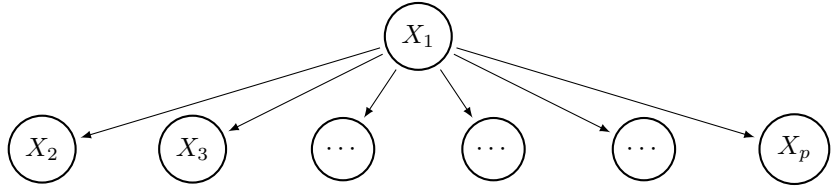

In addition, the population version of Assumption \ref{A2Inc} can be written as
$$
	\max_{j \in V \setminus \{1 \} } \max_{t \in V \setminus \{1, j\} } | \E( Q_{t 1}^{j,1}) \E( (Q_{1 1}^{j, 1})^{-1} ) | = \frac{2\cdot\exp ( \lambda \exp(\theta)( \exp(\theta) -1) + \theta) }{ \lambda \exp( \theta ) +1 }	\leq 1 - \alpha.
$$

This condition is also satisfied with positive values of $\theta$. For $\lambda = 2$ and $\alpha = 0.01$, a simple algebra yields that Assumption \ref{A2Inc} is satisfied if $\theta< 0.141$. 
In addition, for $\lambda = 5$ and $\alpha = 0.01$, the assumption is satisfied if $\theta< 0.165$. 

 
In terms of Assumption~\ref{A3Con}, we also claim that it can be satisfied with some positive $\theta$. Since the moment generating function of $X_1$ is $\exp( \lambda(e -1) )$, we have, 
$$
 P( X_1^{(i)} > C_{x} \log( \max\{n,p\}) ) < \frac{ \E( \exp( X_1^{(i)} ) ) }{ \max\{n,p\}^{C_{x}} } =  \frac{ \exp( \lambda(e -1) ) }{ \max\{n,p\}^{C_{x} } }.
$$
where $C_{x} > 2$ is a positive constant in Assumption~\ref{A3Con}. 

For other nodes $j \in \{2,3,...,p\}$, we have, 
$$
	P( X_j^{(i)} > C_{x} \log( \max\{n,p\}) ) \leq \frac{ \E( \E( \exp( X_j^{(i)} ) \mid X_1^{(i)} ) ) }{ \max\{n,p\}^{C_{x}} }  = \frac{ \E( \exp( \exp( \theta X_1^{(i)})( e-1 ) ) ) }{ \max\{n,p\}^{C_{x}} }.
$$
 
For $\theta < \log X_1^{(i)} / X_1^{(i)}$, we have,  
$$
P( X_j^{(i)} > C_{x} \log( \max\{n,p\}) ) \leq \frac{ \E( \exp( X_1^{(i)} ( e-1 ) ) ) }{ \max\{n,p\}^{C_{x}} } =  \frac{ \exp( \lambda(e^{e-1} -1) ) }{ \max\{n,p\}^{C_{x} } }.
$$

Hence, for $\theta < \log ( C_{x} \log ( \max\{n,p\} ) )/ C_{x} \log ( \max\{n,p\} )$ that is the lower bound of $\log X_1^{(i)} / X_1^{(i)}$ given $X_1^{(i)} < C_{x} \log ( \max\{n,p\})$, Assumption~\ref{A3Con} is satisfied with a high probability:
$$
P\left( \max_{i,j} X_j^{(i)} > C_{x} \log( \max\{n,p\}) \right) \leq  \frac{ \exp( \lambda(e^{e-1} -1) ) }{ \max\{n,p\}^{C_{x}-2 } }.
$$

Now, we discuss Assumption~\ref{A1}. A simple calculation shows that, for any $j \in \{2,3,...,p\}$, 
$$
\E(X_j) =  \exp( \lambda ( \exp( \theta - 1 ))  ), \text{ and}
\quad
\E(X_j^2) = \exp( \lambda ( \exp( 2 \theta) - 1 )  ) +  \exp( \lambda ( \exp( \theta) - 1 )  ). 
$$

Hence, Assumption~\ref{A1} is equivalent to the constraint, 
$$
\exp( \lambda ( \exp( 2 \theta) - 1 )  ) > M_{min} \exp( \lambda ( \exp( \theta) - 1 )  ) + (1+ M_{min}) \exp( 2\lambda ( \exp( \theta ) - 1 )  ).
$$

This condition is also satisfied with some positive $\theta$. For $\lambda = 1$ and $M_{min} = 0$, as we discussed in Proposition~\ref{prop:moments ratio}, Assumption \ref{A2Inc} is always satisfied with any value of $\theta \neq 0$. For $\lambda = 2$ and $M_{min} = 0.001$, Assumption \ref{A2Inc} is satisfied if $|\theta| > 0.033$. Lastly, for $\lambda = 5$ and $M_{min} = 0.001$, Assumption \ref{A2Inc} is satisfied if $|\theta| > 0.021$. Therefore, we show that for this particular star graph, Assumption~\ref{A1Dep},~\ref{A2Inc},~\ref{A3Con}, and~\ref{A1} can be satisfied with a high probability by allowing positive $\theta$. 


Finally, we emphasize that the sample complexity of the MRS algorithm, $n = \Omega( d^{2} \log^{9} p)$, does not rely on the maximum degree of the moralized graph, $d_m$, while many DAG learning algorithms using the sparsity of the moralized graph or Markov blanket inevitably depend on $d_m$. For the star graph with $d =1$ and $d_m = p-1$, the MRS algorithm requires $n = \Omega( \log^{9} p)$ to recover the graph in high dimensional settings, while the ODS algorithm may fail since its sample complexity is $\Omega( d_m^{4} \log^{12} p)$. This fact implies that, unlike the ODS algorithm, the MRS algorithm can recover a sparse graph containing hub nodes in high dimensional settings. 


\section{Numerical Experiments}

\label{SecNume}

In this section, we provide simulation results to support our main theoretical results of Theorem~\ref{ThmMainTheorm} and the computational complexity in Section~\ref{SecComp}: (i) the MRS algorithm recovers the ordering and edges more accurately as sample size increases; (ii) the required sample size $n = \Omega( d^{2} \log^9 p )$ depends on the number of nodes $p$ and the complexity of the graph $d$; (iii) the MRS algorithm accurately learns the graphs in high-dimensional settings ($p > n$); and (iv) the computational complexity is $O(n p^3)$ at worst. We also show that the MRS algorithm performs favorably compared to the ODS~\citep{park2015learning}, GES~\citep{chickering2003optimal}, and MMHC~\citep{tsamardinos2006max} algorithms. In addition, we investigate how sensitive our MRS algorithm is to deviations from the assumption about the link functions by using the identity link function in Equation~\eqref{eq:JointPDAGM}. Lastly, we also investigate how well the MRS algorithm recovers undirected edges when samples are generated by Poisson and truncated Poisson MRFs~\citep{yang2013poisson, yang2015graphical, inouye2017review}. 

\subsection{Random Poisson SEMs}

\label{SecRanPSEM}

We conducted simulations using $200$ realizations of $p$-node Poisson SEMs~\eqref{eq:PDAGGLM} with the randomly generated underlying DAG structures while respecting the indegree constraints $d \in \{1, 5, 10\}$. A graph with $d=1$ is a special case where there is no v-structure, and therefore, the corresponding MEC is completely undirected. The set of non-zero parameters $\theta_{j}, \theta_{jk} \in \mathbb{R}$ in Equation~\eqref{eq:PDAGGLM} was generated uniformly at random in the range $\theta_{j} \in [1, 3]$, $\theta_{jk} \in [-1.5, -0.5] \cup [0.5, 1.5]$ for $d=1$, and $\theta_{jk} \in [-1, -0.1] \cup [0.1, 1]$ for $d = 5, 10$, which helps the generated values of samples to avoid either all zeros or from going beyond the maximum possible value of the R program ( $> 10^{309}$). Nevertheless, if some samples were beyond the maximum possible value, we regenerated the parameters and samples.

The MRS and ODS algorithms were implemented using $\ell_1$-regularized likelihood where we used five-fold cross validation to choose the regularization parameters. Where mean squared error was within two standard error of the minimum mean squared error, we chose the minimum value for the moments ratio scores and the largest value for parent selection. That was because a less biased estimator is preferred for the score calculation, and we preferred a sparse graph containing only legitimate edges. We acknowledge that the level of sparsity can be adjusted according to the importance of precision or recall.


In Fig.~\ref{fig:result001}, we compare the MRS algorithm to state-of-the-art ODS, GES and MMHC algorithms for graph node size $p = \{20, 200\}$, varying sample size $n \in \{25, 50, ..., 250\}$ for $d = 1$ and $n = \{100, 200,..., 1000\}$ for $d = 10$, and provide two results: (i) the average precision ($\frac{ \# \text{ of correctly estimated edges } }{ \# \text{ of estimated edges} }$); (ii) the average recall ($\frac{ \# \text{ of correctly estimated edges } }{ \# \text{ of true edges} }$). As discussed, the both GES and MMHC algorithms only recover the partial graph by leaving some arrows undirected. Therefore, we also provide average precision and recall for the estimated MECs in Fig.~\ref{fig:result002}. Lastly, we provide an oracle, where the true parents of each node are used, while the ordering is estimated via $\ell_1$-regularized GLM. Hence, we can see where the errors come from between the ordering estimation or parent selection. We considered more parameters ($\theta_{jk}, n, p, d$), but for brevity, we focus on these settings.

\begin{figure*}[!t]
	\centering \hspace{-2mm}
	\begin{subfigure}[!htb]{.22\textwidth} 
		\includegraphics[width=\textwidth, trim={0.5cm 0.2cm 0 0.2cm},clip]{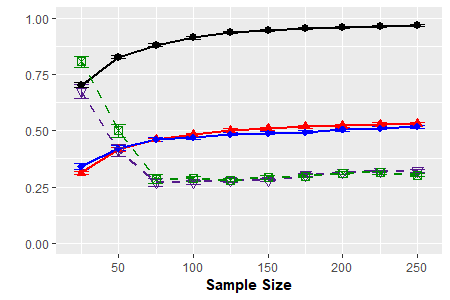}
		\caption{Prec:$p$=20,$d$=1}
	\end{subfigure}
	\begin{subfigure}[!htb]{.22\textwidth}
		\includegraphics[width=\textwidth, trim={0.5cm 0.2cm 0 0.2cm},clip]{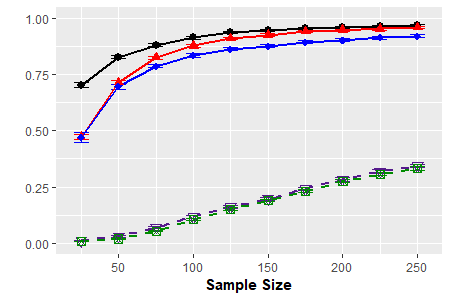}
		\caption{Reca:$p$=20,$d$=1}
	\end{subfigure}
	\begin{subfigure}[!htb]{.22\textwidth} 
		\includegraphics[width=\textwidth, trim={0.5cm 0.2cm 0 0.2cm},clip]{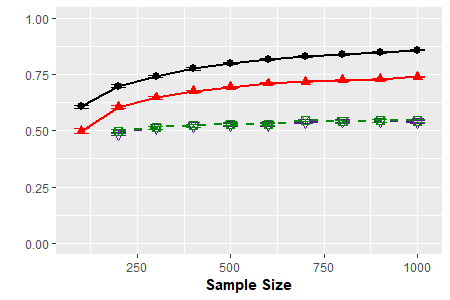}
		\caption{Prec:$p$=20,$d$=10}
	\end{subfigure}
	\begin{subfigure}[!htb]{.22\textwidth}
		\includegraphics[width=\textwidth, trim={0.5cm 0.2cm 0 0.2cm},clip]{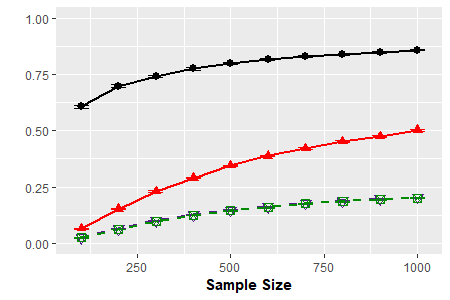}
		\caption{Reca:$p$=20,$d$=10}
	\end{subfigure}	
	\begin{subfigure}[!htb]{.07\textwidth}
		\includegraphics[width=\textwidth, trim={14cm 0 0.3cm 1.5cm},clip]{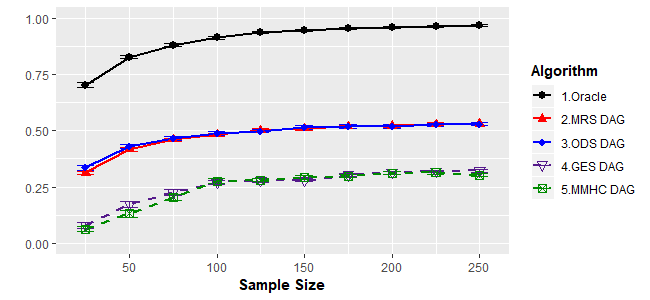}
	\end{subfigure}
	
	\begin{subfigure}[!htb]{.22\textwidth} 
		\includegraphics[width=\textwidth, trim={0.5cm 0.2cm 0 0.2cm},clip]{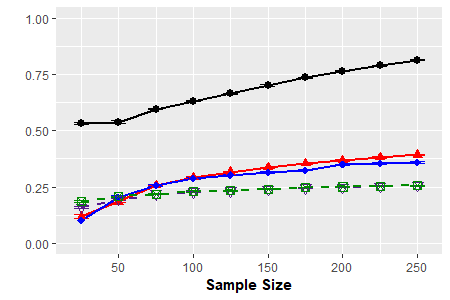}
		\caption{Prec:$p$=200,$d$=1}
	\end{subfigure}
	\begin{subfigure}[!htb]{.22\textwidth}
		\includegraphics[width=\textwidth, trim={0.5cm 0.2cm 0 0.2cm},clip]{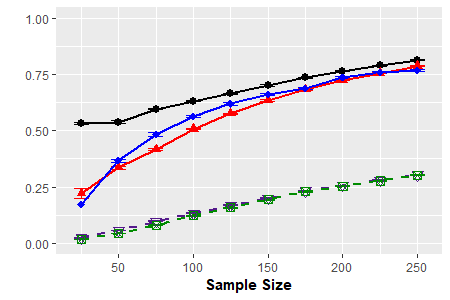}
		\caption{Reca:$p$=200,$d$=1}
	\end{subfigure}
	\begin{subfigure}[!htb]{.22\textwidth} 
		\includegraphics[width=\textwidth, trim={0.5cm 0.2cm 0 0.2cm},clip]{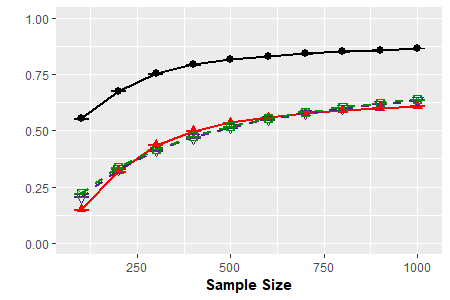}
		\caption{Prec:$p$=200,$d$=10}
	\end{subfigure}
	\begin{subfigure}[!htb]{.22\textwidth}
		\includegraphics[width=\textwidth, trim={0.5cm 0.2cm 0 0.2cm},clip]{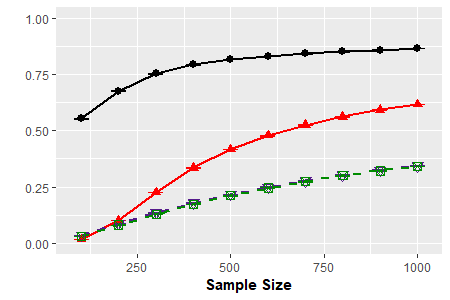}
		\caption{Reca:$p$=200,$d$=10}
	\end{subfigure}
	\begin{subfigure}[!htb]{.07\textwidth}
		\includegraphics[width=\textwidth, trim={14cm 0 0.3cm 1.5cm},clip]{plots/DAGLegend.png}
	\end{subfigure}
	%
	\captionsetup{format=hang}
	\caption{Comparison of the MRS algorithm to the oracle, ODS, GES and MMHC algorithms in terms of precision and recall for Poisson SEMs with $p \in \{20, 200\}$ and $d \in \{1, 10\}$.}
	\label{fig:result001}
\end{figure*}

As we can see in Fig.~\ref{fig:result001}, the MRS algorithm more accurately recovers the true directed edges as sample size increases. In addition, the MRS algorithm is more precise for small sparse graphs than for large-scale or dense graphs, given the same sample size. Hence it confirms that the MRS algorithm is consistent, and the sample bound $n = \Omega( d^{2} \log^9 p )$ depends on $p$ and $d$. 

The MRS algorithm significantly outperforms state-of-the-art GES and MMHC algorithms in terms of both precision and recall, on average, except for cases $p =20, d=1, n \leq 50$. It is worth noting that the GES and MMHC algorithms are not consistent, because the recall for any tree graph must be zero in population, whereas the recall from GES and MMHC increases as sample size increases. Hence, we can conclude that the GES and MMHC algorithms find correct directed edges by finding incorrect v-structures. It is an expected result because the comparison methods only work with a non-faithful distribution, which rarely arises in finite sample settings~\citep{uhler2013geometry}.

Fig.~\ref{fig:result001} shows that the MRS and ODS algorithms have similar performance in identifying directed edges when the indegree is a small $d = 1$. It makes sense because the ODS algorithm recovers any Poisson DAG models if the moralized graph is sparse. In other words, the accuracy of the ODS algorithm may be poor for the non-sparse graph. Moreover, the ODS algorithm often fails to be implemented due to a lack of samples for the estimation of conditional variance, that is, $\sum_{i=1}^{n} \mathbf{1}( X_{S}^{(i)} = x ) < 2$ for all $x \in \mathcal{X}_{S}$. Table~\ref{Tbl:ODSresult} shows the number of failures in the ODS algorithm implementations for node size $p \in \{20, 50\}$ and sample size $n \in \{100, 200,..., 1000\}$ when the indegree is $d = 5$, and the degree of the moralized graph is at most $d_{m} = p-1$. It empirically confirms that the ODS algorithm requires a huge number of samples to be implemented when a true graph is not sparse. Hence, we do not apply the ODS algorithm for the graphs with $d = 10$.  It is consistent with our main result that our method can learn the Poisson SEMs with some hub nodes while the ODS algorithm might not. 

\begin{table*}
	\begin{center}
		\begin{tabular}{c c c c c c c c c c c} 
			n 					& 100 & 200 & 300 & 400 & 500 & 600 & 700 & 800 & 900 & 1000 \\ 	\hline
			p = 20			& 	199  & 175 & 107 & 64    & 1  & 0 & 0 & 0 & 0 &0 \\ 	\hline
			p = 50			&   200  & 200   &   200   &    199     &  192     &   179   &    151  &    140  &    99  &   86 \\ \hline
		\end{tabular}
		\captionsetup{format=hang}
		\caption{Number of failures in ODS algorithm implementations from among 200 sets of samples for different node sizes $p \in \{20, 50\}$, and sample sizes $n \in \{100, 200,..., 1000\}$, when the indegree is $d = 5$.\label{Tbl:ODSresult}  }
	\end{center} \vspace{-5mm}
\end{table*}  

\begin{figure*}[!t]
	\centering \hspace{-2mm}
	\begin{subfigure}[!htb]{.22\textwidth} 
		\includegraphics[width=\textwidth, trim={0.5cm 0.2cm 0 0.2cm},clip]{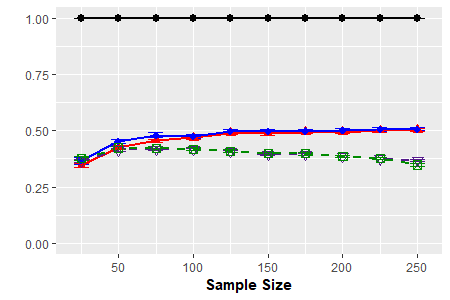}
		\caption{Prec:$p$=20,$d$=1}
	\end{subfigure}
	\begin{subfigure}[!htb]{.22\textwidth}
		\includegraphics[width=\textwidth, trim={0.5cm 0.2cm 0 0.2cm},clip]{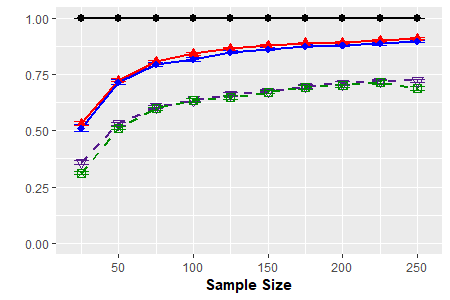}
		\caption{Reca:$p$=20,$d$=1}
	\end{subfigure}
	\begin{subfigure}[!htb]{.22\textwidth} 
		\includegraphics[width=\textwidth, trim={0.5cm 0.2cm 0 0.2cm},clip]{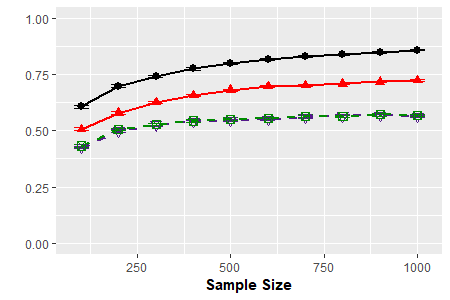}
		\caption{Prec:$p$=20,$d$=10}
	\end{subfigure}
	\begin{subfigure}[!htb]{.22\textwidth}
		\includegraphics[width=\textwidth, trim={0.5cm 0.2cm 0 0.2cm},clip]{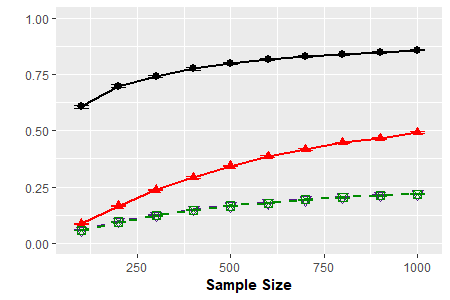}
		\caption{Reca:$p$=20,$d$=10}
	\end{subfigure}		
	\begin{subfigure}[!htb]{.07\textwidth}
		\includegraphics[width=\textwidth, trim={14cm 0 0.3cm 1.5cm},clip]{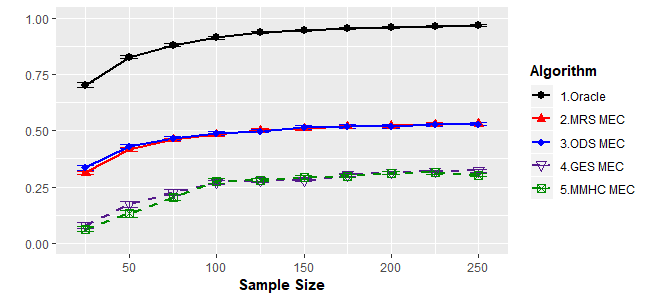}
	\end{subfigure}
	
	\begin{subfigure}[!htb]{.22\textwidth} 
		\includegraphics[width=\textwidth, trim={0.5cm 0.2cm 0 0.2cm},clip]{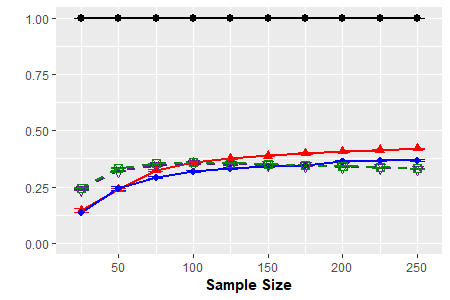}
		\caption{Prec:$p$=200,$d$=1}
	\end{subfigure}
	\begin{subfigure}[!htb]{.22\textwidth}
		\includegraphics[width=\textwidth, trim={0.5cm 0.2cm 0 0.2cm},clip]{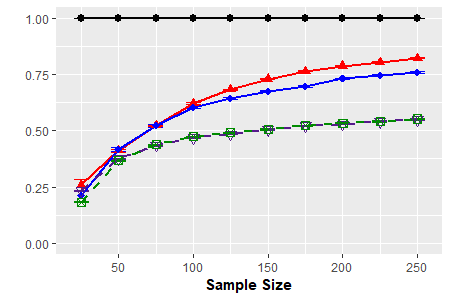}
		\caption{Reca:$p$=200,$d$=1}
	\end{subfigure}
	\begin{subfigure}[!htb]{.22\textwidth} 
		\includegraphics[width=\textwidth, trim={0.5cm 0.2cm 0 0.2cm},clip]{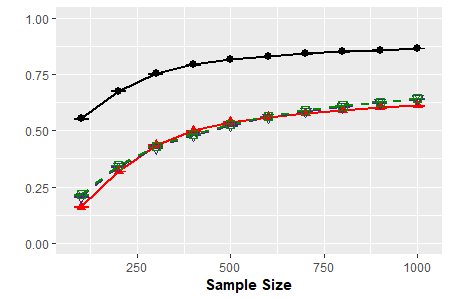}
		\caption{Prec:$p$=200,$d$=10}
	\end{subfigure}
	\begin{subfigure}[!htb]{.22\textwidth}
		\includegraphics[width=\textwidth, trim={0.5cm 0.2cm 0 0.2cm},clip]{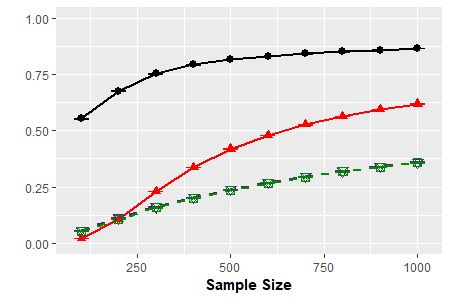}
		\caption{Reca:$p$=200,$d$=10}
	\end{subfigure}
	\begin{subfigure}[!htb]{.07\textwidth}
		\includegraphics[width=\textwidth, trim={14cm 0 0.3cm 1.5cm},clip]{plots/MECLegend.png}
	\end{subfigure}
	\captionsetup{format=hang}
	\caption{Comparison of the MRS algorithm to the oracle, ODS, GES, and MMHC algorithms in terms of the precision and recall for the MECs of Poisson SEMs with $p \in \{20,100\}$ and $d \in \{1, 10\}$.}
	\label{fig:result002}
\end{figure*}

Fig.~\ref{fig:result002} shows the analogous results for the recovery of MECs, in which the MRS and all comparison algorithms consistently  learn the true MECs. The performance of the MRS algorithm gets better as sample size increases or node size decreases. In addition, we can see that the MRS algorithm still recovers the MEC of the Poisson SEM better on average than the comparison methods. However, it must be pointed out that our MRS algorithm applies to Poisson SEMs~\eqref{eq:PDAGGLM}, while the ODS algorithm accurately learns sparse Poisson DAG models where arbitrary link functions are allowed. In addition, the GES and MMHC algorithms apply to more general classes of DAG models.

\subsection{Random Poisson DAG Models}

\label{SecRanPDAG}

\begin{figure*}[!t]
	\centering \hspace{-2mm}
	\begin{subfigure}[!htb]{.22\textwidth} 
		\includegraphics[width=\textwidth, trim={0.5cm 0.2cm 0 0.2cm},clip]{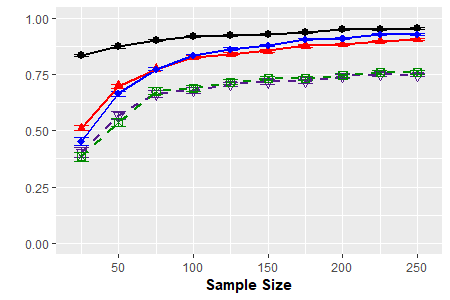}
		\caption{Prec:$p$=20,$d$=2}
	\end{subfigure}
	\begin{subfigure}[!htb]{.22\textwidth}
		\includegraphics[width=\textwidth, trim={0.5cm 0.2cm 0 0.2cm},clip]{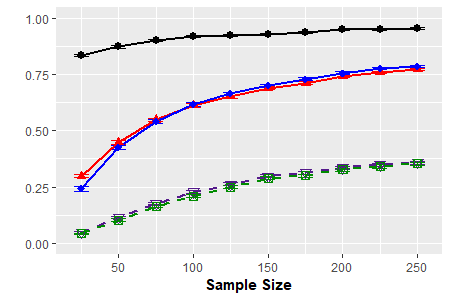}
		\caption{Reca:$p$=20,$d$=2}
	\end{subfigure}
	\begin{subfigure}[!htb]{.22\textwidth} 
		\includegraphics[width=\textwidth, trim={0.5cm 0.2cm 0 0.2cm},clip]{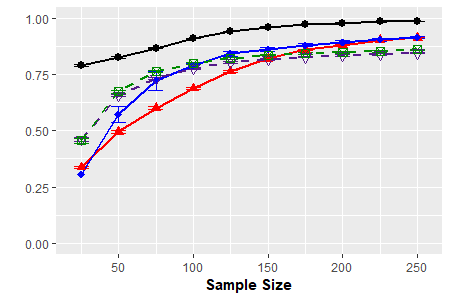}
		\caption{Prec:$p$=100,$d$=2}
	\end{subfigure}
	\begin{subfigure}[!htb]{.22\textwidth}
		\includegraphics[width=\textwidth, trim={0.5cm 0.2cm 0 0.2cm},clip]{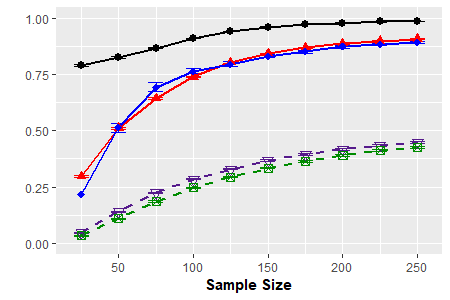}
		\caption{Reca:$p$=100,$d$=2}
	\end{subfigure}
	\begin{subfigure}[!htb]{.07\textwidth}
		\includegraphics[width=\textwidth, trim={14cm 0 0.3cm 1.5cm},clip]{plots/DAGLegend.png}
	\end{subfigure}
	\captionsetup{format=hang}
	\caption{Comparison of the MRS algorithm to the oracle, ODS, GES and MMHC algorithms in terms of the precision and recall for Poisson DAG models with $p \in \{20,100\}$, $d = 2$, and the identity link function.}
	\label{fig:result003} 
\end{figure*}

When the data are generated by a random Poisson DAG model~\eqref{eq:CondPDAGM} where $g_j$ is not the standard log link function, our MRS algorithm is not guaranteed to estimate the true directed acyclic graph and its ordering. Hence, an important question is how sensitive our method is to deviations from the link assumption. In this section, we empirically investigate this question. 

We generated the $200$ samples with the same procedure specified in Section~\ref{SecRanPSEM}, but with the indegree constraint $d = 2$, and except that identity link function $g_j( \eta ) = \eta$ and the range of parameters was $\theta_{jk} \in [-1.5, -0.5] \cup [0.5, 1.5]$. We note that the link function must be positive, but we allow the negative value of $\theta_{jk}$ by randomly choosing $\theta_{j} \in [1, 10]$. If any Poisson rate parameter is negative, we regenerated the parameters.

In Fig.~\ref{fig:result003}, we compare the MRS to state-of-the-art ODS, GES and MMHC algorithms for varying sample size $n \in \{25,50,...,250\}$, and node size $p \in \{20, 100\}$. Fig.~\ref{fig:result003} shows that the MRS algorithm consistently recovers the true graph, and hence, we can see that the MRS algorithm is not so sensitive to deviations from the link assumption. Comparing it to the ODS algorithm, the MRS algorithm shows slightly worse performance because the ODS algorithm is designed to learn general Poisson DAG models with any type of link functions. However, we can see that the MRS algorithm still performs better than the greedy search-based methods in both average precision and recall. 

\subsection{Random Poisson and Truncated Poisson Markov Random Fields}

\begin{figure*}[!t]
	\centering \hspace{-2mm}
	\begin{subfigure}[!htb]{.22\textwidth} 
		\includegraphics[width=\textwidth, trim={0.5cm 0.2cm 0 0.2cm},clip]{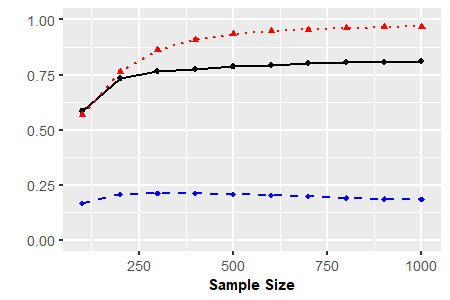}
		\caption{Poisson: Prec}
	\end{subfigure}
	\begin{subfigure}[!htb]{.22\textwidth}
		\includegraphics[width=\textwidth, trim={0.5cm 0.2cm 0 0.2cm},clip]{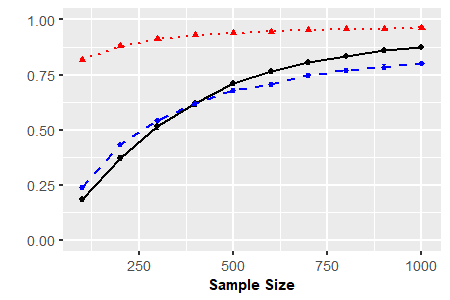}
		\caption{Poisson: Reca}
	\end{subfigure}
	\begin{subfigure}[!htb]{.22\textwidth} 
		\includegraphics[width=\textwidth, trim={0.5cm 0.2cm 0 0.2cm},clip]{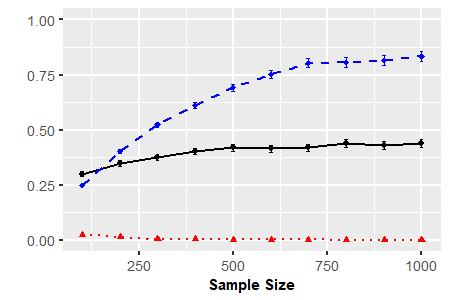}
		\caption{Truncated: Prec}
	\end{subfigure}
	\begin{subfigure}[!htb]{.22\textwidth}
		\includegraphics[width=\textwidth, trim={0.5cm 0.2cm 0 0.2cm},clip]{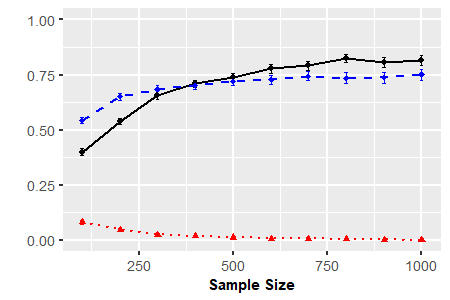}
		\caption{Truncated: Reca}
	\end{subfigure}
	\begin{subfigure}[!htb]{.07\textwidth}
	\includegraphics[width=\textwidth, trim={20.2cm 0 0.3cm 1.5cm},clip]{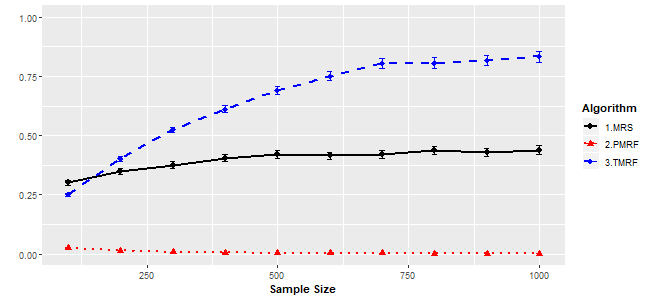}
	\end{subfigure}
	\captionsetup{format=hang}
	\caption{Comparison of the MRS algorithm to the Poisson MRF learning (PMRF) and truncated Poisson MRF learning (TMRF) algorithms in terms of the precision and recall for undirected edges of random 20-nodes Poisson MRFs and truncated Poisson MRFs with $d_m = 5$, and $R = 100$.}
	\label{fig:result005} 
\end{figure*}

When samples are generated by a Poisson or truncated Poisson MRF, our MRS algorithm is not guaranteed to find the true dependence relationships of variables. Hence, it is also important to investigate how well our algorithm recovers undirected edges when multivariate count data is from an MRF. In this section, we compare our MRS algorithm to state-of-the-art Poisson MRF (PMRF) and truncated Poisson MRF learning (TMRF) algorithms~\citep{yang2013poisson,yang2015graphical,inouye2017review} when multivariate count data is from Poisson MRFs and truncated Poisson MRFs, respectively. We used the R package XMRF~\citep{wan2016xmrf} for truncated Poisson MRFs.

We generated $100$ samples of 20-nodes random Poisson MRF and truncated Poisson MRF with the randomly generated underlying undirected graphs, respectively. For Poisson MRFs, we set the maximum Markov blanket $d_m = 5$ and the non-zero parameters in Equation~\eqref{eq:PoissonMRF} was generated uniformly at random in the range $\theta_{j} \in [1, 2]$, but we fixed $\theta_{jk} = -0.1$ for all $j \in V$. This is a similar setting used in \cite{yang2015graphical}. For truncated Poisson MRFs, we set $d_m = 5$, $\theta_{j} = 0$, $\theta_{jk} = 0.1$, and the truncation level is $R = 100$, meaning that all samples are less than 100 (see details in Equation 3 of \citealp{yang2013poisson}). In terms of the choice of regularization parameters for the MRS and PMRF algorithms, we used five-fold cross validation as we used in Section~\ref{SecRanPSEM}. For the TMRF algorithm, we set  the regularization parameters to 0.1 since this value seems to work well.

Fig.~\ref{fig:result005} compares the MRS algorithm to state-of-the-art PMRF and TMRF algorithms in terms of recovering undirected edges by varying sample size $n \in \{100, 200,..., 1000\}$. For a fair comparison, we used the skeleton of the estimated MEC via the MRS algorithm, because our algorithm returns a DAG. As we can see in Fig.~\ref{fig:result005}, the MRS algorithm consistently finds the true edges from both Poisson MRF and truncated Poisson MRF samples. Hence, we empirically verify that the MRS algorithm can recover some dependence relationships of variables even if samples are from Poisson or truncated Poisson MRFs.

Fig.~\ref{fig:result005} also shows that the MRS algorithm performs significantly worse than the comparison PMRF and TMRF algorithm, on average, when samples are from Poisson MRFs and truncated Poisson MRFs, respectively. It is an expected result because the PMRF and TMRF algorithms are for learning Poisson MRFs and truncated MRFs, while our algorithm is for Poisson SEMs. However, it is worth noting that the TMRF algorithm seems not to work on average when samples are from a Poisson MRF in our setting. It is mainly because the TMRF algorithm is for learning truncated Poisson MRFs, not Poisson MRFs. We emphasize that, in another setting where $\theta_j$ is fixed to 1, the TMRF algorithm works much better. It is also worth noting that the PMRF algorithm seems not to recover any undirected edges when samples are from a truncated Poisson MRF. It can be clearly explained by the fact that the PMRF algorithm cannot capture the positive dependencies, however all parameters are positive in our setting. 

\subsection{Computational Complexity}

\begin{figure*}[!t]
	\centering
	\begin{subfigure}[!htb]{.30\textwidth} 
		\includegraphics[width=\textwidth, height = 2.5cm, trim={0.0cm 0.2cm 0 0.2cm},clip]{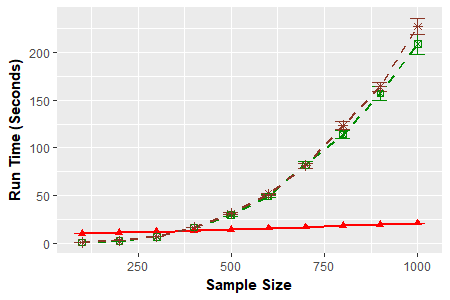}
		\caption{$p=100, d=5$}
	\end{subfigure}
	\begin{subfigure}[!htb]{.30\textwidth} 
		\includegraphics[width=\textwidth, height = 2.5cm, trim={0.0cm 0.2cm 0.2cm 0.2cm},clip]{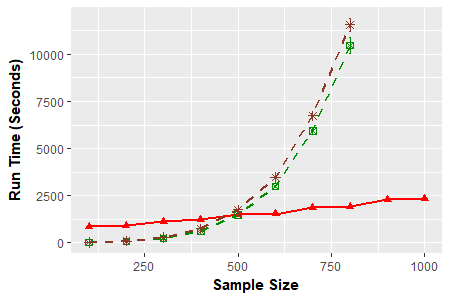}
		\caption{$p=500, d=5$}
	\end{subfigure}
	\begin{subfigure}[!htb]{.30\textwidth} 
		\includegraphics[width=\textwidth, height = 2.5cm, trim={0.0cm 0.2cm 0.2cm 0.2cm},clip]{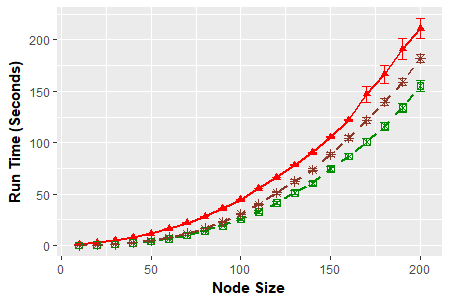}
		\caption{$n=500, d=5$}
	\end{subfigure}	
	\begin{subfigure}[!htb]{.07\textwidth}
		\includegraphics[width=\textwidth, trim={9.5cm 0 0.3cm 1.5cm},clip]{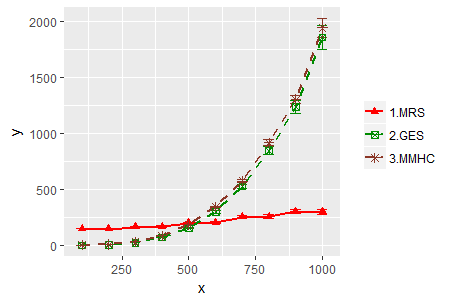}
	\end{subfigure}
	\captionsetup{format=hang}
	\caption{Comparison of the MRS algorithm to the GES and MMHC algorithms in terms of the running time with respect to node size $p$ and sample size $n$}
	\label{fig:result004} 
\end{figure*} 

Fig.~\ref{fig:result004} compares the run-time of the MRS, GES, and MMHC algorithms for learning Poisson SEMs with indegree $d = 5$ by varying sample size $n \in \{100, 200, ... ,1000\}$ with fixed node size $p \in \{100, 500\}$, and varying node size $p \in \{10, 20, ..., 200\}$ with fixed sample size $n = 500$. Fig.~\ref{fig:result004} supports the worst case computational complexity $O(np^3)$ discussed in Section~\ref{SecComp}. In addition, it shows that the MRS algorithm is significantly faster than the greedy search-based GES and MMHC algorithms when a sample size is large ($n > 500$). 

\section{Real Multivariate Count Data: MLB Statistics}

\label{SecReal}

We now apply the MRS algorithm and state-of-the-art ODS and MMHC algorithms to a simple data set that involves multivariate count data that models baseball statistics for Major League Baseball (MLB) players during the 2003 season. To the best of our knowledge, our MRS algorithm is the only algorithm that provides a reliable and scalable approach to non-sparse DAG learning with multivariate count data although it is under strong assumptions. In particular, other approaches, such as PC, MMHC, and approaches based on conditional independence testing, suffer severely from the fact that we are dealing with count variables where the number of discrete states is potentially infinite. In addition, ODS algorithm cannot deal with a non-sparse graph such as a graph containing hub nodes. Lastly, both Poisson MRF and truncated Poisson MRF may provide an extremely complicated graph because it connects all pairs of nodes having a common child like a moralized graph.


Our original data set consists of 800 MLB player salary and batting statistics from the 2003 season (see R package Lahman in \citealp{lahmanRpackage} for detailed information). The data set contains 23 covariates: Salary, Number of: Games Played (G), At Bats (AB), Runs (R), Hits (H), Doubles (X2B), Triples (X3B), Home Runs (HR), Runs Batted In (RBI), Stolen Bases (SB), times Caught Stealing (CS), Bases on Balls (BB), Strikeouts (SO), Intentional Walks (IBB), times Hit by Pitch (HBP), Sacrifice Hits (SH), Sacrifice Flies (SF), and times Grounded into Double Plays (GIDP), plus Player ID, Year ID, Stint, Team ID, and League ID. However, we eliminated Player ID, Year ID, Stint, Team ID, and League ID because our focus is to find the directional or causal relationships between salary and batting statistics. In addition, we only considered players in the top 25\% in terms of the number of games played, because the baseball statistics relationships from players who played only a few games could be uncertain. Therefore, the data set we considered contained 18 variables and 200 observations. 

We assumed each node to a conditional distribution given its parents is Poisson because most MLB statistics, except for salary, reflect the number of successes or attempts that were counted during the season. Hence, we applied the MRS algorithm for Poisson DAG models with leave-one-out cross validation to choose the tuning parameters, and we chose the largest value where mean squared error is within 2.5 standard error of the minimum mean squared error, because we prefer a sparse graph containing only legitimate edges.
\begin{figure}
	\centering
	\includegraphics[width=0.60\linewidth, trim={6cm 11cm 6cm 10cm},clip]{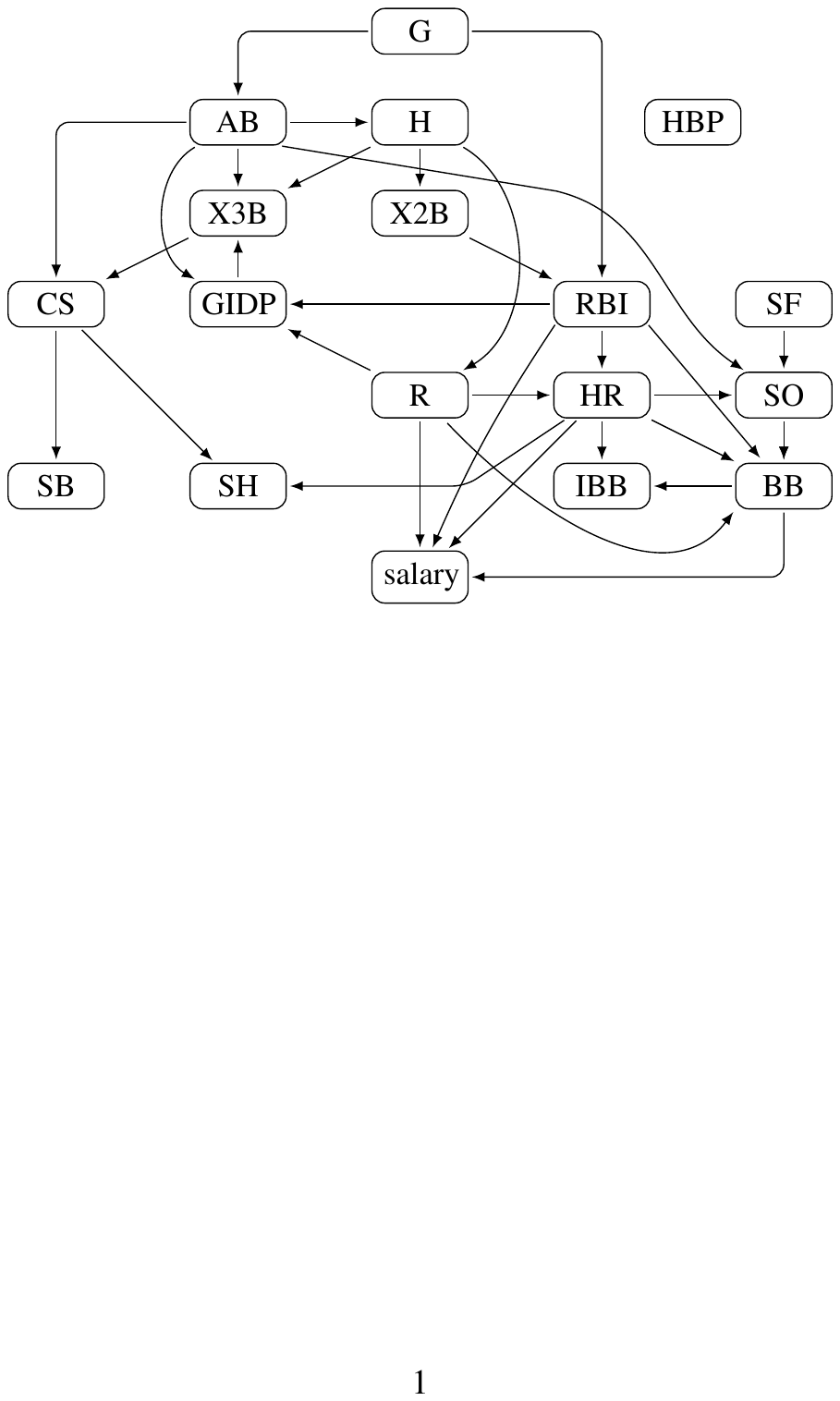}
	\captionsetup{format=hang}
	\caption{MLB player statistics directed graph estimated by the MRS algorithm for Poisson DAG models.}
	\label{fig:MLBDAG}
\end{figure}

Fig.~\ref{fig:MLBDAG} shows the directed graph estimated by our MRS algorithm. The estimated graph reveals clear causal/directional relationships between batting statistics. This makes sense, because players with larger numbers of HR, BB, RBI, and/or R have a better salary. The more games played, or the more batting chances, the higher H, BB, SO, RBI, and other statistics. Moreover, the higher the total number of hits, the more X2Bs, X3Bs, Rs and the fewer SOs. Players with more home runs and base on balls get intentional walks more frequently. Lastly, the more stolen bases are attempted, the more they are caught stealing, because there is no success without failure.

We acknowledge that our proposed DAG model returns many errors due to restrictive assumptions that are not completely satisfied by the real data. However, the benefit is best seen by comparing MRS to other DAG learning approaches and an undirected graphical model for multivariate count data. In particular, we applied Poisson undirected graphical models~\citep{yang2015graphical} in which $\ell_1$-regularized Poisson regressions are applied. We provide the estimated undirected graph with the largest tuning parameter where mean squared of error is within 2.5 standard error of the minimum mean squared error. The estimated undirected graph in Fig.~\ref{fig:MLBDAG2} (left side) shows that a lot of nodes are connected by edges, that many edges are unexplainable, and that some legitimate edges are missing (e.g., [H, X3B], [SB, CS] are not connected), because the Poisson undirected graphical model only permits negative conditional relationships, whereas most variables are positively correlated. Hence, it may not be useful to understand the relationships between MLB statistics. 

We also compared the MMHC algorithm. As discussed, the MMHC algorithm does not guarantee to find a complete directed graph, and prefers a sparser graph when the faithfulness assumption is violated, which often arises in finite sample settings~\citep{uhler2013geometry}. Hence, the estimated directed graph in Fig.~\ref{fig:MLBDAG2} (right side) is extremely sparse, with only four directed edges: [H, HR], [SO, HR], [HR, RBI], and [SF, RBI]. Lastly, ODS algorithm failed to be implemented as expected because of some hub nodes such as the number of games, at bats, and runs batted in. 

\begin{figure}
	\begin{center}
		\includegraphics[width=0.48\linewidth, trim={5.8cm 11cm 6cm 10cm},clip]{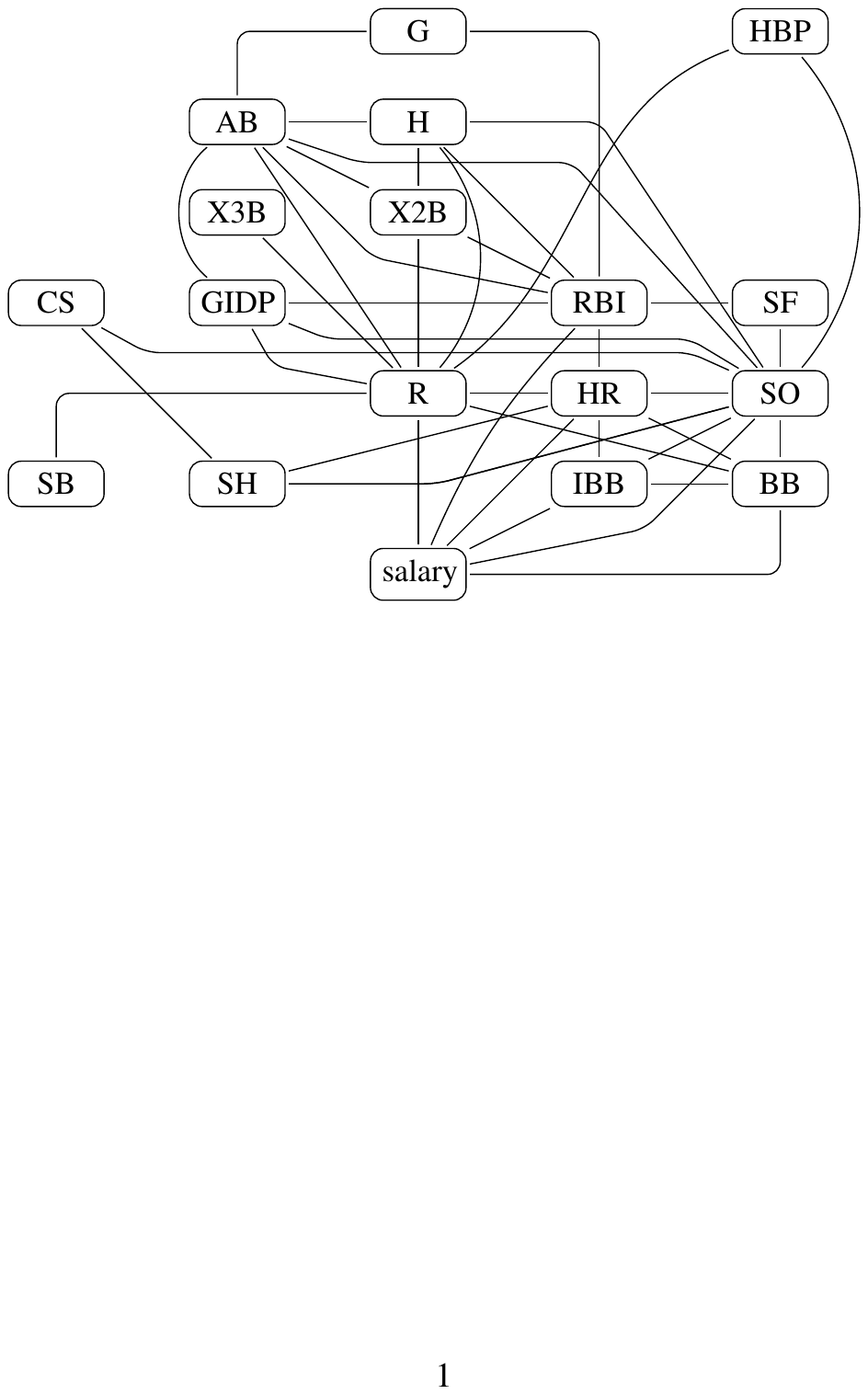}
		\includegraphics[width=0.48\linewidth, trim={5.8cm 11cm 6cm 10cm},clip]{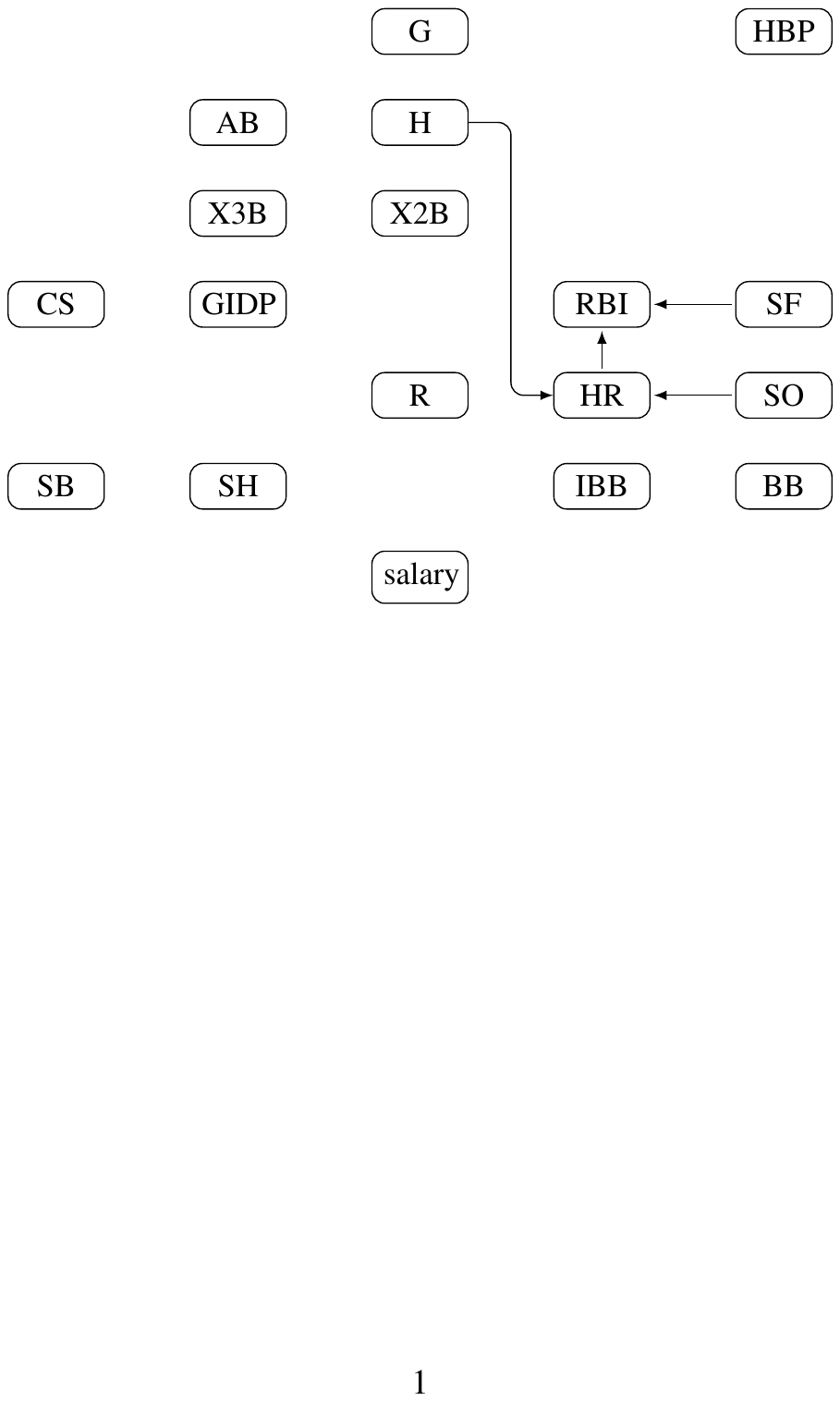}
		\captionsetup{format=hang}
		\caption{MLB player statistics undirected graph estimated by $\ell_1$-penalized likelihood regression (left) and a directed acyclic graph estimated by the MMHC algorithm (right).}
		\label{fig:MLBDAG2}
	\end{center}	
\end{figure}

Since our method is the first identifiability result for the strongly correlated count data when variables are directional/causal relationships and there exist hub variables, to the best of our knowledge, our method better identifies the directional/causal relationships between MLB statistics. However, we acknowledge that, like most other DAG-learning approaches, very strong assumptions, such as dependency, incoherence, are required for reliable recovery. 

\section{Future Works}

\label{SecFuture}


Several topics remain for future works. Although our assumptions are similar to the assumptions in the previous works of $\ell_1$-regularized Poisson regression, our assumptions could be very restrictive. In addition, they cannot be confirmed from data. However, we conjecture that the assumptions are satisfied with a high probability under mild conditions, and one may be able to prove this. In addition, it is an important problem of finding the minimax rate of the Poisson DAG models, and it should be investigated in the future. Lastly, it would be also interesting to explore if our idea can be applied to other structural equation models with Binomial, Negative Binomial, Exponential, and Gamma distributions. We believe that our node-wise $\ell_1$-regularized based approach can be extended to the identifiable linear SEMs under some suitable conditions.

\bibliographystyle{IEEEtran}
\bibliography{PSEM_reference}

\begin{thebibliography}{10}
\providecommand{\url}[1]{#1}
\csname url@samestyle\endcsname
\providecommand{\newblock}{\relax}
\providecommand{\bibinfo}[2]{#2}
\providecommand{\BIBentrySTDinterwordspacing}{\spaceskip=0pt\relax}
\providecommand{\BIBentryALTinterwordstretchfactor}{4}
\providecommand{\BIBentryALTinterwordspacing}{\spaceskip=\fontdimen2\font plus
\BIBentryALTinterwordstretchfactor\fontdimen3\font minus
  \fontdimen4\font\relax}
\providecommand{\BIBforeignlanguage}[2]{{%
\expandafter\ifx\csname l@#1\endcsname\relax
\typeout{** WARNING: IEEEtran.bst: No hyphenation pattern has been}%
\typeout{** loaded for the language `#1'. Using the pattern for}%
\typeout{** the default language instead.}%
\else
\language=\csname l@#1\endcsname
\fi
#2}}
\providecommand{\BIBdecl}{\relax}
\BIBdecl

\bibitem{kephart1991directed}
J.~O. Kephart and S.~R. White, ``Directed-graph epidemiological models of
  computer viruses,'' in \emph{Research in Security and Privacy, 1991.
  Proceedings., 1991 IEEE Computer Society Symposium on}.\hskip 1em plus 0.5em
  minus 0.4em\relax IEEE, 1991, pp. 343--359.

\bibitem{friedman2000using}
N.~Friedman, M.~Linial, I.~Nachman, and D.~Pe'er, ``Using bayesian networks to
  analyze expression data,'' \emph{Journal of computational biology}, vol.~7,
  no. 3-4, pp. 601--620, 2000.

\bibitem{doya2007bayesian}
K.~Doya, \emph{Bayesian brain: Probabilistic approaches to neural
  coding}.\hskip 1em plus 0.5em minus 0.4em\relax MIT press, 2007.

\bibitem{peters2014identifiability}
J.~Peters and P.~B{\"u}hlmann, ``Identifiability of gaussian structural
  equation models with equal error variances,'' \emph{Biometrika}, vol. 101,
  no.~1, pp. 219--228, 2014.

\bibitem{frydenberg1990chain}
M.~Frydenberg, ``The chain graph markov property,'' \emph{Scandinavian Journal
  of Statistics}, pp. 333--353, 1990.

\bibitem{heckerman1995learning}
D.~Heckerman, D.~Geiger, and D.~M. Chickering, ``Learning {B}ayesian networks:
  The combination of knowledge and statistical data,'' \emph{Machine learning},
  vol.~20, no.~3, pp. 197--243, 1995.

\bibitem{spirtes2000causation}
P.~Spirtes, C.~N. Glymour, and R.~Scheines, \emph{Causation, prediction, and
  search}.\hskip 1em plus 0.5em minus 0.4em\relax MIT press, 2000.

\bibitem{chickering2003optimal}
D.~M. Chickering, ``Optimal structure identification with greedy search,''
  \emph{The Journal of Machine Learning Research}, vol.~3, pp. 507--554, 2003.

\bibitem{tsamardinos2003towards}
I.~Tsamardinos and C.~F. Aliferis, ``Towards principled feature selection:
  Relevancy, filters and wrappers,'' in \emph{Proceedings of the ninth
  international workshop on Artificial Intelligence and Statistics}.\hskip 1em
  plus 0.5em minus 0.4em\relax Morgan Kaufmann Publishers: Key West, FL, USA,
  2003.

\bibitem{zhang2016three}
J.~Zhang and P.~Spirtes, ``The three faces of faithfulness,'' \emph{Synthese},
  vol. 193, no.~4, pp. 1011--1027, 2016.

\bibitem{uhler2013geometry}
C.~Uhler, G.~Raskutti, P.~B{\"u}hlmann, and B.~Yu, ``Geometry of the
  faithfulness assumption in causal inference,'' \emph{The Annals of
  Statistics}, pp. 436--463, 2013.

\bibitem{chickering1994learning}
D.~M. Chickering, D.~Geiger, D.~Heckerman \emph{et~al.}, ``Learning bayesian
  networks is np-hard,'' Citeseer, Tech. Rep., 1994.

\bibitem{chickering1996learning}
D.~M. Chickering, ``Learning bayesian networks is np-complete,'' in
  \emph{Learning from data}.\hskip 1em plus 0.5em minus 0.4em\relax Springer,
  1996, pp. 121--130.

\bibitem{tsamardinos2006max}
I.~Tsamardinos, L.~E. Brown, and C.~F. Aliferis, ``The max-min hill-climbing
  bayesian network structure learning algorithm,'' \emph{Machine learning},
  vol.~65, no.~1, pp. 31--78, 2006.

\bibitem{shimizu2006linear}
S.~Shimizu, P.~O. Hoyer, A.~Hyv{\"a}rinen, and A.~Kerminen, ``A linear
  non-{G}aussian acyclic model for causal discovery,'' \emph{The Journal of
  Machine Learning Research}, vol.~7, pp. 2003--2030, 2006.

\bibitem{hoyer2009nonlinear}
P.~O. Hoyer, D.~Janzing, J.~M. Mooij, J.~Peters, and B.~Sch{\"o}lkopf,
  ``Nonlinear causal discovery with additive noise models,'' in \emph{Advances
  in Neural Information Processing Systems}, 2009, pp. 689--696.

\bibitem{peters2012identifiability}
J.~Peters, J.~Mooij, D.~Janzing, and B.~Sch{\"o}lkopf, ``Identifiability of
  causal graphs using functional models.''\hskip 1em plus 0.5em minus
  0.4em\relax Corvallis, OR, USA: AUAI Press, Jul. 2011, pp. 589--598.

\bibitem{park2015learning}
G.~Park and G.~Raskutti, ``Learning large-scale poisson dag models based on
  overdispersion scoring,'' in \emph{Advances in Neural Information Processing
  Systems}, 2015, pp. 631--639.

\bibitem{park2017learning}
------, ``Learning quadratic variance function (qvf) dag models via
  overdispersion scoring (ods),'' \emph{Journal of Machine Learning Research},
  vol.~18, no. 224, pp. 1--44, 2018.

\bibitem{ghoshal2018learning}
A.~Ghoshal and J.~Honorio, ``Learning linear structural equation models in
  polynomial time and sample complexity,'' in \emph{Proceedings of the
  Twenty-First International Conference on Artificial Intelligence and
  Statistics}, 2018, pp. 1466--1475.

\bibitem{park2019identifiability}
G.~Park and H.~Park, ``Identifiability of generalized hypergeometric
  distribution (ghd) directed acyclic graphical models,'' in \emph{Proceedings
  of Machine Learning Research}, 2019, pp. 158--166.

\bibitem{shimizu2011directlingam}
S.~Shimizu, T.~Inazumi, Y.~Sogawa, A.~Hyv{\"a}rinen, Y.~Kawahara, T.~Washio,
  P.~O. Hoyer, and K.~Bollen, ``Directlingam: A direct method for learning a
  linear non-gaussian structural equation model,'' \emph{Journal of Machine
  Learning Research}, vol.~12, no. Apr, pp. 1225--1248, 2011.

\bibitem{buhlmann2014cam}
P.~B{\"u}hlmann, J.~Peters, J.~Ernest \emph{et~al.}, ``Cam: Causal additive
  models, high-dimensional order search and penalized regression,'' \emph{The
  Annals of Statistics}, vol.~42, no.~6, pp. 2526--2556, 2014.

\bibitem{ghoshal2017learning}
A.~Ghoshal and J.~Honorio, ``Learning identifiable gaussian bayesian networks
  in polynomial time and sample complexity,'' in \emph{Advances in Neural
  Information Processing Systems}, 2017, pp. 6457--6466.

\bibitem{drton2018causal}
M.~Drton, W.~Chen, and Y.~S. Wang, ``On causal discovery with equal variance
  assumption,'' \emph{arXiv preprint arXiv:1807.03419}, 2018.

\bibitem{meinshausen2006high}
N.~Meinshausen and P.~B{\"u}hlmann, ``High-dimensional graphs and variable
  selection with the lasso,'' \emph{The Annals of Statistics}, pp. 1436--1462,
  2006.

\bibitem{wainwright2006high}
M.~J. Wainwright, J.~D. Lafferty, and P.~K. Ravikumar, ``High-dimensional
  graphical model selection using $\ell_1$-regularized logistic regression,''
  in \emph{Advances in Neural Information Processing Systems}, 2006, pp.
  1465--1472.

\bibitem{ravikumar2011high}
P.~Ravikumar, M.~J. Wainwright, G.~Raskutti, B.~Yu \emph{et~al.},
  ``High-dimensional covariance estimation by minimizing $\ell_1$-penalized
  log-determinant divergence,'' \emph{Electronic Journal of Statistics},
  vol.~5, pp. 935--980, 2011.

\bibitem{yang2015graphical}
E.~Yang, P.~Ravikumar, G.~I. Allen, and Z.~Liu, ``Graphical models via
  univariate exponential family distributions.'' \emph{Journal of Machine
  Learning Research}, vol.~16, no.~1, pp. 3813--3847, 2015.

\bibitem{ghoshal2017information}
A.~Ghoshal and J.~Honorio, ``Information-theoretic limits of bayesian network
  structure learning,'' in \emph{Artificial Intelligence and Statistics}, 2017,
  pp. 767--775.

\bibitem{lauritzen1996graphical}
S.~L. Lauritzen, \emph{Graphical models}.\hskip 1em plus 0.5em minus
  0.4em\relax Oxford University Press, 1996.

\bibitem{spirtes1995directed}
P.~Spirtes, ``Directed cyclic graphical representations of feedback models,''
  in \emph{Proceedings of the Eleventh conference on Uncertainty in artificial
  intelligence}.\hskip 1em plus 0.5em minus 0.4em\relax Morgan Kaufmann
  Publishers Inc., 1995, pp. 491--498.

\bibitem{loh2014high}
P.-L. Loh and P.~B{\"u}hlmann, ``High-dimensional learning of linear causal
  networks via inverse covariance estimation,'' \emph{The Journal of Machine
  Learning Research}, vol.~15, no.~1, pp. 3065--3105, 2014.

\bibitem{friedman2009glmnet}
J.~Friedman, T.~Hastie, and R.~Tibshirani, ``glmnet: Lasso and elastic-net
  regularized generalized linear models,'' \emph{R package version}, vol.~1,
  no.~4, 2009.

\bibitem{friedman2010regularization}
------, ``Regularization paths for generalized linear models via coordinate
  descent,'' \emph{Journal of statistical software}, vol.~33, no.~1, p.~1,
  2010.

\bibitem{jia2017sparse}
J.~Jia, F.~Xie, and L.~Xu, ``Sparse poisson regression with penalized weighted
  score function,'' \emph{arXiv preprint arXiv:1703.03965}, 2017.

\bibitem{yang2013poisson}
E.~Yang, P.~K. Ravikumar, G.~I. Allen, and Z.~Liu, ``On poisson graphical
  models,'' in \emph{Advances in Neural Information Processing Systems}, 2013,
  pp. 1718--1726.

\bibitem{inouye2017review}
D.~I. Inouye, E.~Yang, G.~I. Allen, and P.~Ravikumar, ``A review of
  multivariate distributions for count data derived from the poisson
  distribution,'' \emph{Wiley Interdisciplinary Reviews: Computational
  Statistics}, vol.~9, no.~3, 2017.

\bibitem{wan2016xmrf}
Y.-W. Wan, G.~I. Allen, Y.~Baker, E.~Yang, P.~Ravikumar, M.~Anderson, and
  Z.~Liu, ``Xmrf: an r package to fit markov networks to high-throughput
  genetics data,'' \emph{BMC systems biology}, vol.~10, no.~3, p.~69, 2016.

\bibitem{lahmanRpackage}
\BIBentryALTinterwordspacing
M.~Friendly, \emph{Lahman: Sean 'Lahman' Baseball Database}, 2017, r package
  version 6.0-0. [Online]. Available:
  \url{https://CRAN.R-project.org/package=Lahman}
\BIBentrySTDinterwordspacing

\end{thebibliography}

\newpage

\appendix

\section{Proof for Proposition~\ref{prop:moments ratio} }

\label{SecPropProof1}

\begin{proof}
	For a notational simplicity, we define a moments related function for Poisson, $f(\mu) = \mu + \mu^2$ for $\mu >0$. Then, for any node $j \in V$, any non-empty set $S_j \subset \nd(j)$, 
	\begin{eqnarray*}
		\E( X_j^2 \mid S_j ) &=& \E( \E(X_j^2 \mid X_{\pa(j)} ) \mid S_j ) = \E( f( \E(X_j \mid X_{\pa(j)} ) ) \mid S_j ). 
	\end{eqnarray*}
	
	Using the Jensen’s inequality and $f(\cdot)$ is convex, we have,  
	\begin{equation*}
	\E( f( \E(X_j \mid X_{\pa(j)} ) ) \mid S_j ) \geq f( \E( \E(X_j \mid X_{\pa(j)} ) \mid S_j  ) ) =  f( \E( X_j  \mid S_j) ). 
	\end{equation*}
	
	Using the fact that $\E(X_j \mid X_{\pa(j)} ) = g_j(X_{\pa(j)})$ and it is non-degenerated by definition, the equality only holds when $S_j$ contains all parents of $j$, $\pa(j) \subset S_j \subset \nd(j)$.
	
	By restating the above inequality, we have, 
	$$
	\E( X_j^2 \mid S_j ) - \E( X_j  \mid S_j) -   \E( X_j  \mid S_j)^2  \geq 0.
	$$
	
	In addition, by taking the expectations, we have,
	$$
	\E( X_j^2 ) - \E\left(   \E( X_j \mid X_{S_j}) + \E( X_j \mid X_{S_j})^2 \right) \geq 0.
	$$
	
	Since $j$ and $S_j$ are arbitrary, we complete the first part of the proof. 
	
	Now, we prove that $ \E( X_j^2) \geq  \E\left(   \E( X_j \mid X_{S_j}) + \E( X_j \mid X_{S_j})^2 \right) $ is equivalent to $ \E( \var( \E(X_j \mid \pa(j) ) \mid X_{S_j} ) ) \geq 0$. 
	Using the total variance decomposition, we have, 
	\begin{eqnarray*}
		\E( \var(X_j \mid X_{S_j} ) ) = \E( \E( \var(X_j \mid X_{\pa(j)} ) \mid X_{S_j}) ) + \E( \var( \E( X_j \mid X_{\pa(j)} ) \mid X_{S_j}) ). 
	\end{eqnarray*}
	
	Using the fact that the conditional distribution, $X_j \mid X_{\pa(j)}$, is Poisson where its mean and variance are equal, we have, 
	$$
	\E( \var( \E( X_j \mid X_{\pa(j)} ) \mid X_{S_j}) ) = \E( \var(X_j \mid X_{S_j} ) ) - \E(X_j).
	$$
	
	Using the definition of the conditional variance, we have,
	$$
	\E( \var(X_j \mid X_{S_j}) ) - \E(X_j) = \E( X_j^2 ) - \E\left(   \E( X_j \mid X_{S_j}) + \E( X_j \mid X_{S_j})^2 \right). 
	$$ 
	
	Therefore, we complete the proof. 
\end{proof}

\section{Proof for Theorem~\ref{ThmMainIdentifiability} }

\label{SecMainThmProof0}

\begin{proof}
	Without loss of generality, we assume the true ordering is unique, and $\pi = (\pi_1,...,\pi_p)$. For simplicity, we define $X_{1:j} = (X_{\pi_1},X_{\pi_2},\cdots,X_{\pi_j})$ and $X_{1:0} = \emptyset$. In addition, we define a moments related function, $f(\mu) = \mu + \mu^2$. 
	
	We now prove identifiability of Poisson DAG models using mathematical induction: 
	
	\textbf{Step (1)} For the first step $\pi_1$, using Proposition~\ref{prop:moments ratio}, we have 
	$\E( X_{\pi_1}^2 ) =  \E( f(\E( X_{\pi_1})) )$, while for any node $j \in V \setminus \{\pi_1\}$: $\E( X_j^2 ) > \E( f( \E( X_j) ) ).$
	
	Hence, we can determine $\pi_1$ as the first element of the causal ordering.
	
	\textbf{Step (m-1)} For the $(m-1)^{th}$ element of the ordering, assume that the first $m-1$ elements of the ordering and their parents are correctly estimated. 
	
	\textbf{Step (m)} Now, we consider the $m^{th}$ element of the causal ordering and its parents. It is clear that $\pi_{m}$ achieves $\E( X_{\pi_m}^2) = \E( f(\E( X_{\pi_m} \mid X_{1:(m-1)})) )$. However, for $j \in \{ \pi_{m+1},\cdots, \pi_{p} \}$, $\E( X_j^2 ) >  \E( f( \E( X_j \mid X_{1:(m-1)}) ) )$ by Proposition~\ref{prop:moments ratio}. Hence, we can estimate a true $m^{th}$ component of the ordering $\pi_m$. 
	
	In terms of the parent search, it is clear that by conditional independence relations naturally encoded by factorization~\eqref{eq:factorization}
	$
	\E( X_{\pi_m}^2) = \E( f(\E( X_{\pi_m} \mid X_{1:(m-1)})) ) =  \E( f(\E( X_{\pi_m} \mid X_{\pa(\pi_m)})) ).
	$ 
	Hence, we can also choose the minimum conditioning set from among $X_{1:(m-1)}$ as the parents of $\pi_m$ such that the above moments relation holds. By mathematical induction, this completes the proof. 
	
\end{proof}

\section{Proof for Theorem~\ref{ThmMainTheorm}: Parents Recovery}

\label{SecThmMainProof1}

\begin{proof}
We provide the proof for Theorem~\ref{ThmMainTheorm} using the \emph{primal-dual witness method} that is also used many other works \citep{ meinshausen2006high, wainwright2006high, ravikumar2011high, yang2015graphical}. In this proof, we show in Appendix~\ref{SecThmMainProof1}, the error probability for the recovery of the parents of a node $\pi_j$ from among all the nodes given the partial ordering $(\pi_{1},\pi_{2},...,\pi_{j-1})$ via $\ell_1$-regularized regression. In Appendix~\ref{SecMainThmProof2}, the error bounds for the recovery of the ordering both via $\ell_1$-regularized regression. 



Without loss of generality, let the true ordering be $\pi = (1,2,...,p)$, and hence, $\pi_{1:j} = (\pi_1, \pi_2, ..., \pi_j) = (1,2,...,j)$. For ease of notation, $[\cdot]_{k}$ and $[\cdot]_{S}$ denote parameters corresponding to variable $X_k$ and random vector $X_S$, respectively. In order to make the arguments easier to understand, we restate the negative log likelihood~\eqref{eq:likelihood} and related arguments. 

First, we define a new parameter vector $\theta_{S_j} \in \mathbb{R}^{|S_j|}$ without parameter $\theta_j$ corresponding to the node $j$ since the node $j$ is not penalized in regression problem~\eqref{eqn:theta_jS2}. Then, the conditional negative log-likelihood of the GLM for $X_j$ given $X_{S_j}$ can be written as: 
\begin{equation}
\label{l:ThetaSj}
\ell_j^{S_j}( \theta_{S_j}; X^{1:n} ) := \frac{1}{n} \sum_{i = 1}^{n} \left( -X_j^{(i)} \langle \theta_{S_j}, X_{S_j}^{(i)} \rangle + \exp\big( \langle \theta_{S_j}, X_{S_j}^{(i)} \rangle \big)  \right),
\end{equation}
where $\langle \cdot, \cdot \rangle$ is an inner product. 

We also define $\theta_{S_j}^* \in \mathbb{R}^{|S_j|}$ for Equation~\eqref{ThetaSj}:
\begin{equation}
\label{eq:ThetaS*}
\theta_{S_j}^* := \arg \min_{\theta \in \mathbb{R}^{|S_j|}} \E \left( - X_j( \langle \theta, X_{S_j} \rangle ) + \exp( \langle \theta, X_{S_j} \rangle )  \right).
\end{equation}

We define a set non-zero elements index of $\theta_{S_j}^{*}$ as in Equation~\eqref{eq:subparents}, 
$
T_j := \{k \in S_j \mid [\theta_{S_j}^{*}]_{k} \neq 0 \}
$
where $\theta_{S_j}^{*}$ is in Equation \eqref{eq:ThetaS*}.

The main goal of the proof is to find the unique minimizer of the following convex problem:
\begin{equation}
\label{eq:Objective}
\widehat{\theta}_{S_j} := \arg \min_{\theta \in \mathbb{R}^{ |S_j|  }  } \mathcal{L}_j( \theta, \lambda_j) = \arg \min_{\theta \in \mathbb{R}^{|S_j| }  } \{ \ell_j^{S_j} (\theta ; X^{1:n}) + \lambda_j \| \theta \|_1 \}.
\end{equation}

By setting the \emph{sub-differential} to $0$, $\widehat{\theta}_{S_j}$ satisfies the following condition: 
\begin{equation}
\label{eq:Contraint1}
\bigtriangledown_\theta \mathcal{L}_j^{S_j}( \widehat{\theta}_{S_j}, \lambda_j ) = \bigtriangledown_\theta \ell_j^{S_j}( \widehat{\theta}_{S_j} ; X^{1:n}) + \lambda_j \widehat{Z}_j^{S_j}  = 0
\end{equation}
where $\widehat{Z}_j^{S_j} \in \mathbb{R}^{|S_j|}$ and $[ \widehat{Z}_j^{S_j} ]_t = \mbox{sign}([\widehat{\theta}_{S_j}]_{t})$ if  $t \in T_j$, otherwise $[ \widehat{Z}_j^{S_j} ]_t < 1$.  

Lemma~\ref{lemma:uniqueness} directly follows from the prior work \citep{yang2015graphical}, where each node's conditional distribution is in the form of a generalized linear model. 

\begin{lemma}[Uniqueness of Solution, Lemma 8 in \citealp{yang2015graphical}]
	\label{lemma:uniqueness}
	Suppose that \\$| [ \widehat{Z}_j^{S_j} ]_t | < 1$ for $t \notin T_j$ in Equation~\eqref{eq:Contraint1}. Then, the solution $\widehat{\theta}_{S_j}$ of Equation ~\eqref{eq:Objective} satisfies $[\widehat{\theta}_{S_j}]_{t} = 0$ for all $t \notin T_j$. Furthermore, if the sub-matrix of Hessian matrix $Q_{T_j T_j}^{S_j}$ is invertible, then $\widehat{\theta}_{S_j}$ is unique. 
\end{lemma}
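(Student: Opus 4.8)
The plan is to run the standard primal–dual witness (PDW) argument, exactly in the spirit of Lemma~8 of \citet{yang2015graphical}, since each conditional model here is a Poisson GLM with the same convex structure. First I would fix the candidate pair $(\widehat\theta_{S_j},\widehat Z_j^{S_j})$ produced by the construction implicit in~\eqref{eq:Contraint1}: $\widehat\theta_{S_j}$ is an optimal primal solution of~\eqref{eq:Objective}, $[\widehat Z_j^{S_j}]_t=\mathrm{sign}([\widehat\theta_{S_j}]_t)$ on the active coordinates and $[\widehat Z_j^{S_j}]_t$ solves the stationarity identity off them, and by hypothesis $|[\widehat Z_j^{S_j}]_t|<1$ for $t\notin T_j$. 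Two facts I will use repeatedly are that $(\widehat\theta_{S_j},\widehat Z_j^{S_j})$ is then a genuine KKT pair and that, by construction, $\langle\widehat Z_j^{S_j},\widehat\theta_{S_j}\rangle=\|\widehat\theta_{S_j}\|_1$ (the subgradient is ``aligned'' with $\widehat\theta_{S_j}$ because $\widehat\theta_{S_j}$ vanishes off $T_j$).

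For the support claim, I would take an \emph{arbitrary} optimal $\widetilde\theta$ of~\eqref{eq:Objective} and bound the optimality gap from below. Convexity of $\ell_j^{S_j}$ gives $\ell_j^{S_j}(\widetilde\theta)\ge \ell_j^{S_j}(\widehat\theta_{S_j})+\langle \nabla\ell_j^{S_j}(\widehat\theta_{S_j}),\widetilde\theta-\widehat\theta_{S_j}\rangle$; substituting $\nabla\ell_j^{S_j}(\widehat\theta_{S_j})=-\lambda_j\widehat Z_j^{S_j}$ from~\eqref{eq:Contraint1}, using $\langle\widehat Z_j^{S_j},\widehat\theta_{S_j}\rangle=\|\widehat\theta_{S_j}\|_1$, and using $\mathcal{L}_j(\widetilde\theta)=\mathcal{L}_j(\widehat\theta_{S_j})$ collapses everything to $\lambda_j\big(\|\widetilde\theta\|_1-\langle\widehat Z_j^{S_j},\widetilde\theta\rangle\big)\le 0$. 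Since $\|\widehat Z_j^{S_j}\|_\infty\le 1$, every term $|\widetilde\theta_t|-[\widehat Z_j^{S_j}]_t\widetilde\theta_t$ is nonnegative, so each must be zero; for $t\notin T_j$ the \emph{strict} feasibility $|[\widehat Z_j^{S_j}]_t|<1$ then forces $\widetilde\theta_t=0$. This establishes $[\widehat\theta_{S_j}]_t=0$ for all $t\notin T_j$ and, more generally, that every minimizer is supported on $T_j$.

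For uniqueness, I would restrict~\eqref{eq:Objective} to the subspace $\{\theta:\theta_{T_j^c}=0\}$, on which all minimizers now live. The Hessian of $\ell_j^{S_j}$ in the $T_j$-coordinates at a point $\theta$ is $\frac1n\sum_{i=1}^n \exp(\langle\theta,X_{S_j}^{(i)}\rangle)\,X_{T_j}^{(i)}(X_{T_j}^{(i)})^{\top}$; because the exponential weights are strictly positive, this matrix has the same rank for every $\theta$ as $Q_{T_j T_j}^{S_j}$, so invertibility of $Q_{T_j T_j}^{S_j}$ makes $\ell_j^{S_j}$ strictly convex on the subspace, hence $\mathcal{L}_j$ strictly convex there, which yields a unique minimizer. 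Combined with the support step, $\widehat\theta_{S_j}$ is the unique solution of~\eqref{eq:Objective}.

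The only mildly delicate part is the first-order computation in the support step: one has to combine the right inequalities (convexity of the loss, global optimality of both $\widehat\theta_{S_j}$ and $\widetilde\theta$, and the alignment identity $\langle\widehat Z_j^{S_j},\widehat\theta_{S_j}\rangle=\|\widehat\theta_{S_j}\|_1$) in the right order to pin down that each coordinatewise nonnegative term $|\widetilde\theta_t|-[\widehat Z_j^{S_j}]_t\widetilde\theta_t$ is \emph{individually} zero. Everything else — the PDW bookkeeping and the rank-invariance of the positively-weighted Gram matrix — transfers essentially verbatim from the GLM analysis of \citet{yang2015graphical}, so no new ideas are needed beyond checking that the Poisson SEM conditional likelihood~\eqref{l:ThetaSj} fits that framework.
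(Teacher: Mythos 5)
Your proposal is correct and takes essentially the same primal--dual witness / KKT route as the paper, whose own proof merely restates the PDW conditions and defers the key steps to Lemma 1 of \citet{ravikumar2011high} and Lemma 8 of \citet{yang2015graphical}. The convexity-plus-alignment computation you give for the shared-support step and the positive-weight Gram-matrix (rank-invariance) argument for strict convexity on the $T_j$-subspace are exactly those deferred details, so your write-up is simply a more self-contained version of the same argument.
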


The remainder of the proof is to show $| [ \widehat{Z}_j^{S_j} ]_t | < 1$ for all $t \notin T_j$. Note that the restricted solution in Equation~\eqref{eq:ResObjective} is $(\widetilde{\theta}_{S_j}, \widetilde{Z}_j^{S_j})$ and the unrestricted solution in Equation~\eqref{eq:Objective} is $(\widehat{\theta}_{S_j}, \widehat{Z}_j^{S_j})$. Equation~\eqref{eq:Contraint1} with the dual solution can be represented by 
\begin{equation}
\label{eq:Contraint2}
\bigtriangledown^2 \ell_j^{S_j}( \theta_{S_j}^*;X^{1:n})( \widetilde{\theta}_{S_j} - \theta_{S_j}^* ) = -\lambda_j \widetilde{Z}_j^{S_j} - W_j^{S_j} + R_j^{S_j}
\end{equation}
where
\begin{itemize}
	\item[(a)] $W_{j}^{S_j}$ is the sample score function:
	\begin{equation}
	\label{eq:Wn}
	W_{j}^{S_j} := - \bigtriangledown \ell_j(\theta_{S_j}^*;X^{1:n}).
	\end{equation}
	\item[(b)] $R_{j}^{S_j} = (R_{jk}^{S_j})_{k \in S_j}$ and $R_{jk}^{S_j}$ is the remainder term by applying the coordinate-wise mean value theorem: 
	\begin{equation}
	\label{eq:Rn}
	R_{jk}^{S_j} := [ \bigtriangledown^2 \ell_j^{S_j}(\theta_{S_j}^*;X^{1:n}) - \bigtriangledown^2 \ell_j^{S_j}(\bar{\theta}_{S_j} ;X^{1:n})]_k^T (\widetilde{\theta}_{S_j} - \theta_{S_j}^*).
	\end{equation}
	Here $\bar{\theta}_{S_j}$ is a vector on the line between $\widetilde{\theta}_{S_j}$ and $\theta_{S_j}^*$, and $[\cdot]_k^T$ is the row of a matrix corresponding to variable $X_k$.
\end{itemize}

Then, the following proposition provides a sufficient condition to control $\widetilde{Z}_j^{S_j}$.
\begin{proposition}
	\label{prop: block}
	If $\max(\| W_j^{S_j} \|_\infty, \| R_j^{S_j} \|_\infty)  \leq \frac{\lambda_j \alpha}{4(2- \alpha)}$,  then $| [\widetilde{Z}_j^{S_j}]_{t} | < 1$ for all $t \notin T_j$.
\end{proposition}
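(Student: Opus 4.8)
The plan is to run the standard primal–dual witness block decomposition on the stationarity identity~\eqref{eq:Contraint2}. Write $Q := Q^{j,S_j} = \bigtriangledown^2 \ell_j^{S_j}(\theta_{S_j}^*;X^{1:n})$ and partition every vector and $Q$ itself according to the index sets $T_j$ and $T_j^c$. By construction of the restricted problem~\eqref{eq:ResObjective}, the restricted primal solution $\widetilde{\theta}_{S_j}$ is supported on $T_j$, and since $[\theta_{S_j}^*]_k = 0$ for $k \notin T_j$, the error vector $\Delta := \widetilde{\theta}_{S_j} - \theta_{S_j}^*$ is also supported on $T_j$. Reading off the $T_j$-block of~\eqref{eq:Contraint2} gives $Q_{T_j T_j}\Delta_{T_j} = -\lambda_j \widetilde{Z}_{T_j} - W_{T_j} + R_{T_j}$; by Assumption~\ref{A1Dep} the submatrix $Q_{T_j T_j}$ is invertible (its smallest eigenvalue is at least $\rho_{\min}>0$), so $\Delta_{T_j} = Q_{T_j T_j}^{-1}(-\lambda_j \widetilde{Z}_{T_j} - W_{T_j} + R_{T_j})$.

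Next I would substitute this expression for $\Delta_{T_j}$ into the $T_j^c$-block of~\eqref{eq:Contraint2} and solve for $\widetilde{Z}_{T_j^c}$, obtaining
$$
\widetilde{Z}_{T_j^c} = Q_{T_j^c T_j} Q_{T_j T_j}^{-1} \widetilde{Z}_{T_j} + \tfrac{1}{\lambda_j}\bigl( Q_{T_j^c T_j} Q_{T_j T_j}^{-1}(W_{T_j} - R_{T_j}) + (R_{T_j^c} - W_{T_j^c}) \bigr).
$$
Now take $\|\cdot\|_\infty$ of both sides and apply the triangle inequality. Assumption~\ref{A2Inc} is precisely the statement that the $\ell_\infty \to \ell_\infty$ operator norm $\|Q_{T_j^c T_j} Q_{T_j T_j}^{-1}\|_\infty = \max_{t \in T_j^c}\| Q_{tT_j} Q_{T_j T_j}^{-1}\|_1 \leq 1-\alpha$, and $\|\widetilde{Z}_{T_j}\|_\infty \leq 1$ because those entries are signs of $\widetilde{\theta}_{T_j}$. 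Combining these with the hypothesis $\max(\|W_j^{S_j}\|_\infty, \|R_j^{S_j}\|_\infty) \leq \tfrac{\lambda_j \alpha}{4(2-\alpha)}$ yields
$$
\|\widetilde{Z}_{T_j^c}\|_\infty \leq (1-\alpha) + (1-\alpha)\cdot\frac{\alpha}{2(2-\alpha)} + \frac{\alpha}{2(2-\alpha)} = (1-\alpha) + \frac{\alpha}{2} = 1 - \frac{\alpha}{2},
$$
and since $\alpha \in (0,1]$ we get $|[\widetilde{Z}_j^{S_j}]_t| \leq 1 - \alpha/2 < 1$ for every $t \notin T_j$, which is the claim (and, via Lemma~\ref{lemma:uniqueness}, certifies that the unrestricted solution is supported on $T_j$ and unique).

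The argument is essentially bookkeeping once the block decomposition is in place, so the step I would be most careful with is the correct reading of Assumption~\ref{A2Inc} as an $\ell_\infty \to \ell_\infty$ operator-norm bound (rather than a spectral or entrywise one): this is exactly what makes $\|Q_{T_j^c T_j} Q_{T_j T_j}^{-1} \widetilde{Z}_{T_j}\|_\infty \leq 1-\alpha$ hold uniformly over the unknown sign pattern $\widetilde{Z}_{T_j}$. I would also double-check that the constants align so the final bound lands strictly below $1$: the factor $\tfrac14$ in $\tfrac{\lambda_j\alpha}{4(2-\alpha)}$, the two $W$/$R$ contributions, and the $(1-\alpha)$ incoherence factor are calibrated precisely so the slack is $\alpha/2$. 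I do not need to verify the hypotheses $\|W_j^{S_j}\|_\infty, \|R_j^{S_j}\|_\infty \leq \tfrac{\lambda_j\alpha}{4(2-\alpha)}$ here — those concentration and Taylor-remainder estimates are established separately — so this proposition is purely the deterministic linear-algebra core of the primal–dual witness construction.
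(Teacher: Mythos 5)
Your proposal is correct and follows essentially the same route as the paper: block-decompose the stationarity condition~\eqref{eq:Contraint2} over $T_j$ and $T_j^c$, invert $Q_{T_j T_j}$ (justified by Assumption~\ref{A1Dep}), solve for $\widetilde{Z}_{T_j^c}$, and bound it in $\ell_\infty$ using the incoherence condition of Assumption~\ref{A2Inc} as an $\ell_\infty\to\ell_\infty$ operator-norm bound together with the hypothesis on $\|W_j^{S_j}\|_\infty$ and $\|R_j^{S_j}\|_\infty$, arriving at the same $(1-\alpha)+\alpha/2<1$ slack as the paper. The only differences are cosmetic (you solve explicitly for $\widetilde{Z}_{T_j^c}$ before taking norms, and you note the support of the error vector, which the paper leaves implicit), so no gap.
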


Next, we introduce the following three lemmas under Assumptions~\ref{A1Dep},~\ref{A2Inc}, and~\ref{A3Con} to show that conditions in Proposition~\ref{prop: block} hold. For ease of notation, let $\eta = \max\{n,p\}$, $\widetilde{\theta}_{S} = [\widetilde{\theta}_{S_j}]_{T_j}$, $\widetilde{Z}_{S} = [\widetilde{Z}_j^{S_j}]_{T_j}$,  $\widetilde{\theta}_{S^c} = [\widetilde{\theta}_{S_j}]_{S_j \setminus T_j}$, and $\widetilde{Z}_{S^c} = [\widetilde{Z}_j^{S_j}]_{S_j \setminus T_j}$. 

\begin{lemma}
	\label{lem11}
	For any $S_j \in \{ \pi_{1}, \pi_{1:2},..., \pi_{1:j-1} \}$ and $\lambda_j \geq \frac{4 C_{x}^2 \sqrt{2}(2-\alpha) }{  \alpha} \frac{ \log^2 \eta }{ \kappa_1(n,p)  }$ for some $\alpha \in (0, 1]$, 
	\begin{equation*}
	P\left( \frac{\| W_j^{S_j} \|_\infty }{\lambda_j} \leq \frac{\alpha}{4(2- \alpha)} \right) 
	\geq 1 -2 d \cdot \exp \left(- \frac{n}{ \kappa_1(n,p)^2 } \right).
	\end{equation*}
	where $\kappa_1(n,p)$ is an arbitrary function of $n$ and $p$. 
\end{lemma}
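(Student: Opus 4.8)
The plan is to recognize the sample score vector $W_j^{S_j}$ as an $n$-average of i.i.d., mean-zero, uniformly bounded random vectors, and then conclude by a coordinatewise Hoeffding bound together with a union bound. First, differentiating the reformulated negative log-likelihood~\eqref{l:ThetaSj}, the $t$-th coordinate of $W_j^{S_j} = -\nabla \ell_j^{S_j}(\theta_{S_j}^*; X^{1:n})$ is
\[
[W_j^{S_j}]_t \;=\; \frac{1}{n}\sum_{i=1}^{n} X_t^{(i)}\Bigl( X_j^{(i)} - \exp\bigl(\langle \theta_{S_j}^*, X_{S_j}^{(i)}\rangle\bigr)\Bigr) \;=:\; \frac{1}{n}\sum_{i=1}^{n} Z_t^{(i)},
\]
an average of i.i.d. copies of $Z_t := X_t\bigl(X_j - \exp(\langle \theta_{S_j}^*, X_{S_j}\rangle)\bigr)$. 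The first-order optimality condition defining the population minimizer $\theta_{S_j}^*$ in~\eqref{eq:ThetaS*} is precisely $\E\bigl[X_{S_j}(X_j - \exp(\langle \theta_{S_j}^*, X_{S_j}\rangle))\bigr] = 0$, so $\E[Z_t] = 0$ for every coordinate $t$, and $W_j^{S_j}$ is a centered empirical average.

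Next I would invoke Assumption~\ref{A3Con} to obtain the uniform envelope $|Z_t^{(i)}| < C_x^2 \log^2 \eta$: on the high-probability event of that assumption $0 \le X_t^{(i)}, X_j^{(i)} < C_x\log\eta$ and $0 \le \exp(\langle \theta_{S_j}^*, X_{S_j}^{(i)}\rangle) < C_x\log\eta$, so $|X_j^{(i)} - \exp(\langle \theta_{S_j}^*, X_{S_j}^{(i)}\rangle)| < C_x\log\eta$ and the product bound follows. Applying Hoeffding's inequality to the centered bounded average $\frac1n\sum_i Z_t^{(i)}$ and a union bound over the coordinates gives, for every $s>0$,
\[
P\Bigl( \|W_j^{S_j}\|_\infty > s \Bigr) \;\le\; 2d\,\exp\!\left(-\frac{n s^2}{2 C_x^4 \log^4 \eta}\right).
\]
Taking $s = \lambda_j \alpha / \bigl(4(2-\alpha)\bigr)$, the threshold required by Proposition~\ref{prop: block}, and using the hypothesis $\lambda_j \ge \frac{4\sqrt2\,C_x^2 (2-\alpha)}{\alpha}\cdot\frac{\log^2\eta}{\kappa_1(n,p)}$, one checks that $\frac{s^2}{2 C_x^4\log^4\eta} \ge \frac{1}{\kappa_1(n,p)^2}$, which yields exactly the claimed bound $P\bigl(\|W_j^{S_j}\|_\infty/\lambda_j \le \alpha/(4(2-\alpha))\bigr) \ge 1 - 2d\exp(-n/\kappa_1(n,p)^2)$.

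I do not expect a genuine obstacle in this lemma: the difficulty has been front-loaded into Assumption~\ref{A3Con}, which trades the heavy polynomial tails of Poisson variables for hard $O(\log\eta)$ envelopes; once that is granted, this is the easiest of the supporting lemmas, a textbook concentration estimate. The only points demanding care are the constant bookkeeping --- verifying that the $\log^2\eta$ in the lower bound on $\lambda_j$ squares to cancel precisely the $C_x^4\log^4\eta$ in the Hoeffding exponent, leaving a clean $n/\kappa_1(n,p)^2$, and that the correct per-summand envelope is $C_x^2\log^2\eta$, which is what produces the factor $\sqrt2$ (rather than $2\sqrt2$) in the stated range for $\lambda_j$ --- and the notational matching, namely that the reformulated parameter $\theta_{S_j}^*$ of~\eqref{eq:ThetaS*} absorbs the unpenalized intercept $\theta_j$ (e.g.\ via a constant coordinate), so that the boundedness of $\exp(\langle \theta_{S_j}^*, X_{S_j}^{(i)}\rangle)$ guaranteed by Assumption~\ref{A3Con} applies verbatim.
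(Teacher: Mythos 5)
Your proposal follows the paper's own argument essentially verbatim: each coordinate of $W_j^{S_j}$ is written as a centered i.i.d.\ average (mean zero by the population first-order optimality condition), each summand is bounded by $C_x^2\log^2\eta$ via Assumption~\ref{A3Con}, and Hoeffding's inequality plus a union bound over the at most $d$ relevant coordinates yields the tail bound, after which setting $s=\lambda_j\alpha/(4(2-\alpha))$ and using the stated lower bound on $\lambda_j$ gives exactly the exponent $n/\kappa_1(n,p)^2$. The constant bookkeeping matches the paper's computation (the paper likewise obtains the factor $d$ by restricting attention to the support $T_j$ and treating the remaining coordinates of the score as negligible), so the proposal is correct and takes the same route.
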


\begin{lemma} 
	\label{lem12}
	Suppose that for all $S_j \in \{ \pi_{1}, \pi_{1:2},..., \pi_{1:j-1} \}$, $\|W_j^{S_j}\|_{\infty} \leq \frac{\lambda_j}{4}$. Then, for $\lambda_j \leq \frac{ \rho_{\min}^2 }{ 10 C_{x}^2 \rho_{\max} d  \log^2 \eta }$,
	\begin{equation*}
	 \| \widetilde{\theta}_S - \theta_S^* \|_2 \leq \frac{5}{ \rho_{\min} } \sqrt{d} \lambda_j 
	\end{equation*}
\end{lemma}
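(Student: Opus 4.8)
This is the standard ``$\ell_2$-error of the restricted (oracle) estimator'' step of the primal--dual witness method, the analogue of the corresponding lemma in~\cite{yang2015graphical}. I would prove it by a convexity/localization argument rather than by manipulating the remainder decomposition in Equation~\eqref{eq:Contraint2} directly. Fix $j$ and $S_j$, and work in the reduced coordinate space $\mathbb{R}^{|T_j|}$ indexed by $T_j$; write $\theta_S^{*}$ for $[\theta_{S_j}^{*}]_{T_j}$ and $\ell$ for $\ell_j^{S_j}$ restricted to those coordinates. Here $|T_j|\le d$ (it equals $\pa(j)$ once $S_j\supseteq\pa(j)$), so $\|\cdot\|_1\le\sqrt{d}\,\|\cdot\|_2$ on this space. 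Define
\begin{equation*}
G(\Delta):=\ell(\theta_S^{*}+\Delta)-\ell(\theta_S^{*})+\lambda_j\big(\|\theta_S^{*}+\Delta\|_1-\|\theta_S^{*}\|_1\big),\qquad\Delta\in\mathbb{R}^{|T_j|}.
\end{equation*}
Then $\widetilde{\theta}_S$ of Equation~\eqref{eq:ResObjective} equals $\theta_S^{*}+\Delta^{*}$ where $\Delta^{*}$ minimizes the convex function $G$, and $G(0)=0$ gives $G(\Delta^{*})\le 0$. If $G(\Delta)>0$ for all $\Delta$ with $\|\Delta\|_2=r$, then $\|\Delta^{*}\|_2<r$: otherwise $t\Delta^{*}$ with $t=r/\|\Delta^{*}\|_2\le 1$ lies on the sphere and, by convexity, $G(t\Delta^{*})\le tG(\Delta^{*})+(1-t)G(0)\le 0$, a contradiction. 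So it suffices to establish $G(\Delta)>0$ on $\|\Delta\|_2=r$ for every $r>\tfrac{5}{\rho_{\min}}\sqrt{d}\,\lambda_j$, which yields the bound.

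Fix such a $\Delta$. The penalty difference is at least $-\|\Delta\|_1\ge-\sqrt{d}\,r$. By Taylor's theorem with Lagrange remainder, $\ell(\theta_S^{*}+\Delta)-\ell(\theta_S^{*})=\langle\nabla\ell(\theta_S^{*}),\Delta\rangle+\tfrac12\Delta^{\top}\nabla^2\ell(\bar\theta)\Delta$ for some $\bar\theta$ on the segment from $\theta_S^{*}$ to $\theta_S^{*}+\Delta$, and the gradient term is at least $-\|W_j^{S_j}\|_\infty\|\Delta\|_1\ge-\tfrac{\lambda_j}{4}\sqrt{d}\,r$ by the hypothesis $\|W_j^{S_j}\|_\infty\le\lambda_j/4$ together with $W_j^{S_j}=-\nabla\ell(\theta_S^{*})$. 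Hence $G(\Delta)\ge\tfrac12\Delta^{\top}\nabla^2\ell(\bar\theta)\Delta-\tfrac54\sqrt{d}\,\lambda_j\,r$.

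The crux is a uniform lower bound $\lambda_{\min}\!\big(\nabla^2\ell(\bar\theta)\big)\ge\rho_{\min}/2$ over the ball $\|\bar\theta-\theta_S^{*}\|_2\le r$ (writing $\eta=\max\{n,p\}$). Assumption~\ref{A1Dep} gives $\lambda_{\min}(\nabla^2\ell(\theta_S^{*}))=\lambda_{\min}(Q^{j,S_j}_{T_jT_j})\ge\rho_{\min}$, so I would control the perturbation $\|\nabla^2\ell(\bar\theta)-\nabla^2\ell(\theta_S^{*})\|_2$. For a Poisson GLM, $\nabla^2\ell(\theta)=\tfrac1n\sum_{i=1}^{n}\exp(\langle\theta,X_{T_j}^{(i)}\rangle)\,X_{T_j}^{(i)}(X_{T_j}^{(i)})^{\top}$, so by the mean value theorem for $z\mapsto e^z$, $|\exp(\langle\bar\theta,X_{T_j}^{(i)}\rangle)-\exp(\langle\theta_S^{*},X_{T_j}^{(i)}\rangle)|\le|\langle\bar\theta-\theta_S^{*},X_{T_j}^{(i)}\rangle|\exp(\langle\theta_S^{*},X_{T_j}^{(i)}\rangle)\exp(|\langle\bar\theta-\theta_S^{*},X_{T_j}^{(i)}\rangle|)$. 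Cauchy--Schwarz and Assumption~\ref{A3Con} bound $|\langle\bar\theta-\theta_S^{*},X_{T_j}^{(i)}\rangle|\le r\sqrt{d}\,C_x\log\eta$ and $\exp(\langle\theta_S^{*},X_{T_j}^{(i)}\rangle)\le C_x\log\eta$; combined with $\lambda_{\max}(\tfrac1n\sum_iX_{T_j}^{(i)}(X_{T_j}^{(i)})^{\top})\le\rho_{\max}$, this gives $\|\nabla^2\ell(\bar\theta)-\nabla^2\ell(\theta_S^{*})\|_2\le r\sqrt{d}\,C_x^2\log^2\eta\cdot e^{r\sqrt{d}\,C_x\log\eta}\cdot\rho_{\max}$. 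Since $r\le\tfrac{5\sqrt{d}\,\lambda_j}{\rho_{\min}}$ and the prescribed bound $\lambda_j\le\tfrac{\rho_{\min}^2}{10C_x^2\rho_{\max}d\log^2\eta}$ make $r\sqrt{d}\,C_x\log\eta=o(1)$ and $r\sqrt{d}\,C_x^2\log^2\eta\,\rho_{\max}\le\rho_{\min}/2$, the perturbation is at most $\rho_{\min}/2$; hence $\lambda_{\min}(\nabla^2\ell(\bar\theta))\ge\rho_{\min}/2$ throughout the ball and $\tfrac12\Delta^{\top}\nabla^2\ell(\bar\theta)\Delta\ge\tfrac{\rho_{\min}}{4}r^2$. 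Plugging in, $G(\Delta)\ge\tfrac{\rho_{\min}}{4}r^2-\tfrac54\sqrt{d}\,\lambda_j\,r=\tfrac{r}{4}\big(\rho_{\min}r-5\sqrt{d}\,\lambda_j\big)>0$ whenever $r>\tfrac{5}{\rho_{\min}}\sqrt{d}\,\lambda_j$, completing the proof.

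I expect the Hessian perturbation bound to be the main obstacle: the Poisson link $z\mapsto e^z$ has an exponentially growing local Lipschitz constant, so a crude estimate is useless in the $p>n$ regime. The argument closes only because Assumption~\ref{A3Con} caps every linear predictor at $O(\log\eta)$ and the prescribed upper bound on $\lambda_j$ forces the localization radius $r$ to be small enough that $e^{r\sqrt{d}\,C_x\log\eta}=O(1)$; the fiddly part is tracking the numerical constants so that the perturbation lands below $\rho_{\min}/2$ rather than merely $O(\rho_{\min})$, which is precisely what the constant in the upper bound on $\lambda_j$ is calibrated for.
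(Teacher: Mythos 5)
Your proposal is correct and follows essentially the same route as the paper's proof in Appendix~\ref{SubSecProofLem12}: the same localization function ($F$ there, your $G$), the same convexity argument reducing to strict positivity on a sphere of radius $\tfrac{5}{\rho_{\min}}\sqrt{d}\,\lambda_j$, the same bounds on the score and penalty terms via $\|W_j^{S_j}\|_\infty\le\lambda_j/4$ and $\|u\|_1\le\sqrt{d}\|u\|_2$, and the same restricted-eigenvalue lower bound $\rho_{\min}/2$ obtained from Assumptions~\ref{A1Dep} and~\ref{A3Con} together with the smallness of $\lambda_j$. The only variation is a sub-step: you control the Hessian along the segment by a pointwise Lipschitz bound on $e^z$ under Assumption~\ref{A3Con} (picking up a harmless $e^{o(1)}$ factor in the constants), whereas the paper differentiates in the interpolation parameter and invokes Proposition~\ref{prop:solution}; both close the argument.
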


\begin{lemma} 
	\label{lem13}
	Suppose that for all $S_j \in \{ \pi_{1}, \pi_{1:2},..., \pi_{1:j-1} \}$, $\|W_j^{S_j}\|_{\infty} \leq \frac{\lambda_j}{4}$. Then, for $\lambda_j \leq \frac{ \alpha \rho_{\min}^2 }{ 100 C_{x}^2 (2 - \alpha) \rho_{\max} d \log^2 \eta  }$ and $\alpha \in (0, 1]$,
	\begin{equation*}
	 \frac{\| R_j^{S_j} \|_\infty }{\lambda_j} \leq \frac{\alpha}{4(2 - \alpha)} 
	\end{equation*}
\end{lemma}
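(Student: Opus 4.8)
The plan is to treat Lemma~\ref{lem13} as a purely deterministic consequence of the conditioning event $\|W_j^{S_j}\|_{\infty}\le \lambda_j/4$ (imposed for every admissible $S_j$), the bounded-sample Assumption~\ref{A3Con}, the dependence Assumption~\ref{A1Dep}, and the $\ell_2$ control $\|\widetilde\theta_S-\theta_S^*\|_2\le \frac{5}{\rho_{\min}}\sqrt d\,\lambda_j$ already proved in Lemma~\ref{lem12}; no fresh concentration inequality is needed. \textbf{Step 1 (write the remainder explicitly).} I would first substitute the explicit Hessian of the Poisson-GLM negative log-likelihood~\eqref{l:ThetaSj}, namely $\bigtriangledown^2\ell_j^{S_j}(\theta;X^{1:n})=\frac1n\sum_{i=1}^n e^{\langle\theta,X_{S_j}^{(i)}\rangle}X_{S_j}^{(i)}(X_{S_j}^{(i)})^{\top}$, into the definition~\eqref{eq:Rn} of $R_{jk}^{S_j}$. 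Because both the restricted primal solution $\widetilde\theta_{S_j}$ and the population optimum $\theta_{S_j}^*$ are supported on $T_j$ (the set~\eqref{eq:subparents}, of cardinality $\le d$), the increment $\Delta:=\widetilde\theta_{S_j}-\theta_{S_j}^*$ is supported on $T_j$ and $\|\Delta\|_2=\|\widetilde\theta_S-\theta_S^*\|_2$, so every inner product below involves only the $\le d$ sample coordinates indexed by $T_j$.

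\textbf{Step 2 (a second mean-value step).} Applying the one-dimensional mean-value theorem to $u\mapsto e^{\langle\theta^*_{S_j}+u(\bar\theta_{S_j}-\theta^*_{S_j}),X_{S_j}^{(i)}\rangle}$, for each sample $i$ and each row index $k$ there is a point $\theta^{(i)}_{S_j}$ on the segment from $\theta^*_{S_j}$ to $\bar\theta_{S_j}$ — hence on the segment from $\theta^*_{S_j}$ to $\widetilde\theta_{S_j}$ — such that
\[ R_{jk}^{S_j}=\frac1n\sum_{i=1}^n e^{\langle\theta^{(i)}_{S_j},X_{S_j}^{(i)}\rangle}\,\langle\theta^*_{S_j}-\bar\theta_{S_j},X_{S_j}^{(i)}\rangle\,X_k^{(i)}\,\langle X_{S_j}^{(i)},\Delta\rangle . \]
Since $\bar\theta_{S_j}$ lies between $\widetilde\theta_{S_j}$ and $\theta^*_{S_j}$ we have $|\langle\theta^*_{S_j}-\bar\theta_{S_j},X_{S_j}^{(i)}\rangle|\le|\langle\Delta,X_{S_j}^{(i)}\rangle|$, hence $|R_{jk}^{S_j}|\le\frac1n\sum_i e^{\langle\theta^{(i)}_{S_j},X_{S_j}^{(i)}\rangle}\,|X_k^{(i)}|\,\langle X_{S_j}^{(i)},\Delta\rangle^2$.

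\textbf{Step 3 (controlling the exponential weight — the crux).} Writing $\langle\theta^{(i)}_{S_j},X_{S_j}^{(i)}\rangle=\langle\theta^*_{S_j},X_{S_j}^{(i)}\rangle+s_i\langle\Delta,X_{S_j}^{(i)}\rangle$ with $s_i\in[0,1]$, Cauchy--Schwarz together with Assumption~\ref{A3Con} (each of the $\le d$ relevant coordinates of $X^{(i)}$ is $<C_x\log\eta$) and Lemma~\ref{lem12} gives $|\langle\Delta,X_{S_j}^{(i)}\rangle|\le\frac{5C_x d\,\lambda_j\log\eta}{\rho_{\min}}$, and substituting the prescribed upper bound on $\lambda_j$ makes this much smaller than $1$; consequently $e^{\langle\theta^{(i)}_{S_j},X_{S_j}^{(i)}\rangle}\le 2\,e^{\langle\theta^*_{S_j},X_{S_j}^{(i)}\rangle}<2C_x\log\eta$ by the second half of Assumption~\ref{A3Con}. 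I expect this ``small parameter error $\Rightarrow$ the Hessian at the intermediate point stays comparable to the one at $\theta^*$'' bootstrap to be the most delicate part: it is the only place the bounded-sample assumption is genuinely used (the Poisson-GLM Hessian is exponential in the covariates and would otherwise be uncontrollable), and since the mean-value theorem is applied coordinate-wise, $\bar\theta_{S_j}$ — hence $\theta^{(i)}_{S_j}$ — may depend on $k$, so the bound must be made uniform over $k$ and over the finitely many admissible $S_j$.

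\textbf{Step 4 (assemble).} Bounding $|X_k^{(i)}|<C_x\log\eta$ and identifying $\frac1n\sum_i\langle X_{S_j}^{(i)},\Delta\rangle^2=\Delta^{\top}\big(\frac1n\sum_i X_{S_j}^{(i)}(X_{S_j}^{(i)})^{\top}\big)\Delta\le\rho_{\max}\|\Delta\|_2^2$ by the dependence Assumption~\ref{A1Dep} (using that $\Delta$ is supported on $T_j$), and then inserting $\|\Delta\|_2\le\frac{5}{\rho_{\min}}\sqrt d\,\lambda_j$ from Lemma~\ref{lem12}, one obtains a bound of the form
\[ \|R_j^{S_j}\|_{\infty}\le c\,\frac{C_x^2\,\rho_{\max}\,d\,\log^2\eta}{\rho_{\min}^2}\,\lambda_j^2 \]
for an explicit numerical constant $c$. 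Dividing by $\lambda_j$ and invoking the prescribed ceiling $\lambda_j\le\frac{\alpha\rho_{\min}^2}{100\,C_x^2(2-\alpha)\rho_{\max}\,d\log^2\eta}$ — whose denominator constant $100$ is chosen precisely so the product falls below $\frac{\alpha}{4(2-\alpha)}$ — gives $\|R_j^{S_j}\|_{\infty}/\lambda_j\le\frac{\alpha}{4(2-\alpha)}$. Taking the maximum over $k\in S_j$ and over $S_j\in\{\pi_1,\pi_{1:2},\dots,\pi_{1:j-1}\}$ finishes the proof; since the whole argument is conditional on the event of Lemma~\ref{lem11} and on Assumption~\ref{A3Con}, it incurs no additional probability.
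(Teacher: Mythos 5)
Your proposal follows essentially the same route as the paper's own proof: expand $R_{jk}^{S_j}$ via a second mean-value step applied to the Poisson--GLM Hessian, control the exponential weight at the intermediate point through the bounded-sample Assumption~\ref{A3Con}, bound $|X_k^{(i)}|$ by $C_{x}\log\eta$ and the averaged quadratic form $\Delta^{\top}\bigl(\tfrac1n\sum_i X_{S}^{(i)}(X_{S}^{(i)})^{\top}\bigr)\Delta$ by $\rho_{\max}\|\Delta\|_2^2$ via Assumption~\ref{A1Dep}, insert $\|\Delta\|_2\le\tfrac{5}{\rho_{\min}}\sqrt d\,\lambda_j$ from Lemma~\ref{lem12}, and finish with the stated ceiling on $\lambda_j$. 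The only local deviation is in how the intermediate-point exponential is handled --- the paper invokes Proposition~\ref{prop:solution} together with convexity of $\theta\mapsto e^{\langle\theta,x\rangle}$, while you bootstrap from $\theta_{S_j}^*$ with the bound $e^{\langle\theta^{(i)},X^{(i)}\rangle}\le 2\,e^{\langle\theta^*,X^{(i)}\rangle}$; be aware that this factor $2$ inflates the paper's constant $25$ to $50$, so with the ceiling $\lambda_j\le\frac{\alpha\rho_{\min}^2}{100\,C_{x}^2(2-\alpha)\rho_{\max}d\log^2\eta}$ you would only conclude $\|R_j^{S_j}\|_\infty/\lambda_j\le\frac{\alpha}{2(2-\alpha)}$ unless you tighten $e^{|\langle\Delta,X^{(i)}\rangle|}$ to a constant near $1$ (which the same ceiling easily permits, since $|\langle\Delta,X^{(i)}\rangle|\le\frac{\alpha\rho_{\min}}{20C_{x}(2-\alpha)\rho_{\max}\log\eta}$ is tiny) --- a cosmetic constant-tracking point, not a gap in the argument.
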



The rest of the proof is straightforward using Lemmas~\ref{lem11},~\ref{lem12}, and~\ref{lem13}. Consider the choice of regularization parameter $\lambda_{j0} = \frac{4\sqrt{2}C_{x}^2 (2-\alpha) }{  \alpha} \frac{ \log^2 \eta }{ \kappa_1(n,p)  }$, where 
$\kappa_1(n,p) \geq  \frac{4\sqrt{2} C_{x}^4 \cdot 10^2 (2-\alpha)^2 }{  \alpha^2} \frac{ \rho_{\max} }{ \rho_{\min}^2 } d \log^4 \eta$ 
ensuring that 
$ \frac{4 C_{x}^2 \sqrt{2}(2-\alpha) }{  \alpha} \frac{ \log^2 \eta }{ \kappa_1(n,p)  } \leq  \lambda_{j0}  \leq \frac{ \alpha \rho_{\min}^2 }{ 10^2 C_{x}^2 (2 - \alpha) \rho_{\max} d \log^2 \eta  }$ 
for any $\alpha = (0, 1]$. Hence, if we set $\kappa_1(n,p) = C_{\max} d \log^4 \eta $ where $C_{\max} = \frac{4\sqrt{2} \cdot  10^2 C_{x}^4 \cdot(2-\alpha)^2 }{  \alpha^2} \frac{ \rho_{\max} }{ \rho_{\min}^2 } $, then all conditions for Lemma~\ref{lem11}, \ref{lem12}, and~\ref{lem13} are satisfied. Therefore,
\begin{equation}
\| \widetilde{Z}_{S^c}\|_\infty \leq ( 1- \alpha ) + ( 2- \alpha) \left[ \frac{ \| W_{j}^{S_j} \|_\infty }{ \lambda_j} + \frac{ \| R_{j}^{S_j} \|_\infty }{ \lambda_j} \right] \leq ( 1- \alpha ) + \frac{ \alpha }{4} + \frac{ \alpha }{4} < 1,
\end{equation}
with a probability of at least $1 - 2 d \cdot \exp\left( - \frac{ n }{ \kappa_1(n,p)^2 } \right) = 1 - 2 d \cdot \exp\left( - \frac{ n }{ C_{\max}^2 d^2 \log^8 \eta } \right)$. 

\begin{proposition}
	\label{prop:parents_search}
	Suppose that, for any $j \in V$, partial ordering $(\pi_1,..., \pi_j)$ is correctly estimated. If $ \min_{t \in S} [\theta_S^*]_t \geq \frac{10}{\rho_{\min}} \sqrt{d }~\lambda_j$ for all $j \in V$, 
	$$\mbox{supp}( \widehat{\theta}_{S_{j}} ) = \pa(j).$$ 
\end{proposition}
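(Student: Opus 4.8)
The plan is to combine the strict dual feasibility established just above with the $\ell_2$-error bound of Lemma~\ref{lem12} and a $\beta$-min argument. First I would observe that because the partial ordering $(\pi_1,\dots,\pi_j)$ is correctly estimated, the conditioning set is $S_j=\pi_{1:(j-1)}$, and since every parent of $j$ precedes $j$ in the true ordering, $\pa(j)\subset S_j\subset\nd(j)$. Conditioning on a superset of the parents that lies inside $\nd(j)$ leaves the conditional distribution $X_j\mid X_{\pa(j)}$ unchanged, so the population minimizer $\theta_{S_j}^*$ of Equation~\eqref{ThetaSj} places weight $\theta_{jk}$ on each $k\in\pa(j)$ and zero elsewhere; in particular $T_j=\pa(j)$ (the remark following Equation~\eqref{eq:subparents}), and the hypothesis $\min_{t\in T_j}|[\theta_{S_j}^*]_t|\ge\frac{10}{\rho_{\min}}\sqrt{d}\,\lambda_j$ is precisely $\min_{(j,k)\in E}|\theta_{jk}|\ge\frac{10}{\rho_{\min}}\sqrt{d}\,\lambda_j$.

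Next I would extract the \emph{no-false-edge} direction from the computation already carried out: under Assumptions~\ref{A1Dep}--\ref{A3Con} and the admissible range of $\lambda_j$, Lemmas~\ref{lem11}, \ref{lem12}, \ref{lem13} yield $\max(\|W_j^{S_j}\|_\infty,\|R_j^{S_j}\|_\infty)\le\frac{\lambda_j\alpha}{4(2-\alpha)}$, whence Proposition~\ref{prop: block} gives $\|\widetilde{Z}_{S^c}\|_\infty<1$. By Lemma~\ref{lemma:uniqueness} the unrestricted solution $\widehat{\theta}_{S_j}$ of Equation~\eqref{eq:Objective} then satisfies $[\widehat{\theta}_{S_j}]_t=0$ for all $t\notin T_j$ and is the unique minimizer, so it coincides with the restricted solution $\widetilde{\theta}_{S_j}$ and $\mbox{supp}(\widehat{\theta}_{S_j})\subset T_j=\pa(j)$.

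For the reverse inclusion I would invoke Lemma~\ref{lem12}, which gives $\|\widetilde{\theta}_S-\theta_S^*\|_2\le\frac{5}{\rho_{\min}}\sqrt{d}\,\lambda_j$ with $\widetilde{\theta}_S=[\widetilde{\theta}_{S_j}]_{T_j}$ and $\theta_S^*=[\theta_{S_j}^*]_{T_j}$ (its hypothesis $\|W_j^{S_j}\|_\infty\le\lambda_j/4$ holds since $\tfrac{\alpha}{2-\alpha}\le1$). As the $\ell_2$ bound dominates each coordinatewise deviation, for every $t\in T_j$, $|[\widehat{\theta}_{S_j}]_t|=|[\widetilde{\theta}_{S_j}]_t|\ge|[\theta_{S_j}^*]_t|-\frac{5}{\rho_{\min}}\sqrt{d}\,\lambda_j\ge\frac{5}{\rho_{\min}}\sqrt{d}\,\lambda_j>0$, so $t\in\mbox{supp}(\widehat{\theta}_{S_j})$. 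Combining the two inclusions gives $\mbox{supp}(\widehat{\theta}_{S_j})=\pa(j)$, and since $j$ was arbitrary the proposition follows on the high-probability event supplied by Lemmas~\ref{lem11}--\ref{lem13}.

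The $\beta$-min step and the reuse of the already-established lemmas are routine bookkeeping; the only point requiring care is the identification $T_j=\pa(j)$, namely that correctness of the partial ordering forces $\pa(j)\subset S_j$ and hence that the population Poisson regression of $X_j$ on $X_{S_j}$ has support exactly $\pa(j)$ with coefficients $(\theta_{jk})$. This is where the Poisson-SEM structure and the non-zero-coefficient identifiability condition of Theorem~\ref{ThmMainIdentifiability} enter; the rest sits on top of the primal--dual witness analysis.
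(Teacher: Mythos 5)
Your proof is correct and follows essentially the same route as the paper: the paper's own proof is exactly your second half (the $\beta$-min argument via Lemma~\ref{lem12}, bounding $\|\widehat{\theta}_{S}-\theta_{S}^*\|_{\infty}\leq\|\widehat{\theta}_{S}-\theta_{S}^*\|_{2}\leq\frac{5}{\rho_{\min}}\sqrt{d}\,\lambda_j$ against half the minimum signal), while the no-false-edge inclusion you spell out via Proposition~\ref{prop: block} and Lemma~\ref{lemma:uniqueness}, and the identification $T_j=\pa(j)$, are exactly what the paper leaves implicit from the primal--dual witness computation and the remark following Equation~\eqref{eq:subparents}.
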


Proposition~\ref{prop:parents_search} guarantees that $\ell_1$-regularized likelihood regression recovers the parents for each node with a high probability. Since there are $p$ regression problems, for any $\epsilon >0$,  there exists a positive constant $C_{\epsilon} > 0$ such that if $n \geq C_{\epsilon} ( \kappa_1(n,p)^2 \log p )$ for $\kappa_1(n,p) \geq  C_{\max} d \log^4 \eta$, 
\begin{equation*}
P( \widehat{G} = G ) \geq 2 d p \cdot \exp\left( - \frac{ n }{ \kappa_1(n,p)^2 } \right)  \geq 1 - 2 d p \cdot \exp\left(   -  C_{\epsilon} \log p \right) \geq 1- \epsilon.
\end{equation*}

\end{proof}

\section{Proof for Theorem~\ref{ThmMainTheorm}: Ordering Recovery}

\label{SecMainThmProof2}

\begin{proof}
	We begin by reintroducing some necessary notations and definitions to make the proof concise. Without loss of generality, assume that the true ordering is unique and $\pi = (\pi_1, ..., \pi_p) = (1,2,...,p)$. For notational convenience, we define $X_{1:j} = (X_{\pi_1},X_{\pi_2},\cdots,X_{\pi_j}) \\ = (X_1, X_2, ..., X_{j})$ and $X_{1:0} = \emptyset$.  We restate the moments ratio scores for a node $k$ and the $j$th element of the ordering:
	$$
	\S(j, k)  :=\frac{ \E ( X_{k}^2 ) }{ \E( f ( \E( X_{k} \mid X_{1:(j-1)}  )  ) ) } \quad \mbox{and} \quad
	\hatS(j, k)  :=\frac{ \widehat{\E}( X_{k}^2 ) }{ \widehat{\E}( f ( \widehat{\E}( X_{k} \mid X_{\widehat{\pi}_{1:(j-1)}}  )  ) ) },
	$$
	where $f(\mu) := \mu + \mu^2$, $\E( X_k \mid X_{S_k}  ) = \exp( \theta_{k}^{*}  + \sum_{t \in S_k} \theta_{kt}^{*} X_t)$, and $\widehat{\E}( X_k \mid X_{S_k}  ) = \exp( \hat{\theta}_{k}  + \sum_{t \in S_k} \hat{\theta}_{kt} X_t)$ where $\theta_{S_k}^{*} = (\theta_{k}^*, \theta_{kt}^* )$ and $\hat{\theta}_{S_k} = ( \hat{\theta}_{k}, \hat{\theta}_{kt} )$ are the solutions of 
	the problem~\eqref{ThetaSj} and of the $\ell_1$-regularized GLM~\eqref{eqn:theta_jS2}, respectively. In addition, we use the unbiased method-of-moment estimator for a marginal expectation, $\widehat{\E}( X_k^2 ) = \frac{1}{n} \sum_{i = 1}^{n} (X_k^{(i)})^2$ and $\widehat{\E}( f( \widehat{\E}( X_k\mid X_S) ) ) = \frac{1}{n} \sum_{i = 1}^{n} f( \exp( \hat{\theta}_{k}  + \sum_{t \in S_k} \hat{\theta}_{kt} X_t^{(i)}) )$.
	
	We define the following necessary events:
	For each node $j \in V$, $S_j \in \{ \pi_1, \pi_{1:2}, ..., \pi_{1:(j-1)} \}$ and any $\epsilon_1 > 0$; 
	\begin{eqnarray*}
		\zeta_1 & := & \left\{ \max_{j = 1, ...,p-1} \max_{k = j,...,p} \left| \mathcal{S}(j,\pi_k) - \hatS(j,\pi_k) \right| > \frac{M_{\min}}{2} \right\},  \nonumber \\	
		\zeta_2 & := & \left\{ \max_{j \in V} \left| \widehat{\E}( X_j^2)  - \E( X_j^2) \right| < \epsilon_1 \right\}, \nonumber \\
		\zeta_3 & := & \left\{ \max_{j \in V} \left| \widehat{\E}\left( f \left( \widehat{\E}(X_j \mid X_{S_j} )  \right) \right) - {\E}\left( f \left( \widehat{\E}( X_j \mid X_{S_j} ) \right) \right) \right| < \epsilon_1 \right\},  \nonumber \\
		\zeta_4 & := & \left\{ \max_{j \in V} \left| {\E} \left( f \left( \widehat{\E}(X_j \mid X_{S_j} )  \right) \right) - \E \left( f \left( \E( X_j \mid X_{S_j} ) \right) \right)  \right| < \epsilon_1  \right\}.  \nonumber
	\end{eqnarray*}
	
	We begin by proving that our algorithm recovers the ordering of a Poisson SEM in the high-dimensional setting. The probability that ordering is correctly estimated from our method can be written as
	\begin{align*}
		& P\left( \widehat{\pi} = \pi \right) \\
		=&P\left( \hatS(1, \pi_1) < \min_{j = 2,...,p} \hatS(1, \pi_j), \hatS(2, \pi_2) < \min_{j =3,...,p} \hatS(2, \pi_j), ..., \hatS(p-1, \pi_{p-1}) < \hatS(p-1, \pi_p) \right) \\
		 =& P\left( \min_{j = 1, ...,p-1} \min_{k = j+1,...,p} \hatS(j, \pi_k) -\hatS(j, \pi_j) > 0 \right)\\
		=& P\bigg( \min_{\substack{j = 1, ...,p-1\\k = j+1,...,p}} \left\{ \Big( \mathcal{S}(j, \pi_k) -\mathcal{S}(j, \pi_j)\Big)  
		- \left( \mathcal{S}(j,\pi_k) - \hatS(j,\pi_k) \right) + \left( \mathcal{S}(j,\pi_j) - \hatS(j,\pi_j) \right) \right\} > 0  \bigg) 
		\\
		 \geq & P\left( \min_{ \substack{ j = 1, ...,p-1 \\ k = j+1,...,p} } \left\{ \left( \mathcal{S}(j, \pi_k) -\mathcal{S}(j, \pi_j) \right) \right\} > M_{\min} \right.,and~ \left. \max_{ \substack{ j = 1, ...,p-1 \\ k = j,...,p} } \left| \mathcal{S}(j,\pi_k) - \hatS(j,\pi_k) \right| < \frac{M_{\min}}{2}  \right).
	\end{align*}
	
	The first term in the above probability is always satisfied because $ \mathcal{S}(j,\pi_k) - \mathcal{S}(j,\pi_j) > ( 1+ M_{\min} ) - 1 = M_{\min}$ from Assumption~\ref{A1}.  Hence, the lower bound of the probability that ordering is correctly estimated using our method is reduced to
	\begin{eqnarray}
	\label{eqn:mainlowerbound}
	P\left( \widehat{\pi} = \pi \right) 
	&\geq& P\left( \max_{j = 1, ...,p-1} \max_{k = j,...,p} \left| \mathcal{S}(j,\pi_k) - \hatS(j,\pi_k) \right| < \frac{M_{\min}}{2}  \right) \nonumber \\
	&= & 1 - P(\zeta_1) \nonumber \\
	&= & 1 - P(\zeta_1 \mid \zeta_2, \zeta_3, \zeta_4) P(\zeta_2, \zeta_3, \zeta_4) - P(\zeta_1 \mid ( \zeta_2,\zeta_3,\zeta_4 )^c ) P( (\zeta_2, \zeta_3,\zeta_4)^c ) \nonumber  \\
	&\geq & 1 - P(\zeta_1 \mid \zeta_2, \zeta_3, \zeta_4) - P( (\zeta_2, \zeta_3, \zeta_4)^c ) \nonumber \\
	&\geq & 1 - \underbrace{ P(\zeta_1 \mid \zeta_2, \zeta_3, \zeta_4) }_{Lem~\ref{lemma1}} - \underbrace{ P(\zeta_2^c ) - P(\zeta_3^c ) - P(\zeta_4^c ) }_{Lem~\ref{lemma123}}.
	\end{eqnarray}
	
	Next, we introduce the following two lemmas to show the lower bound of the probability in~\eqref{eqn:mainlowerbound} as a function of the triple $(n, p,d)$:
	\begin{lemma}
		\label{lemma1}
		Given the sets $\zeta_2, \zeta_3, \zeta_4$ and under Assumption~\ref{A1}, $P(\zeta_1 \mid \zeta_2, \zeta_3, \zeta_4) = 0$ if for some small $\epsilon_1$ such that for any $S_j \in \{ \pi_1, \pi_{1:2}, ..., \pi_{1:(j-1)} \}$, 
		$$
		\epsilon_1 <  \min\left\{ \frac{ \E(X_j^2) M_{\min} }{ 2 ( M_{\min} + 3 )( M_{\min} + 1 ) },  \frac{M_{\min}}{6} \frac{  \E( f( \E( X_j \mid X_{S_j} ) ) )^2 }{ \E( X_j^2 )  }  \right\}, 
		$$
		where $f(\mu) = \mu + \mu^2$. 
	\end{lemma}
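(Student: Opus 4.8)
The plan is to establish the deterministic inclusion $\zeta_2 \cap \zeta_3 \cap \zeta_4 \subseteq \zeta_1^c$, which immediately gives $P(\zeta_1 \mid \zeta_2,\zeta_3,\zeta_4) = 0$. So fix a step $j \in \{1,\dots,p-1\}$ and a node $\pi_k$ with $k \in \{j,\dots,p\}$, and write $A := \E(X_{\pi_k}^2)$, $B := \E(f(\E(X_{\pi_k}\mid X_{S_j})))$, $\widehat{A} := \widehat{\E}(X_{\pi_k}^2)$, $\widehat{B} := \widehat{\E}(f(\widehat{\E}(X_{\pi_k}\mid X_{S_j})))$, where $f(\mu)=\mu+\mu^2$ and $S_j = \pi_{1:(j-1)}$ is the true prefix. (As in the inductive argument underlying the ordering recovery, on $\zeta_2\cap\zeta_3\cap\zeta_4$ the prefix already selected by the algorithm coincides with the true one, so it is harmless to evaluate $\hatS(j,\pi_k)$ at $X_{S_j}$ rather than $X_{\widehat{\pi}_{1:(j-1)}}$.) On $\zeta_2$ we have $|\widehat{A} - A| < \epsilon_1$, and on $\zeta_3 \cap \zeta_4$, inserting the intermediate quantity $\E(f(\widehat{\E}(X_{\pi_k}\mid X_{S_j})))$ and applying the triangle inequality gives $|\widehat{B} - B| < 2\epsilon_1$.

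Next I would pass from these additive bounds to a bound on the ratio. Using the identity $\hatS(j,\pi_k) - \S(j,\pi_k) = \widehat{A}/\widehat{B} - A/B = \big[(\widehat{A} - A)B - A(\widehat{B} - B)\big]\big/(B\widehat{B})$ together with $B \le A$ (Proposition~\ref{prop:moments ratio}) and $\widehat{B} \ge B - 2\epsilon_1$, I obtain
\[ |\hatS(j,\pi_k) - \S(j,\pi_k)| \;\le\; \frac{\epsilon_1(B + 2A)}{B\,\widehat{B}} \;\le\; \frac{\epsilon_1(B+2A)}{B(B-2\epsilon_1)}. \]
Everything then reduces to checking that the chosen $\epsilon_1$ makes the right-hand side at most $M_{\min}/2$, uniformly over $j$ and $\pi_k$, which I would do by the case split that appears in the statement. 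If $\pa(\pi_k) \subset S_j$ — which, since the true ordering is assumed unique, happens exactly when $k=j$ — then the equality case of Proposition~\ref{prop:moments ratio} gives $A = B$ and $\S(j,\pi_k)=1$, so $|\hatS(j,\pi_k)-1| \le 3\epsilon_1/(A-2\epsilon_1)$, and a short computation shows this is $\le M_{\min}/2$ once $\epsilon_1$ is below the first term of the minimum, $\E(X_{\pi_k}^2)M_{\min}\big/\big(2(M_{\min}+3)(M_{\min}+1)\big)$. If $\pa(\pi_k)\not\subset S_j$, then Assumption~\ref{A1} gives $A \ge (1+M_{\min})B$; bounding $B+2A \le 3A$ and using $B \le A$ to keep $B-2\epsilon_1$ bounded below, the displayed inequality is $\le M_{\min}/2$ once $\epsilon_1$ is below the second term, $\tfrac{M_{\min}}{6}\,B^2/A = \tfrac{M_{\min}}{6}\,\E(f(\E(X_{\pi_k}\mid X_{S_j})))^2\big/\E(X_{\pi_k}^2)$. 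Taking $\epsilon_1$ below the minimum of the two covers both cases for every admissible pair $(j,\pi_k)$, so $\max_{j,k}|\S(j,\pi_k)-\hatS(j,\pi_k)| \le M_{\min}/2$ on $\zeta_2\cap\zeta_3\cap\zeta_4$, i.e.\ $\zeta_1$ cannot occur.

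The main obstacle is the denominator. The score $\S(j,\pi_k)=A/B$ is not bounded above under Assumptions~\ref{A1Dep}--\ref{A1}, so one cannot just invoke a uniform modulus of continuity for $x\mapsto 1/x$; instead one must (i) keep $\widehat{B}$ bounded away from $0$ (this is where the first threshold, scaling with $\E(X_{\pi_k}^2)$, is actually needed), and (ii) let the tolerance $\epsilon_1$ scale like $B^2/A$, which is precisely the form of the second threshold. Threading these two scalings through the case split, together with using the uniqueness-of-ordering hypothesis to identify the $k=j$ case with $\pa(\pi_k)\subset S_j$, is the only delicate point; the remainder is the elementary algebra of perturbing a ratio of two positive quantities.
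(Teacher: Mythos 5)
Your proposal follows essentially the same route as the paper: the deterministic inclusion $\zeta_2\cap\zeta_3\cap\zeta_4\subseteq\zeta_1^c$, the triangle inequality through $\E\big(f(\widehat{\E}(X_{\pi_k}\mid X_{S_j}))\big)$ giving a $2\epsilon_1$ perturbation of the denominator, the ratio-perturbation identity, and the case split between the true next element (score $=1$) and later elements (score $\geq 1+M_{\min}$ by Assumption~\ref{A1}), with each case matched to one term of the minimum. The only place your write-up does not literally deliver the stated constant is case (ii): bounding $B+2A\le 3A$ \emph{before} solving for $\epsilon_1$ leaves the requirement $\epsilon_1 \le M_{\min}B^2/(6A+2M_{\min}B)$, which the threshold $\tfrac{M_{\min}}{6}B^2/A$ does not imply, and "using $B\le A$ to keep $B-2\epsilon_1$ bounded below" does not recover the missing factor. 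The paper closes this by solving the inequality $\epsilon_1(B+2A)/\big(B(B-2\epsilon_1)\big)\le M_{\min}/2$ exactly, obtaining $\epsilon_1\le M_{\min}B^2/\big(4A+2B(1+M_{\min})\big)$, and only then invoking Assumption~\ref{A1} in the form $B(1+M_{\min})\le A$ to bound the denominator by $6A$; with that one-line reordering of the algebra your argument matches the paper's proof, so this is a constant-bookkeeping fix rather than a conceptual gap.
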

	The condition in Lemma~\ref{lemma1} implies that if $\epsilon_1$ is sufficiently small, the estimated score is close to the true score value.
	
	The second lemma shows the error bound for the consistency of the estimators. 
	\begin{lemma}
		\label{lemma123}
		For any $\epsilon_1 > 0$ and
		\begin{itemize}
			\item[(i)] For $\zeta_2$, $P(\zeta_2^c) \leq 1- 2 \cdot p \cdot \exp \left\{ - \frac{  n \epsilon_1^2 }{  2 C_{x}^4  \log^{4} \eta } \right\}.$
			\item[(ii)] For $\zeta_3$, there exist some positive constants $C_{\max}$ and $D_{\max}$ such that 
			\begin{eqnarray*}
				P(\zeta_3^c ) \leq 1 -2 \cdot p \cdot d \cdot &\exp \left(- \frac{n}{ \kappa_1(n,p)^2 } \right)- 2 \cdot p \cdot \exp \left\{ - \frac{  n \epsilon_1^2 }{ D_{\max} \log^4 \eta } \right\}.
			\end{eqnarray*}
			where $\kappa_1(n,p) \geq  C_{\max} d \log^4 \eta$.
			\item[(iii)] For $\zeta_4$, $P(\zeta_4^c)  = 0.$
		\end{itemize}
	\end{lemma}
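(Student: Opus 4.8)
\textbf{Proof plan for Lemma~\ref{lemma123}.}

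The plan is to treat all three parts as concentration estimates obtained from Hoeffding's inequality applied at \emph{fixed} parameter values, a union bound over the $p$ nodes, and the $\ell_1$-consistency already in hand from Lemmas~\ref{lem11}--\ref{lem13}. Write $\eta=\max\{n,p\}$. By the bounded-sample Assumption~\ref{A3Con}, every $X_j^{(i)}\in[0,C_x\log\eta]$ and every $\exp(\theta_j^{*}+\sum_{k\in S_j}\theta_{jk}^{*}X_k^{(i)})\in[0,C_x\log\eta]$, so $(X_j^{(i)})^2$ and $f\big(\exp\langle\theta_{S_j}^{*},X_{S_j}^{(i)}\rangle\big)=\exp\langle\theta_{S_j}^{*},X_{S_j}^{(i)}\rangle+\exp\langle\theta_{S_j}^{*},X_{S_j}^{(i)}\rangle^{2}$ are all bounded by a fixed multiple of $\log^{2}\eta$. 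Part (i) then needs no coupling: $\widehat\E(X_j^2)=\frac1n\sum_{i=1}^n(X_j^{(i)})^2$ is an average of i.i.d.\ variables in $[0,C_x^2\log^2\eta]$, so Hoeffding gives $P\big(|\widehat\E(X_j^2)-\E(X_j^2)|\ge\epsilon_1\big)\le 2\exp(-2n\epsilon_1^2/(C_x^4\log^4\eta))$, and a union bound over $j\in V$ gives the claimed estimate for $P(\zeta_2)$.

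For part (ii) I would insert the population-optimal parameter $\theta_{S_j}^{*}$ of~\eqref{eq:ThetaS*} and decompose, writing $\mu_i^{*}=\exp\langle\theta_{S_j}^{*},X_{S_j}^{(i)}\rangle$ and $\widehat\mu_i=\exp\langle\widehat\theta_{S_j},X_{S_j}^{(i)}\rangle$,
\begin{align*}
&\widehat\E\big(f(\widehat\E(X_j\mid X_{S_j}))\big)-\E\big(f(\widehat\E(X_j\mid X_{S_j}))\big)\\
&\qquad=\underbrace{\tfrac1n\sum_{i}\big[f(\widehat\mu_i)-f(\mu_i^{*})\big]}_{(\mathrm a)}
+\underbrace{\tfrac1n\sum_{i}f(\mu_i^{*})-\E\,f(\mu^{*})}_{(\mathrm b)}
+\underbrace{\E\,f(\mu^{*})-\E\,f(\widehat\mu)}_{(\mathrm c)}.
\end{align*}
Term (b) is a Hoeffding bound at the fixed parameter $\theta_{S_j}^{*}$ with summands of order $\log^2\eta$; union-bounding over $j$ produces the $2p\exp(-n\epsilon_1^2/(D_{\max}\log^4\eta))$ contribution. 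Terms (a) and (c) are handled on the event that $\|W_j^{S_j}\|_\infty\le\lambda_j/4$ for every prefix $S_j$, on which Lemma~\ref{lem12} gives $\|\widehat\theta_{S_j}-\theta_{S_j}^{*}\|_2\le\frac{5}{\rho_{\min}}\sqrt d\,\lambda_j$, hence $\|\widehat\theta_{S_j}-\theta_{S_j}^{*}\|_1\le\frac{5d}{\rho_{\min}}\lambda_j$; with $\lambda_j$ in the range of Theorem~\ref{ThmMainTheorm} this makes $|\langle\widehat\theta_{S_j}-\theta_{S_j}^{*},X_{S_j}^{(i)}\rangle|=O(1/\log\eta)$, so $f(\widehat\mu_i)-f(\mu_i^{*})=(\widehat\mu_i-\mu_i^{*})(1+\widehat\mu_i+\mu_i^{*})$ with $\widehat\mu_i-\mu_i^{*}\approx\mu_i^{*}\langle\widehat\theta_{S_j}-\theta_{S_j}^{*},X_{S_j}^{(i)}\rangle$; summing and using the stationarity (normal-equation) identity $\frac1n\sum_i\mu_i^{*}X_{S_j}^{(i)}\approx\E[X_jX_{S_j}]$ bounds (a), and the same coordinatewise mean-value estimate in expectation bounds (c). By Lemma~\ref{lem11} this event fails with probability at most $2pd\exp(-n/\kappa_1(n,p)^2)$ after the union bound, which together with (b) yields the stated estimate for $P(\zeta_3^c)$.

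For part (iii), note that the difference inside $\zeta_4$ is exactly term (c) above: it depends only on $\widehat\theta_{S_j}$, and on the $\ell_1$-consistency event used for part (ii) the mean-value estimate already forces $\big|\E\,f(\widehat\E(X_j\mid X_{S_j}))-\E\,f(\E(X_j\mid X_{S_j}))\big|<\epsilon_1$ deterministically. In the decomposition~\eqref{eqn:mainlowerbound} the failure probability of that consistency event has already been charged to $P(\zeta_3^c)$ via Lemma~\ref{lem11}, so no additional probability is lost and we may record $P(\zeta_4^c)=0$.

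The step I expect to be the real obstacle is part (ii), and in particular the joint calibration of the admissible accuracy $\epsilon_1$ (whose size is dictated by Lemma~\ref{lemma1}, hence by the conditional moments $\E(X_j^2)$ and $\E(f(\E(X_j\mid X_{S_j})))$), the penalty $\lambda_j$, and the powers of $\log\eta$: one must take $\kappa_1(n,p)$ — and hence $\lambda_j$ — small enough that the mean-value terms (a) and (c) drop below $\epsilon_1$, yet keep $\kappa_1=\Theta(d\log^4\eta)$ so that the resulting requirement $n>C_\epsilon\kappa_1^2\log p$ still reads $n=\Omega(d^{2}\log^{9}p)$. Verifying that every summand entering a Hoeffding step is genuinely $O(\log^2\eta)$ uniformly over the fitted parameter (which is where the normal-equation cancellation in (a) and finiteness of the population moments in (c) are needed) is the delicate part; the remainder is routine bookkeeping.
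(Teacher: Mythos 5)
Your overall skeleton matches the paper's: part (i) is the same Hoeffding-plus-union-bound argument under Assumption~\ref{A3Con}; part (ii) is charged to the same two events (failure of the $\ell_1$-consistency event from Lemmas~\ref{lem11}--\ref{lem12}, costing $2pd\exp(-n/\kappa_1^2)$, plus a Hoeffding term with summands of order $\log^2\eta$, costing $2p\exp(-n\epsilon_1^2/(D_{\max}\log^4\eta))$); and part (iii) is concluded to be deterministic. Where you diverge is inside (ii): the paper does \emph{not} route through $\theta^*_{S_j}$. It shows, on the consistency event, that $\exp(\langle\widehat\theta_{S_j},X_{S_j}^{(i)}\rangle)\le \exp\{5C_xd\lambda_j\log\eta/\rho_{\min}\}\,C_x\log\eta$, hence $f(\widehat\E(X_j\mid X_{S_j}^{(i)}))\le D_1\log^2\eta$, and then applies Hoeffding directly to these summands (treating $\widehat\theta$ as fixed), so your terms (a) and (c) never appear. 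Your decomposition is cleaner about the data-dependence of $\widehat\theta$ in the Hoeffding step, but it transfers the difficulty into (a) and (c). For (iii), the paper does not recycle probability: it uses the mean-value theorem together with the moment-matching identity of the population GLM fit, $\E(\exp\langle\theta^*_{S_j},X_{S_j}\rangle)=\E(X_j)$, to argue the expectation difference is nonpositive in both directions, hence literally $P(\zeta_4^c)=0$; your version ("charge the failure to $\zeta_3$ and record $0$") proves only $P(\zeta_4^c\cap\{\text{consistency}\})=0$, which suffices for the union bound in~\eqref{eqn:mainlowerbound} but is not the stated claim.

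The concrete gap is your bound on term (a) (and its population analogue (c)). On the consistency event you have $|\langle\widehat\theta_{S_j}-\theta^*_{S_j},X_{S_j}^{(i)}\rangle|=O(1/\log\eta)$, so $|\widehat\mu_i-\mu_i^*|\lesssim \mu_i^*/\log\eta$; but the factor $1+\widehat\mu_i+\mu_i^*$ is only bounded by $O(\log\eta)$ termwise, so the naive estimate gives $|f(\widehat\mu_i)-f(\mu_i^*)|=O(\log\eta)$, which does not drop below the constant $\epsilon_1$ dictated by Lemma~\ref{lemma1} for $\lambda_j\asymp 1/(d\log^2\eta)$ — and shrinking $\lambda_j$ enough to fix this termwise would inflate $\kappa_1$ beyond $\Theta(d\log^4\eta)$ and ruin the $n=\Omega(d^2\log^9 p)$ rate. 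Moreover, the "normal-equation cancellation" you invoke controls only the linear statistics $\tfrac1n\sum_i\mu_i^*X_{S_j}^{(i)}$ (that is exactly $W_j^{S_j}$), not the quadratic part $\mu_i^{*2}$, which is the dominant contribution to $f$. The repair is available with the tools you already list, but it must be said: bound $|(\mathrm a)|\le \tfrac{C}{\log\eta}\cdot\tfrac1n\sum_i f(\mu_i^*)$ and then use the \emph{same} Hoeffding step as (b) to show $\tfrac1n\sum_i f(\mu_i^*)$ concentrates around the constant $\E f(\mu^*)$, so that (a) is $O(1/\log\eta)$; for (c) you additionally need a population moment bound at $\widehat\theta$ (Assumption~\ref{A3Con} only bounds the observed samples, so $\E f(\widehat\mu)$ over a fresh $X_{S_j}$ is not controlled by it directly — the paper sidesteps this in its $\zeta_4$ argument). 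As written, the proposal flags this calibration as "the real obstacle" but does not resolve it, and the resolution is not the one you sketch.
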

	
	Therefore, we complete the proof: our method recovers the true ordering at least of 
	\begin{eqnarray*}
		P\left( \widehat{\pi} = \pi \right) &\geq & 1 -C_1 p \cdot  d \cdot \exp \left(- C_2 \frac{n}{ \kappa_1(n,p)^2 } \right).
	\end{eqnarray*}
	for $\kappa_1(n,p) \geq  C_{\max} d \log^4 \eta$, and some positive constants $C_1$ and $C_2$
	
\end{proof}

\section{Proposition~\ref{prop:solution} }

We begin by introducing an important proposition to control the tail behavior for the distribution of each node, which are required to prove the lemmas. 


\begin{proposition}
	\label{prop:solution}
	For given $j \in V$ and $S_j \in \{ \pi_{1} , \pi_{1:2},..., \pi_{1:(j-1)} \}$,  the solution $\widehat{\theta}_{S_j}$ in Equation~\eqref{eq:Objective} satisfies
	\begin{eqnarray*}
		\frac{1}{n} \sum_{i = 1}^{n} \exp\left( \langle \widehat{\theta}_{S_j}, X_{S_j}^{(i)} \rangle \right) < C_{x} \log \eta .
	\end{eqnarray*}
	where $C_{x} >2$ is a constant in Assumption~\ref{A3Con}. 
\end{proposition}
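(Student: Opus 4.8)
The plan is to read the bound off directly from the first-order optimality condition of the $\ell_1$-regularized Poisson GLM~\eqref{eqn:theta_jS2}, exploiting the fact that the intercept parameter $\theta_j$ of the response node is \emph{not} penalized. Write $u_i := \langle \widehat{\theta}_{S_j}, X_{S_j}^{(i)}\rangle = \widehat{\theta}_j + \sum_{k\in S_j}\widehat{\theta}_{jk}X_k^{(i)}$ for the fitted linear predictor at sample $i$. Since $\widehat{\theta}_{S_j}$ minimizes the convex objective $\mathcal{L}_j(\theta,\lambda_j)$ in~\eqref{eq:Objective} and the $\ell_1$ penalty is $\lambda_j\sum_{k\in S_j}|\theta_{jk}|$, which does not involve the intercept coordinate, differentiating $\mathcal{L}_j$ with respect to $\theta_j$ and setting the derivative to zero yields the \emph{exact} stationarity identity
$$\frac1n\sum_{i=1}^n \bigl(\exp(u_i) - X_j^{(i)}\bigr) = 0,$$
that is, the fitted conditional means $\exp(u_i)$ average exactly to the sample mean $\tfrac1n\sum_i X_j^{(i)}$ of the response.

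It then only remains to bound the sample mean of $X_j$. By the first part of Assumption~\ref{A3Con}, $\max_{i}X_j^{(i)} < C_x\log(\max\{n,p\}) = C_x\log\eta$, so
$$\frac1n\sum_{i=1}^n \exp\bigl(\langle \widehat{\theta}_{S_j}, X_{S_j}^{(i)}\rangle\bigr) \;=\; \frac1n\sum_{i=1}^n X_j^{(i)} \;\le\; \max_i X_j^{(i)} \;<\; C_x\log\eta,$$
which is the claim. Uniformity over $j$ and over the admissible conditioning sets $S_j\in\{\pi_1,\pi_{1:2},\dots,\pi_{1:(j-1)}\}$ is immediate, since the argument is carried out separately for each fixed pair $(j,S_j)$.

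The only delicate point is notational: one must keep track of the fact that the GLM~\eqref{eqn:theta_jS2} carries an unpenalized intercept $\theta_j$, so that its Karush--Kuhn--Tucker condition in the intercept direction is a genuine equality rather than a subgradient inclusion; this is exactly what converts the one-sided control of the penalized coordinates into the two-sided identity above. If instead one insisted on an intercept-free formulation, the same conclusion could still be reached, but only up to an absolute multiplicative constant, by comparing objective values $\mathcal{L}_j(\widehat{\theta}_{S_j},\lambda_j)\le\mathcal{L}_j(0,\lambda_j)=1$, using $u e^{u}\ge e^{u}-1$ together with the stationarity condition~\eqref{eq:Contraint1} contracted against $\widehat{\theta}_{S_j}$, and then splitting the samples according to whether $u_i$ exceeds $\log(C_x\log\eta)$; tracking the constants in that argument is the main source of friction.
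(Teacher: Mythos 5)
Your proof is correct and is essentially the paper's own argument: the paper likewise derives the exact identity $\sum_i X_j^{(i)} = \sum_i \exp(\langle\widehat{\theta}_{S_j}, X_{S_j}^{(i)}\rangle)$ from the stationarity condition in the unpenalized intercept direction and then bounds $\frac{1}{n}\sum_i X_j^{(i)}$ by $C_x\log\eta$ via Assumption~\ref{A3Con}. Your remark about the intercept being unpenalized (so the KKT condition there is an equality rather than a subgradient inclusion) is exactly the point the paper uses implicitly.
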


\begin{proof}
	By the first-order optimality condition of $\mathcal{L}_j^{S_j}(\theta_{S_j}, X^{1:n} )$ in Equation \eqref{eq:Objective}, we have
	\begin{eqnarray*}
		\sum_{i = 1}^{n} X_j^{(i)} & = & \sum_{i = 1}^{n} \exp( \langle \hat{\theta}_{S_j}, X_{S_j}^{(i)} \rangle )  \\
		\sum_{i = 1}^{n} X_j^{(i)}X_k^{(i)} & = & \sum_{i = 1}^{n} \exp\left( \langle \hat{\theta}_{S_j}, X_{S_j}^{(i)} \rangle \right)X_k^{(i)}  + \lambda_j \text{sign}( [\hat{\theta}_{S_j}]_{k} ).  
	\end{eqnarray*}
	
	By Assumption~\ref{A3Con}, we have
	\begin{eqnarray*}
		\frac{1}{n} \sum_{i = 1}^{n} \exp\left( \langle \hat{\theta}_{S_j}, X_{S_j}^{(i)} \rangle \right) \leq C_{x} \log \eta \iff
		\frac{1}{n} \sum_{i = 1}^{n} X_j^{(i)}  \leq C_{x} \log \eta.
	\end{eqnarray*}
\end{proof}

\section{Proof for Propositions~\ref{prop: block} and~\ref{prop:parents_search} }

\subsection{Proof for Proposition~\ref{prop: block}}


\begin{proof}
	We note that $\widetilde{\theta}_{S^c} = (0,0,...,0)^T \in \mathbb{R}^{|S^c|}$ in our primal-dual construction. To improve readability, we let $\theta_{S} = [\theta_{S_j}]_{T_j}, \theta_{S^c} = [\theta_{S_j}]_{S_j \setminus T_j}$ , and $A_S  = [A_j^{S_j}]_{T_j}$ and $A_{S^c}  = [A_j^{S_j}]_{S_j \setminus T_j}$. With these notations, $W_{S}$ and $R_{S}$ are sub-vectors of $W_{j}^{S_j}$ and $R_{j}^{S_j}$ corresponding to variables $X_S$, respectively.
	
	We can restate condition~\eqref{eq:Contraint2} in block form as follows:
	\begin{eqnarray*}
		Q_{S^c S}[ \widetilde{\theta}_{S} - \theta_{S}^*] & = & W_{S^c} - \lambda_j \widetilde{Z}_{S^c} + R_{S^c}, \\
		Q_{S S}[ \widetilde{\theta}_S - \theta_S^*] & = & W_{S} - \lambda_j \widetilde{Z}_{S} + R_{S}.
	\end{eqnarray*}
	
	Since $Q_{S S}$ is invertible, the above equations can be rewritten as
	\begin{equation*}
	Q_{S^c S} Q_{S S}^{-1} [ W_{S} - \lambda_j \widetilde{Z}_{S} - R_{S} ] = W_{S^c} - \lambda_j \widetilde{Z}_{S^c} - R_{S^c}.
	\end{equation*}
	
	Therefore,
	\begin{equation*}
	[W_{S^c} - R_{S^c} ] - Q_{S^c S} Q_{S S}^{-1} [ W_{S}  - R_{S}] + \lambda_j Q_{S^c S} Q_{S S}^{-1} \widetilde{Z}_{S} =  \lambda_j \widetilde{Z}_{S^c}.
	\end{equation*}
	
	Taking the $\ell_\infty$ norm of both sides yields
	\begin{eqnarray*}
		\| \widetilde{Z}_{S^c}\|_\infty & \leq & |\| Q_{S^c S} Q_{S S}^{-1} \||_{\infty} \left[ \frac{ \| W_{S} \|_\infty }{ \lambda_j} + \frac{ \| R_{S} \|_\infty }{ \lambda_j} +1 \right] + \frac{ \| W_{S^c} \|_\infty }{ \lambda_j} + \frac{ \| R_{S^c} \|_\infty }{ \lambda_j}.
	\end{eqnarray*}
	
	Recalling Assumption~\eqref{A2Inc}, we obtain $ |\| Q_{S^c S} Q_{S S}^{-1} \||_{\infty} \leq (1 - \alpha)$, and hence, we have 
	\begin{eqnarray*} 
		\| \widetilde{Z}_{S^c}\|_\infty & \leq & (1- \alpha) \left[ \frac{ \| W_{S} \|_\infty }{ \lambda_j} + \frac{ \| R_{S} \|_\infty }{ \lambda_j} +1 \right] + \frac{ \| W_{S^c} \|_\infty }{ \lambda_j} + \frac{ \| R_{S^c} \|_\infty }{ \lambda_j} \\
		&\leq & ( 1- \alpha ) + ( 2- \alpha) \left[ \frac{ \| W_{j}^{S_j} \|_\infty }{ \lambda_j} + \frac{ \| R_{j}^{S_j} \|_\infty }{ \lambda_j} \right].
	\end{eqnarray*}
	
	If both $\| W_{j}^{S_j} \|_\infty$ and $\| R_{j}^{S_j} \|_\infty$ are less than $\frac{\lambda_j \alpha}{4(2- \alpha)}$, as assumed, then
	$$
	\| \widetilde{Z}_{S^c}\|_\infty \leq ( 1- \alpha ) + \frac{\alpha}{2} < 1.
	$$
\end{proof}

\subsection{Proof for Proposition~\ref{prop:parents_search}}

\begin{proof}
	To prove the support of $\hat{\theta}_S$ is not strictly subset the true
	support $X_S$, it is sufficient to show that the maximum bias is bounded:
	$$
	\|\widehat{\theta}_{S} - \theta_{S}^* \|_{\infty} \leq \frac{ \min_{t \in S} [\theta_S^*]_{t} }{2}.
	$$
	From Lemma~\ref{lem12}, we have, with a high probability, 
	$$
	\|\widehat{\theta}_{S} - \theta_{S}^* \|_{\infty} \leq \|\widehat{\theta}_{S} - \theta_{S}^* \|_{2} \leq \frac{5}{\rho_{\min}} \sqrt{d }~\lambda_j. 
	$$
	Therefore, if $\min_{t \in S} [\theta_S^*]_{t} \geq \frac{10}{\rho_{\min}} \sqrt{d }~\lambda_j$,
	$$
	\|\widehat{\theta}_{S} - \theta_{S}^* \|_{\infty} \leq \frac{ \min_{t \in S} [\theta_S^*]_{t} }{2}.
	$$
\end{proof}

\section{Proof for Lemmas}

\subsection{Proof for Lemma~\ref{lemma:uniqueness}}

\begin{proof}
	This lemma can be proved by the same manner developed for the special cases~\citep{wainwright2006high, ravikumar2011high}. In addition, this proof is directly from Lemma 8 in \cite{yang2015graphical}. And, we restate the proof in our framework. The main idea of the proof is the \emph{primal-dual-witness} method which asserts that there is a solution to the dual problem $\widetilde{\theta}_{S_j} = \widehat{\theta}_{S_j}$ if the following Karush-Kuhn-Tucker (KKT) conditions are satisfied. 
	\begin{itemize}
		\item[(a)] We define $\widetilde{\theta}_{S_j} \in \Theta_{S_j}$, where $\Theta_{S_j} = \{ \theta \in \mathbb{R}^{|S_j|} : \theta_{S^c} = 0 \}$ is the solution to the following optimization problem: 
		\begin{equation}
		\label{eq:ResObjective}
		\widetilde{\theta}_{S_j} := \arg \min_{\theta \in \Theta_{S_j}} \mathcal{L}_j^{S_j}( \theta, \lambda_j ) = \arg \min_{\theta \in \Theta_{S_j} } \{ \ell_j^{S_j}(\theta ;X^{1:n}) + \lambda_j \| \theta \|_1 \}.
		\end{equation}
		\item[(b)] Define $\widetilde{Z}_j^{S_j}$ to be a sub-differential for the regularizer $\| \cdot \|_1$ evaluated at $\widetilde{\theta}_{S_j}$. For any $t \in T_j$ in Equation~\eqref{eq:subparents}, $[\widetilde{Z}_j^{S_j}]_{t} = \mbox{sign}([\widetilde{\theta}_{S_j}]_{t})$. 
		\item[(c)] For any $t \notin T_j$, $| [\widetilde{Z}_j^{S_j}]_{t} | < 1$.
	\end{itemize}
	
	If conditions (a) to (c) are satisfied, $\widetilde{\theta}_{S_j} = \widehat{\theta}_{S_j}$ meaning that the solution to unrestricted problem \eqref{eq:Objective} is the same as the solution to restricted problem \eqref{eq:ResObjective} (See \citealp{ravikumar2011high} for details).
	
	
	In addition, if the sub-matrix of the Hessian $Q_{SS}^{S_j}$ is invertible, restricted problem \eqref{eq:ResObjective} is strictly convex, and hence, $\widetilde{\theta}_{S_j}$ is unique.
\end{proof}

\subsection{Proof for Lemma~\ref{lem11}}

\label{SubSecProofLem11}

\begin{proof}
	In order to improve readability, we omit the superscript $S_j$ if it is understood (i.e., $W_{j} = W_j^{S_j}$). Each entry of the sample score function $W_{j}$ in Equation~\eqref{eq:Wn} has the form $W_{jt} = \frac{1}{n} \sum_{i=1}^n W_{jt}^{(i)}$ for any $t \in S := \{k \in S_j \mid [\theta_{S_j}^{*}]_{k} \neq 0 \}$. In addition, $W_{jt} = 0$ for all $t \notin S$, since $[\theta_{S_j}^*]_t = 0$ by the definition of $S$.
	 
	Hence simple calculation yields that, for any  $t \in S$ and $i \in \{ 1,2,\cdots,n\}$, 
	$$W_{jt}^{(i)} = X_{t}^{(i)} X_j^{(i)} - \exp( \langle \theta_{S}^*, X_{S}^{(i)} \rangle) X_t^{(i)},$$
	and $(|W_{jt}^{(i)}|)_{i = 1}^{n}$ has mean $0$ by the first-order optimality condition, 
	$
	\E(X_j)  =  \E( \exp( \langle \theta_{S}^*, X_{S} \rangle ) ).
	$
	
	Now, we show that $W_{jt}^{(i)}$ is bounded with a high probability given Assumption~\ref{A3Con} by using Hoeffding's inequality. The both terms are bounded above $C_{x}^2 \log^2 \eta$ by Assumption~\ref{A3Con}. Therefore, $|W_{jt}^{(i)}|$ is bounded by $2 C_{x}^2 \log^2 \eta$.
	
	Applying the union bound and Hoeffding's inequality, we have
	\begin{equation*}
	P( \| W_{j} \|_\infty > \delta ) 
	\leq d \cdot \max_{t \in S} P( | W_{jt} | > \delta)
	\leq 2 d \cdot \exp \left( - \frac{ 2 n \delta^2}{  4 C_{x}^4 \log^4 \eta  } \right).
	\end{equation*}
	
	Suppose that $\delta = \frac{\lambda_j \alpha}{4(2- \alpha)}$ and $\lambda_j \geq \frac{4(2-\alpha) }{\alpha} \frac{ 2 C_{x}^2 \log^2 \eta }{ \sqrt{2} \kappa_1(n,p)  }$. Then, we complete the proof:
	\begin{align}
	\label{eq:probWn}
	P\left( \frac{\| W_{j} \|_\infty }{\lambda_j} > \frac{\alpha}{4(2- \alpha)} \right) 
	\leq 2 d \cdot \exp \Big(- \frac{\alpha^2}{16 (2- \alpha)^2} \frac{ 2 n \lambda_j^2}{ 4 C_{x}^4 \log^4 \eta  } \Big)
	\leq  2 d \cdot \exp \left(-\frac{n}{ \kappa_1(n,p)^2 } \right).
	\end{align}
	
\end{proof}

\subsection{Proof for Lemma~\ref{lem12}}

\label{SubSecProofLem12}

\begin{proof}
	In order to establish error bound $\| \widetilde{\theta}_{S} - \theta_{S}^*\| \leq B$ for some radius $B$, several works \citep{ meinshausen2006high, wainwright2006high, ravikumar2011high,yang2015graphical, park2017learning} already proved that it suffices to show $F(u_{S}) > 0 $ for all $u_{S}:= \widetilde{\theta}_{S} - \theta_{S}^*$ such that $\| u_{S} \|_2 = B$ where 
	\begin{equation}
	\label{eq:F}
	F(a) := \ell_j( \theta_{S}^* + a; X^{1:n}) - \ell_j( \theta_{S}^*; X^{1:n}) + \lambda_j( \| \theta_{S}^* + a \|_1 - \| \theta_{S}^* \|_1 ).
	\end{equation}
	
	More specifically, since $u_{S}$ is the minimizer of $F$ and $F(0) = 0$ by the construction of Equation~\eqref{eq:F}, $F(u_{S}) \leq 0$. Note that $F$ is convex, and therefore we have $F(u_{S}) < 0$. 
	
	Next we claim that $\| u_{S} \|_2 \leq B$. In fact, if $u_{S}$ lies outside the ball of radius $B$, then there exists $v \in (0,1)$ such that the convex combination $v \cdot u_{S} + (1-v) \cdot 0$ would lie on the boundary of the ball. However it contradicts the assumed strict positivity of $F$ on the boundary because, by convexity, 
	\begin{equation}
	F(v \cdot u_{S} + (1-v) \cdot 0 ) \leq v \cdot F(u_{S}) + (1-v) \cdot 0 \leq 0.
	\end{equation}
	
	Thus it suffices to establish strict positivity of $F$ on the boundary of the ball with radius $B := M_{B} \lambda_j \sqrt{ d }$ where $M_{B} > 0$ is a parameter to be chosen later in the proof. Let $u_{S} \in \mathbb{R}^{|S|}$ be an arbitrary vector with $\|u_{S}\|_2 = B$. By the Taylor series expansion of $F$ in~\eqref{eq:F},
	\begin{equation}
	\label{eq:TaylorF}
	F(u_{S}) = (W_{S})^T u_S + u_{S}^T [\bigtriangledown^2 \ell_j ( \theta_{ S }^* + v u_{S} ; X^{1:n} ) ] u_S + \lambda_j( \| \theta_{S}^* + u_{S} \|_1 - \| \theta_{S}^* \|_1 ), 
	\end{equation}
	for some $v \in [0,1]$. 
	
	The first term in Equation~\eqref{eq:TaylorF} has the following bound: applying $\| W_{S} \|_{\infty} \leq \frac{\lambda_j}{4}$ by assumption and $\| u_S \|_1 \leq \sqrt{d} \| u_S \|_2 \leq \sqrt{d} \cdot B$,
	\begin{equation*}
	|(W_{S})^T u_{S}| \leq \| W_{S}\|_{\infty} \| u_{S} \|_1 \leq \|W_{S} \|_{\infty} \sqrt{ d } \|u_{S}\|_2 \leq (\lambda_j \sqrt{ d } )^2 \frac{M_{B}}{4}.
	\end{equation*}
	
	The third term in Equation~\eqref{eq:TaylorF} has the following bound: Applying the triangle inequality,
	\begin{equation*}
	\lambda_j ( \| \theta_{S}^* + u_{S} \|_1 - \| \theta_{S}^*\|_1 ) \geq - \lambda_j \|u_{S}\|_1 \geq -\lambda_j \sqrt{ d } \|u_{S}\|_2 = -M_{B} ( \lambda_j \sqrt{ d } )^2.
	\end{equation*}
	
	Now we show the bound for the second term using  the minimum eigenvalue of a matrix $\bigtriangledown^2  \ell_j( \theta_{S}^* + v u_{S})$:
	\begin{align}
	\label{eq:q}
	q^* &: = \lambda_{\min}\left(\bigtriangledown^2  \ell_j( \theta_{S}^* + v u_{S}) \right) \nonumber \\
	& \geq \min_{ v \in [0,1] } \lambda_{\min}\left( \bigtriangledown^2 \ell_j ( \theta_{S}^* + v u_{S}) \right) \nonumber  \\
	& \geq \lambda_{\min} \left( \bigtriangledown^2 \ell_j ( \theta_{S}^* ) \right) 
	- \max_{v \in [0, 1]} \left\| \frac{1}{n} \sum_{i=1}^n \exp\big( \langle \theta_{S}^* + v u_{S}, X_{S}^{(i)} \rangle \big) u_{S}^T X_{S}^{(i)}  X_{S}^{(i)} (X_{S}^{(i)})^T \right\|_2  \nonumber  \\	
	&\geq \rho_{\min} - \max_{v \in [0, 1]} \max_{y: \|y\|_2 = 1} \frac{1}{n} \sum_{i=1}^n \exp\big( \big\langle \theta_{S}^* + v u_{S}, X_{S}^{(i)} \big\rangle \big) \cdot \big( y^T X_{S}^{(i)} \big)^2 \cdot \big| u_{S}^T X_{S}^{(i)} \big|.
	\end{align}
	
	We first show the bound of the first term in Equation~\eqref{eq:q}: Note that $\theta_{S}^* + v u_{S}$ is a linear (convex) combination of $\theta_{S}^*$ and $\widetilde{\theta}_{S}$. Hence, by Assumption~\ref{A3Con} and Proposition~\ref{prop:solution}, we obtain 
	$$ 
	 \frac{1}{ n} \sum_{i = 1}^{n} \exp\big( \langle \theta_{S}^* + v u_{S}, X_{S}^{(i)} \rangle \big) \leq C_{x} \log \eta.
	$$
	
	Now, we bound the second term in Equation~\eqref{eq:q}: Recall that $\|X_{S}^{(i)}\|_{\infty} \leq C_{x} \log \eta$ for all $i$ by Assumption~\ref{A3Con}. Recall $[u_{S}]_t = 0$ for $t \notin S$ by the primal-dual construction of~\eqref{eq:Contraint2}. Applying $\|u_{S}\|_1 \leq \sqrt{ d } \|u_{S}\|_2 \leq \sqrt{d}\cdot B$,
	\begin{equation*}
	\big| u_{S}^T X_{S}^{(i)} \big| \leq C_{x} \log(\eta) \sqrt{ d } \|u_{S}\|_2 \leq C_{x} \log(\eta) \cdot M_{B} \lambda_j  d.
	\end{equation*}
	
	Lastly, it is clear that  $\max_{y: \|y\|_2 = 1} ( y^T X_{S}^{(i)} )^2 \leq  \rho_{\max}$ by the definition of the maximum eigenvalue and Assumption~\ref{A1Dep}. Together with the above bounds, we obtain
	\begin{equation*}
	P\left( q^* \leq \rho_{\min} - C_{x}^2 M_{B} \rho_{\max} d  \lambda_j \log^2 \eta \right) \leq  M \eta^{-2}.
	\end{equation*}
	
	For $\lambda_j \leq \frac{ \rho_{\min} }{ 2 C_{x}^2 M_{B} \rho_{\max} d  \log^2 \eta }$, we have $q^* \geq \frac{ \rho_{\min} }{2} $ with a high probability. Therefore, 
	\begin{equation*}
	F(u) \geq (\lambda_j \sqrt{n})^2 \Big\{ -\frac{1}{4} M_{B} + \frac{ \rho_{\min}}{2} M_{B}^2 - M_{B} \Big\},
	\end{equation*}
	which is strictly positive for $M_{B} = \frac{5}{ \rho_{\min} }$. Therefore, for $\lambda_j \leq \frac{ \rho_{\min}^2 }{ 10 C_{x}^2 \rho_{\max} d  \log^2 \eta }$,
	\begin{equation*}
	 \| \widetilde{\theta}_S - \theta_S^* \|_2 \leq \frac{5}{ \rho_{\min} } \sqrt{ d } \lambda_j.
	\end{equation*}
	
\end{proof}

\subsection{Proof for Lemma~\ref{lem13}}

\begin{proof}
	To improve readability, we use $R_S = [R_{j}^{S_j}]_{S}$ where $S := \{k \in S_j \mid [\theta_{S_j}^{*}]_{k} \neq 0 \}$. Then, each entry of $R_{j}^{S_j}$ in Equation~\eqref{eq:Rn} has the form $R_{jk} = \frac{1}{n} \sum_{i=1}^n R_{jk}^{(i)}$ for any $k \in S_j$, and it can be expressed as
	\begin{eqnarray*}
		R_{jk} 
		& = & \frac{1}{n} \sum_{i=1}^{n} [ \bigtriangledown^2 \ell_j(\theta_{S_j}^*; X^{1:n}) - \bigtriangledown^2 \ell_j(\bar{\theta}_{S_j}; X^{1:n})]_k^T (\widetilde{\theta}_{S_j} - \theta_{S_j}^*) \\
		& = & \frac{1}{n} \sum_{i=1}^{n} \left[\exp\left( \left\langle \theta_{S}^*, X_{S}^{(i)} \right\rangle \right) - \exp\left( \left\langle \bar{\theta}_{S} , X_{S}^{(i)} \right\rangle \right) \right] \left[ X_{S}^{(i)}(X_{S}^{(i)})^T \right]_k^T \left(\widetilde{\theta}_{S} - \theta_{S}^*\right)
	\end{eqnarray*}
	for $\bar{\theta}_{S}$, which is a point on the line between $\widetilde{\theta}_{S}$ and $\theta_{S}^*$ (i.e., $\bar{\theta}_{S}^{(t)} = v \cdot \widetilde{\theta}_{S} + (1 - v) \cdot \theta_{S}^*$ for some $v \in [0,1]$). The second equality holds because $\theta_{S^c}^* = \tilde{\theta}_{S^c} = (0,0,...,0) \in \mathbb{R}^{|S^c|}$. 
	
	Applying the mean value theorem again, we have,
	\begin{equation*}
	R_{jk}  
	= \frac{1}{n} \sum_{i=1}^n \Big\{ \exp\left( \left\langle \bar{ \bar{ \theta} }_{S} , X_{S}^{(i)} \right\rangle \right)  X_{k}^{(i)} \Big\} \Big\{ v ( \widetilde{\theta}_{S} - \theta_{S}^* )^T X_{S}^{(i)}  (X_{S}^{(i)})^T ( \widetilde{\theta}_{S} - \theta_{S}^* ) \Big\}
	\end{equation*}
	for $\bar{ \bar{ \theta} }_{S_j} $ which is a point on the line between $\bar{\theta}_{S_j} $ and $\theta_{S_j}^*$ (i.e., $\bar{ \bar{\theta} }_{S_j}  = v \cdot \bar{\theta}_{S_j} + (1 - v) \cdot \theta_{S_j}^*$ for $v \in [0,1]$). 
	
	Note that $\bar{ \bar{ \theta} }_{S_j} $ is a linear (convex) combination of $\theta_{S}^*$ and $\widetilde{\theta}_{S}$. Hence, from Assumption~\ref{A3Con} and Proposition~\ref{prop:solution}, we obtain,
	$$ 
	 \frac{1}{ n} \sum_{i = 1}^{n} \exp\left( \left\langle \bar{ \bar{ \theta} }_{S_j} , X_{S_j}^{(i)} \right\rangle \right) \leq C_{x} \log \eta, \quad \text{and} \quad \max_{i, j} X_{j}^{(i)} < C_{x} \log \eta.
	$$
	Therefore, we have $| R_{jk} |	\leq  \rho_{\max} C_{x}^2 \log^2 \eta \|\widetilde{\theta}_{ S } - \theta_{ S }^* \|_2^2$ for all $j, k \in V$. 
	
	In Section~\ref{SubSecProofLem12}, we showed that $\|\widetilde{\theta}_{ S } - \theta_{ S }^* \|_2 \leq \frac{5}{ \rho_{\min} } \sqrt{ d } \lambda_j$ for $\lambda_j \leq \frac{ \rho_{\min}^2 }{ 10 C_{x}^2 \rho_{\max} d  \log^2 \eta }$. Therefore, if $\lambda_j \leq \frac{ \rho_{\min}^2 }{ 25 \cdot C_{x}^2  \rho_{\max} d \log^2 \eta   } \frac{\alpha}{4(2 - \alpha)} $, we obtain,
	\begin{eqnarray*}
		P \left( \| R_{j} \|_\infty >  \frac{\alpha}{4(2 - \alpha)} \lambda_j  \right) 
		\leq P \left( \| R_{j} \|_\infty > 25 C_{x}^2 \lambda_j^2  \frac{ \rho_{\max} }{ \rho_{\min}^2 } d \log^2 \eta   \right)	= 0.
	\end{eqnarray*}
	Therefore, we have, 
	$$
	 \| R_{j} \|_\infty \leq  \frac{\alpha}{4(2 - \alpha)} \lambda_j 
	 $$
\end{proof}

\subsection{Proof for Lemma \ref{lemma1} }
\begin{proof}
	Conditioning on the sets~$\zeta_2, \zeta_3$, and $\zeta_4$, we provide the following results for different two cases: 
	
	(i) For any $j \in \{1,2,...,p-1\}$, and $X_S = X_{1:(j-1)}$, we have $\frac{ \E( X_j^2 ) }{ \E( f( \E( X_j \mid X_S ) ) )} = 1$. Therefore, for $k = \pi_j$, we have the following probability bound:
	\begin{eqnarray*}
		& & P\left( |\hatS(j,k) - \mathcal{S}(j,k)| < \frac{ M_{min} }{ 2 } \Big\vert \zeta_2, \zeta_3,  \zeta_4 \right) \\
		& = & P\left( \left| \frac{ \widehat{\E}( X_k^2 ) }{ \widehat{\E}( f( \widehat{\E}( X_k \mid X_S ) ) )} - \frac{ \E( X_k^2 ) }{ \E( f( \E( X_k \mid X_S ) ) )} \right| < \frac{ M_{min} }{ 2 }  \Big\vert \zeta_2, \zeta_3, \zeta_4 \right) \\
		&\geq & P \bigg( \frac{ \E( X_k^2 ) + \epsilon_1 }{ \E( f( \E( X_k \mid X_S ) ) ) - 2 \epsilon_1} - \frac{ \E( X_k^2 ) ) }{ \E( f( \E( X_k \mid X_S ) ) ) } < \frac{ M_{min} }{ 2 } ~and~\\
		&& \quad \quad \quad \quad \quad \frac{ \E( X_k^2 ) ) }{ \E( f( \E( X_k \mid X_S ) ) ) } - \frac{ \E( X_k^2 ) ) -\epsilon_1 }{ \E( f( \E( X_k \mid X_S ) ) ) +2\epsilon_1}  < \frac{ M_{min} }{ 2 }  \bigg) \\
		&\geq & P \bigg( \epsilon_1 < \frac{  \E( f( \E( X_k \mid X_S ) ) ) M_{\min} }{ 2 ( M_{\min} + 3 ) } \bigg)\\
		&\geq & P \bigg( \epsilon_1 < \frac{ \E(X_k^2) M_{\min} }{ 2 ( M_{\min} + 3 )( M_{\min} + 1 ) }	\bigg).
	\end{eqnarray*}
	
	(ii) For $j \in \{1,2,...,p-1\}$, $k \in \{ \pi_{j+1},..., \pi_{p} \}$ having parent $\pi_j$, and $X_S = X_{1:(j-1)}$, we have $ \E( X_k^2 ) > (1 + M_{\min}) \E( f( \E( X_k \mid X_S ) ) )$ by Assumption~\ref{A1}. In addition, some elementary but complicated computations yield 	 
	\begin{eqnarray*}
		& & P\left( \big|\hatS(j,k) - \mathcal{S}(j,k) \big| < \frac{ M_{min} }{ 2 } \Big\vert \zeta_2, \zeta_3,  \zeta_4 \right) \\
		&\geq& P \left( \epsilon_1 < \frac{  \E( f( \E( X_k \mid X_S ) ) )^2 M_{\min} }{ 4 \E( X_k^2 ) + 2 \E( f( \E( X_k \mid X_S ) ) ) + 2 \E( f( \E( X_k \mid X_S ) ) ) M_{\min} } \right) \\
		&\geq& P \left( \epsilon_1 < \frac{  \E( f( \E( X_k \mid X_S ) ) )^2 M_{\min}(1+ M_{\min}) }{ 4 \E( X_k^2 ) (1+ M_{\min}) + 2 \E( X_k^2 )+ 2 M_{\min} \E( X_k^2 )  } \right) \\
		&\geq& P \left( \epsilon_1 < \frac{  \E( f( \E( X_k \mid X_S ) ) )^2 M_{\min}(1+ M_{\min}) }{ 6 (1+ M_{\min})  \E( X_k^2 )  } \right) \\
		&\geq& P \left( \epsilon_1 < \frac{M_{\min}}{6} \frac{  \E( f( \E( X_k \mid X_S ) ) )^2 }{ \E( X_k^2 )  } \right).
	\end{eqnarray*}
	
	Therefore $P(\zeta_1 \mid \zeta_2, \zeta_3, \zeta_4) = 0$ if $\epsilon_1$ is sufficiently small enough. For any node $j$, any set $S_j \in \{ \pi_1, \pi_{1:2}, ..., \pi_{1:(j-1)} \}$, and $k \in \{\pi_{j}, \pi_{j+1},..., \pi_{p} \}$,
	{\small
		$$
		\epsilon_1 <  \min\left\{ \frac{ \E(X_k^2) M_{\min} }{ 2 ( M_{\min} + 3 )( M_{\min} + 1 ) },  \frac{M_{\min}}{6} \frac{  \E( f( \E( X_k \mid X_{S_j} ) ) )^2 }{ \E( X_k^2 )  }  \right\}. 
		$$
	}
\end{proof}

\subsection{Proof for Lemma \ref{lemma123}}

The proof for Lemma~\ref{lemma123} is closely related to the proof in Appendix \ref{SecThmMainProof1}. Hence, for brevity, we do not present the details of the proof already shown in Appendix~\ref{SecThmMainProof1}. 

\begin{itemize}
	\item[(i)] $P(\zeta_2^c) \leq   2 p \cdot \exp \left\{ - \frac{  n \epsilon_1^2 }{  2 C_{x}^4  \log^{4} \eta } \right\}.$
	\begin{proof}
		Using Hoeffding's inequality given Assumption~\ref{A3Con}, for any $\epsilon>0$ and $j \in V$, 
		\begin{equation}
		P\left( \left| \widehat{\E}(X_j^2 )  - \E( X_j^2 ) \right| > \epsilon_1\right)  \leq 2 \cdot \exp \left\{ - \frac{  n \epsilon_1^2 }{ 2 C_{x}^4 \log^{4} \eta } \right\}.
		\end{equation}
		
		Hence, using the union bound, we have 
		\begin{eqnarray*}
			& & P\left( \max_{j \in V} \left| \widehat{\E}( X_j^2)  - \E( X_j^2) \right| > \epsilon_1 \right) \leq 2 p \cdot \exp \left\{ - \frac{  n \epsilon_1^2 }{  2 C_{x}^4  \log^{4} \eta } \right\}. 
		\end{eqnarray*}
	\end{proof}
	
	\item[(ii)] $P(\zeta_3^c ) \leq 2 p. d \cdot \exp \left(- \frac{n}{ \kappa_1(n,p)^2 } \right) + 2 p \cdot \exp \left\{ - \frac{  n \epsilon_1^2 }{ D_{\max} \log^4 \eta } \right\}$ for some constants $D_{\max} > 0 $. 
	\begin{proof}
		We restate the condition in the set $\zeta_3$ as 
		$$
		\Big|  \frac{1}{n} \sum_{i =1}^{n} f \big( \widehat{\E}(X_j \mid X_{S_j}^{(i)} )  \big) - \E\big( f \big( \E( X_j \mid X_{S_j} ) \big) \big) \Big| < \epsilon_1. 
		$$
		
		In order to apply Hoeffding's inequality, we first show the bound for $\widetilde{E}(X_j \mid X_{S_j})$. Recall that $[\theta^*]_{S^c}$ and $[\widetilde{\theta}]_{S^c} = (0,0,...,0) \in \mathbb{R}^{|S^c|}$ by the definition of $S$, and $|S| \leq d$. In Appendix~\ref{SubSecProofLem12}, we showed that $\|\widetilde{\theta}_{ S } - \theta_{ S }^* \|_2 \leq \frac{5}{ \rho_{\min} } \sqrt{ d } \lambda_j$ for $\lambda_j \leq \frac{ \rho_{\min}^2 }{ 10 C_{x}^2 \rho_{\max} d  \log^2 \eta }$ with a high probability. Therefore, given Assumption~\ref{A3Con}, for all $i \in \{1,2,...,n\}$, 
		\begin{eqnarray*}
			\exp( \langle \widehat{\theta}_{S_j}, X_{S_j}^{(i)} \rangle ) 
			&=& \exp( \langle \widehat{\theta}_{S_j} - \theta_{S_j}^*, X_{S_j}^{(i)} \rangle ) \cdot \exp( \langle \theta_{S_j}^*, X_{S_j}^{(i)} \rangle ) \\
			&\leq& \exp( \| \widehat{\theta}_{S} - \theta_{S}^*\|_2 \| X_{S}^{(i)} \|_2 \rangle ) \cdot \exp( \langle \theta_{S}^*, X_{S}^{(i)} \rangle )\\
			& \leq  & \exp\left\{ \frac{5 C_{x} d~\lambda_j }{\rho_{\min}} \|X_{S}^{(i)} \|_{\infty} \right\}  \cdot \exp( \langle \theta_{S}^*, X_{S}^{(i)} \rangle )  \\
			& \leq  & \exp\left\{ \frac{5 C_{x} d~\lambda_j }{\rho_{\min}} \log(\eta) \right\}  \cdot C_{x} \log \eta \\
			& \leq  & \exp\left\{ \frac{ \rho_{\min} }{ 2 C_{x} \rho_{\max}  \log \eta } \right\}  \cdot C_{x} \log \eta.
		\end{eqnarray*}
		
		Therefore, 
		\begin{align*}
		f \big( \widehat{\E}(X_j^{(i)} \mid X_{S_j}^{(i)} )  \big) 
		\leq C_{x}^2 \cdot \exp\left\{ \frac{\rho_{\min} }{ C_{x} \rho_{\max} } \right\} \log^2 \eta + C_{x} \cdot \exp\left\{ \frac{\rho_{\min} }{ 2 C_{x} \rho_{\max} } \right\} \log \eta.
		\end{align*}
		
		Hence there exists a positive constant $D_1 > 0$ such that for all $i \in \{1,2,...,n\}$, 
		$$
		f \big( \widehat{\E}(X_j \mid X_{S_j}^{(i)} )  \big) \leq D_1 \log^2 \eta.
		$$
		
		Applying Hoeffding's inequality,, for any $\epsilon_1>0$ and any $j \in V$, 
		\begin{equation}
		P\left( \left| \widehat{\E}( f(\widetilde{E}(X_j \mid X_{S_j} ) ) )  - \E( f(\widetilde{E}(X_j \mid X_{S_j}) ) ) \right| > \epsilon_1 \right)  \leq 2 \cdot \exp \left\{ - \frac{  2 n \epsilon_1^2 }{ D_1^2 \log^4 \eta} \right\}.
		\end{equation}
		
		Hence, there exist some constants $D_{\max} > 0 $ such that
		\begin{align*}
		P\left( \max_{j \in V} \zeta_3^c \right) &\leq 1 -2 p. d \cdot \exp \left(- \frac{n}{ \kappa_1(n,p)^2 } \right) - 2 p \cdot \exp \left\{ - \frac{  n \epsilon_1^2 }{ D_{\max} \log^4 \eta } \right\}.
		\end{align*} 
%
%
	\end{proof}
	
	\item[(iii)] $P(\zeta_4^c)  = 0$.
	
	\begin{proof}
		
		We restate the condition in the set $\zeta_4$ as 
		$$
		\Big|  \E\left( f( \E(X_j \mid X_{S_j}) ) - f( \widehat{\E}(X_j \mid X_{S_j}) ) \right)  \Big| < \epsilon_1. 
		$$
		
		
		By the mean-value theorem, for some $v \in [0, 1]$,
		\begin{align*}
		&f \big( \widehat{\E}(X_j \mid X_{S_j} )  \big) - f \big( \E( X_j \mid X_{S_j} ) \big) \\
		&\quad  = f'\big(v \widehat{\E}(X_j \mid X_{S_j} ) + (1-v) \E(X_j \mid X_{S_j} ) \big) (\widehat{\E}(X_j \mid X_{S_j} ) -   \E( X_j \mid X_{S_j} ))\\
		&\quad = 2\big(v \widehat{\E}(X_j \mid X_{S_j} ) + (1-v) \E(X_j \mid X_{S_j} ) + 1/2 \big)
		(\widehat{\E}(X_j \mid X_{S_j} ) -   \E( X_j \mid X_{S_j} )).
		\end{align*}
		
		Therefore, 
		\begin{align*}
		&\E( f \big( \widehat{\E}(X_j \mid X_{S_j} )  \big) - f \big( \E( X_j \mid X_{S_j} ) \big) ) \\
		&\quad  = f'\big(v \widehat{\E}(X_j \mid X_{S_j} ) + (1-v) \E(X_j \mid X_{S_j} ) \big) (\widehat{\E}(X_j \mid X_{S_j} ) -   \E( X_j \mid X_{S_j} ))\\
		&\quad = 2\big(v \widehat{\E}(X_j \mid X_{S_j} ) + (1-v) \E(X_j \mid X_{S_j} ) +1/2 \big)
		(\widehat{\E}(X_j \mid X_{S_j} ) -   \E( X_j \mid X_{S_j} ))\\
		&\quad \leq \max \big| 2\big(v \widehat{\E}(X_j \mid X_{S_j} ) + (1-v) \E(X_j \mid X_{S_j} ) +1/2 \big) \big| \cdot  \E\left( \widehat{\E}(X_j \mid X_{S_j} ) -   \E( X_j \mid X_{S_j} ) \right) \\ &\quad = 0
		\end{align*}
		
		In the same manner, $\E( f \big( \E( X_j \mid X_{S_j} ) \big) - f \big( \widehat{\E}(X_j \mid X_{S_j} )  \big) ) \leq 0$. This completes the proof. 
		
	\end{proof}

\end{itemize}

\end{document}